\documentclass{article}
\usepackage[accepted]{icml2025}
\usepackage{graphicx,subcaption}
\usepackage{microtype}
\usepackage{hyperref}

\usepackage{algorithm}
\usepackage{algorithmic}
\usepackage{afterpage}

\usepackage{amsmath,bm,esint}
\usepackage{amssymb}
\usepackage{mathtools}
\usepackage{amsthm}
\usepackage[capitalize,noabbrev]{cleveref}

\usepackage{comment}

\usepackage{tikz}
\AtBeginDocument{%
  }

\usepackage[font=small,labelfont=bf]{caption}
\usepackage[clock]{ifsym}

\usepackage{blkarray, bigstrut}
\usepackage{xparse}
\usepackage{rotating}
\usepackage{multirow}
\usepackage{tabularx}
\usepackage{tablefootnote}

\usepackage{booktabs}

\usepackage{threeparttable}
\usepackage{pifont}

\usepackage{array}
\usepackage{colortbl}
\usepackage{wrapfig}

\usepackage{lstautogobble}

\definecolor{light-gray}{gray}{0.95}
\definecolor{tbl-row-color}{gray}{0.95}
\definecolor{pastelred}{rgb}{1.0, 0.41, 0.38}
\definecolor{radicalred}{rgb}{1.0, 0.21, 0.37}
\usepackage{listings}

\definecolor{keywordsColor}{RGB}{134, 14, 11}
\definecolor{commentsColor}{RGB}{54, 54, 54}

\newcommand{\interior}[1]{%
  {\kern0pt#1}^{\mathrm{o}}%
}

\DeclareMathOperator{\sign}{sign}


\newcommand{\pr}{\mathbb{P}}

\newcommand{\vecc}{\operatorname{vec}}

\newcommand{\real}{{\mathbb R}}
\newcommand{\nat}{{\mathbb N}}


\newcommand{\X}{{\mathbf X}}

\newcommand{\y}{{\mathbf y}}
\newcommand{\pY}{{\widetilde{\mathbf Y}}}

\newcommand{\V}{{\mathbf V}}

\newcommand{\W}{{\mathbf W}}

\newcommand{\cb}{{\mathbf c}}

\newcommand{\x}{{\mathbf x}}

\newcommand{\vb}{{\mathbf v}}



\newcommand{\bb}{{\mathbf b}}

\newcommand{\Thetabf}{{\bm \Theta}}

\theoremstyle{plain}
\newtheorem{theorem}{Theorem}[section]

\newtheorem{lemma}[theorem]{Lemma}

\theoremstyle{definition}
\newtheorem{definition}[theorem]{Definition}

\theoremstyle{remark}
\newtheorem{remark}[theorem]{Remark}
\newtheorem{example}[theorem]{Example}




\icmltitlerunning{Exact Upper and Lower Bounds for the Output Distribution of NNs with Random Inputs}
\begin{document}

\twocolumn[
\icmltitle{Exact Upper and Lower Bounds for the Output Distribution \\of Neural Networks with Random Inputs}

\icmlsetsymbol{equal}{*}
\begin{icmlauthorlist}
\icmlauthor{Andrey Kofnov}{math}
\icmlauthor{Daniel Kapla}{math}
\icmlauthor{Ezio Bartocci}{comp}
\icmlauthor{Efstathia Bura}{math}
\end{icmlauthorlist}
\icmlaffiliation{math}{Faculty of Mathematics and Geoinformation, TU Wien, Vienna, Austria}
\icmlaffiliation{comp}{Faculty of Informatics, TU Wien, Vienna, Austria}
\icmlcorrespondingauthor{Andrey Kofnov}{andrey.kofnov@tuwien.ac.at}
\icmlkeywords{Neural Networks, Uncertainty Propagation, Predictive Output Distribution, Guaranteed Bounds}
\vskip 0.3in
]
\printAffiliationsAndNotice{}

\begin{abstract} 
We derive exact upper and lower bounds for the cumulative distribution function (cdf)  of the output of a neural network (NN) over its entire support subject to noisy (stochastic) inputs. 
The upper and lower bounds converge to the true cdf over its domain as the resolution increases. 
Our method applies to any feedforward NN using continuous monotonic piecewise twice continuously differentiable activation functions (e.g.,  ReLU, tanh and softmax) and convolutional NNs, which were beyond the scope of competing approaches. The novelty and instrumental tool of our approach is to bound general NNs with ReLU NNs. The ReLU NN-based bounds are then used to derive the upper and lower bounds of the cdf of the NN output. 
Experiments demonstrate that our method delivers guaranteed bounds of the predictive output distribution over its support, thus providing exact error guarantees, in contrast to competing approaches. 

\end{abstract}

\section{Introduction}

Increased computational power, availability of large datasets, and the rapid development of new NN architectures contribute to the ongoing success of NN based learning in image recognition, natural language processing, speech recognition, robotics, strategic games, etc.
A limitation of NN machine learning (ML) approaches is that they lack a built-in mechanism to assess the uncertainty or trustworthiness of their predictions, especially on unseen or out-of-distribution data. A NN is a model of the form: 
\begin{align}\label{eq:reg-model} 
Y&=f(\X, \Thetabf),
\end{align}
where $Y$ is the output and $\X$ the input (typically multivariate), 
and $f$ is a \textit{known} function modeling the relationship between $\X$ and $Y$ parametrized by $\Thetabf$. Model \eqref{eq:reg-model} incorporates uncertainty neither in $Y$ nor in $\X$ and NN fitting is a numerical algorithm for minimizing a loss function. 
Lack of uncertainty quantification, such as assessment mechanisms for prediction accuracy beyond the training data, prevents neural networks, despite their potential, from being deployed in safety-critical applications ranging from medical diagnostic systems (e.g., \cite{HafizBhat2020}) to cyber-physical systems such as autonomous vehicles, robots or drones (e.g., \cite{Yurtseveretal2020}). Also, the deterministic nature of NNs renders them highly vulnerable to not only adversarial noise but also to even small perturbations in inputs  (\cite{Bibietal2018,Fawzietal2018, Goodfellowetal2014,Goodfellowetal2015,Hosseinietal2017}). 

Uncertainty in modeling is typically classified as \textit{epistemic} or \textit{systematic}, which derives from lack of knowledge of the model, and \textit{aleatoric} or \textit{statistical},  which reflects the inherent randomness in
the underlying process being modeled (see, e.g., \cite{HuellermeierWaegeman2021}). The \textit{universal approximation theorem} (UAT) \cite{Cybenko1989,Horniketal1989} states that a NN with one hidden layer can approximate any continuous function for inputs within a specific range by increasing the number of neurons. In the context of NNs, epistemic uncertainty is of secondary importance to aleatoric uncertainty. Herein, we focus on studying the effect of random inputs on the output distribution of NNs and derive uniform upper and lower bounds for the cdf of the outputs of a NN subject to noisy (stochastic) input data. 

We evaluate our proposed framework on four benchmark  datasets (Iris \cite{Fisher_1936}, Wine \cite{wine_109}, Diabetes \cite{Efronetal_2004}, and Banana \cite{banana_dataset}),
and demonstrate the efficacy of our approach to bound the cdf of the NN output subject to Gaussian and Gaussian mixture inputs.  
We demonstrate that our bounds cover the true underlying cdf over its entire support. In contrast, the similar but approximate approach of \citet{Krapfetal2024}, as well as high-sample Monte-Carlo simulations, produce estimates outside the bounds over areas of the output range where the bounds are tight. 

\section{Statement of the Problem}

A NN is a mathematical model that produces outputs from inputs.  The input is typically a vector of predictor variables, $\X \in \real^{n_0}$, and the output $Y$, is univariate or multivariate, continuous or categorical.  


A \emph{feedforward NN} with $L$ layers from $\real^{n_{0}} \to \real^{n_L}$ is a composition of $L$ functions,
\begin{align}\label{MLP}
	     f_L(\x \mid \Thetabf) &= f^{(L)}\circ f^{(L-1)} \circ ... \circ f^{(1)} (\x),
\end{align}
where the $l$-th layer is given by
\begin{displaymath}\label{eq:layer}
		f^{(l)}(\x \mid \W^{(l)}, \bb^{(l)}) = \sigma^{(l)}( \W^{(l)}\x + \bb^{(l)}),
\end{displaymath}
with weights $\W^{(l)}\in\mathbb{R}^{n_{l}\times n_{l-1}}$, bias terms $\bb^{(l)}\in\mathbb{R}^{n_l}$, and a non-constant, continuous 
activation function $\sigma^{(l)}:\real \to \real$ that is applied component-wise. The NN parameters are collected in  $\Thetabf = (\text{vec}(\W_1)$, $\bb_1,\ldots$, $\text{vec}(\W_L)$, $\bb_L) \in \real^{\sum_{l=1}^L (n_{l-1}\cdot n_l +n_l)}$. \footnote{The operation $\text{vec}:\real^{n_{l-1} \times n_l} \to \real^{n_{l-1}\cdot n_l}$ stacks the columns of a matrix one after another.} 
The first layer that receives the input $\x$ is called the \emph{input layer}, and the last layer is the  \emph{output layer}. All other layers are called \emph{hidden}. 
For categorical outputs, 
the class label is assigned by a final application of a decision function, such as $\mathrm{arg\,max}$. 

Despite not being typically acknowledged, 
the training data in NNs are drawn from larger populations, and hence they contain only limited information about the corresponding population. We incorporate the uncertainty associated with the observed data assuming that they are random draws from an unknown distribution of bounded support. That is, the data are comprised of $m$ draws from the joint distribution of $(\X,Y)$, and the network is trained on observed $(\x_i,y_i)$, $\x_i=(x_{i1}, x_{i2}, \ldots, x_{i n_0})$, and $y_i$, $i=1,\ldots, $$m$.\footnote{We use the convention of denoting random quantities with capital letters and their realizations (observed) by lowercase letters.} 
A NN with $L$ layers and $n_{l}$ neurons at each layer, $l=1,\ldots, L$,  is trained on the observed $(\x_i,y_i)$, $i=1,\ldots, m$, to produce $m$ outputs $\tilde{y}_i$, and the vector of the NN parameters, $\Thetabf = \left(\vecc(\W_1), \bb_1,\ldots, \vecc(\W_{L}), \bb_{L}\right)$, is obtained. 
$\Thetabf$ uniquely identifies the trained NN. Given $\Thetabf$, we aim to quantify the robustness of the corresponding NN, to perturbations in the input variables. For this, we let 
\begin{align}\label{eq:X_distr}
    \X \sim F_{\X},
\end{align}
where $\X \in \real^{n_{0}}$ stands for the randomly perturbed input variables with cdf $F_{\X}$ and probability density function (pdf) $\phi(\x)$ that is piecewise continuous and bounded on a compact support. 
We study the \textit{propagation of uncertainty} (effect of the random perturbation) in the NN by deriving upper and lower bounds of the cdf $F_{\pY}(y)=\pr(\pY \le y)$ of the \textit{random} output, $\pY=f_L(\X\mid \Thetabf)$.\footnote{The notation $f_L(\X\mid \Thetabf)$ signifies that $\Thetabf$, equivalently the NN, is fixed and only $\X$ varies.}

\paragraph{Our contributions:}
\begin{enumerate} 
\item We develop a method to compute the exact cdf of the output of ReLU NNs with random input pdf, which is a piecewise polynomial over a compact hyperrectangle. 
This result, which can be viewed as a stochastic analog to the Stone-Weierstrass theorem,\footnote{A significant corollary to the Stone-Weierstrass theorem is that any continuous function defined on a compact set can be uniformly approximated as closely as desired by a polynomial.\label{foot:Stone-Weierstrass}}  significantly contributes to the characterization of the distribution of the output of NNs with piecewise-linear activation functions under any input continuous pdf.
\item We derive \textit{guaranteed} upper and lower bounds of the NN output distribution resulting from random input perturbations on a fixed support. This provides \textit{exact} upper and lower bounds for the output cdf provided the input values fall within the specified support. No prior knowledge about the true output cdf is required to guarantee the validity of our bounds. 
\item We show the convergence of our bounds to the true cdf; that is, our bounds can be refined to arbitrary accuracy.

\item We provide a constructive proof that any feedforward NN with continuous monotonic \textit{piecewise twice continuously differentiable}\footnote{A wide class of the most common continuous activation functions, including ReLU, $\tanh$ and logistic function.} 
activation functions can be approximated from above and from below by a fully connected ReLU network, achieving any desired level of accuracy. Moreover, we enable the incorporation of multivariate operations such as $\max$, $\mathrm{product}$ and $\mathrm{softmax}$, as well as some non-monotonic functions such as $|x|$ and $x^{n}, n \in \nat$.
\item We prove a new \textit{universal \textbf{distribution} approximation theorem} (UDAT), which states that we can estimate the cdf of the output of any continuous function of a random variable (or vector) that has a continuous distribution supported on a compact hyperrectangle, achieving any desired level of accuracy.

\end{enumerate}

\section{Our Approximation Approach}

We aim to estimate the cdf $F_{\pY}(y)$ of the output $\pY=f_L(\X \mid \Thetabf)$ of the NN in \eqref{MLP} under \eqref{eq:X_distr}; i.e., subject to random perturbations of the input $\X$. 
We do so by computing upper and lower bounds of $F_{\pY}$; that is, we compute $\overline{F}_{\pY}$, $\underline{F}_{\pY}$ such that 
\begin{align}
    \underline{F}_{\pY}(y)\le F_{\pY}(y) \le \overline{F}_{\pY}(y), \, \forall y
\end{align}
We refer to the NN in \eqref{MLP} as \textit{prediction NN} when needed for clarity. We estimate the functions $\overline{F}_{\pY}$, $\underline{F}_{\pY}$ on a ``superset'' of the output domain of the prediction NN \eqref{MLP} via an integration procedure. The cdf of $\pY$ is given by
\begin{align}\label{cdf_expr}
    F_{\pY}(y)&=\pr(\pY \le y)=\int_{\{\pY \le y\}} \phi(\x) d\x,
\end{align}
where $\phi(\x)$ is the pdf of $\X$.
To bound $F_{\pY}$, we bound $\phi$ by its upper $\overline{\phi}$ and lower $\underline{\phi}$ estimates on the bounded support of $\phi$ as described in Section \ref{sec:cdfapprox}. If $\phi$ is a piecewise polynomial, then \eqref{cdf_expr} can be computed exactly for a ReLU prediction network, as we show in Section \ref{sec:cdf_relu_polynomial}. Once  $\overline{\phi}$ and $\underline{\phi}$ are estimated, then
\begin{align}
\underline{F}_{\pY}(y)= &\int_{\{\pY \le y\}}  \underline{\phi}(\x) dx \le F_{\pY }(y) \le \\&\int_{\{\pY  \le y\}}  \overline{\phi}(\x) dx=\overline{F}_{\pY} (y)  \notag
\end{align}

\begin{remark}
$\underline{F}_{\pY }(y)$ and $\overline{F}_{\pY }(y)$ are not always true cdfs since we allow the lower estimator not to achieve 1, while the upper bound is allowed to take the smallest value greater than 0.
\end{remark}


\subsection{Exact cdf evaluation for a fully connected NN with ReLU activation function}\label{sec:cdf_relu_polynomial}

\begin{figure}
    \centering
\includegraphics[scale=0.17]{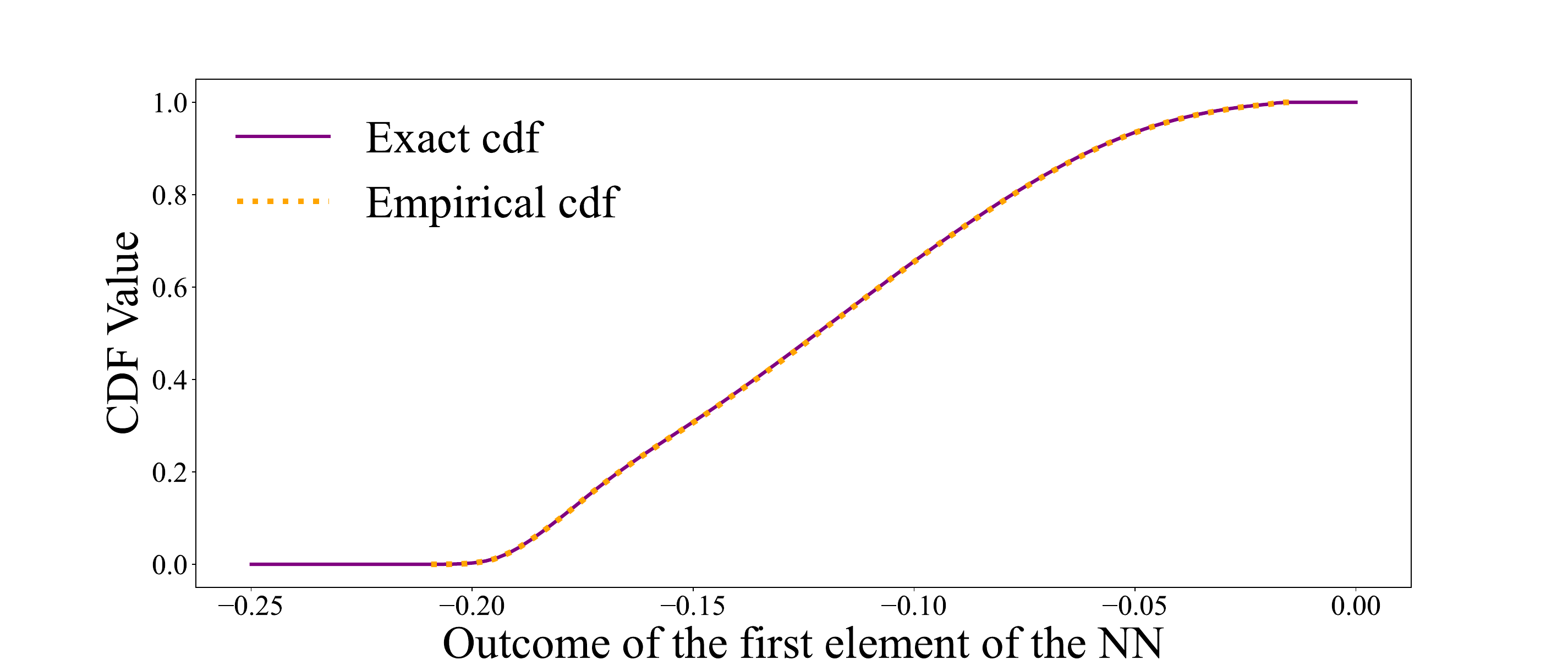} 
    \caption{Exact cdf of the ReLU NN outcome for class \textit{Setosa} in the Iris data, assuming Beta-distributed inputs.}
    \label{fig:Iris_cdf}
\end{figure}

\begin{definition}[Almost disjoint sets]
    We say that sets $\mathcal{A}$ and $\mathcal{B}$ are almost disjoint with respect to measure $\alpha$, if $\alpha(\mathcal{A} \cap \mathcal{B}) = 0$.
\end{definition}

\begin{definition}[Closed halfspace] A $n_{0}$-dimensional closed halfspace is a set $H = \{ \x \in \real^{n_{0}} | \vb^{T}\x \leq c\}$ for $c \in \real$ and some $\vb \in \real^{n_{0}}$ called the normal of the halfspace.
\end{definition}

It is known that a convex polytope can be represented as an intersection of halfspaces, called $\mathcal{H}$-representation~\cite{Ziegler_1995}.

\begin{definition}[$\mathcal{H}$-polytope] A $n_{0}$-dimensional $\mathcal{H}$-polytope $\mathcal{P} = \bigcap\limits_{j = 1}^{h} H_{i}$ is the intersection of finitely many closed halfspaces.
\end{definition}

\begin{definition}[Simplex]  A $n_{0}$-dimensional simplex is a $n_{0}$-dimensional polytope with $n_{0}+1$ vertices.
\end{definition}

\begin{definition}[Piecewise polynomial]

    A function $p: K \rightarrow \real^{n_{L}}$ is a \emph{piecewise polynomial}, if there exists a finite set of $n_{0}$-simplices such that $K = \bigcup\limits_{i = 1}^{q}k_{i}$ and the function $p$ constrained to the interior $\interior{k}_{i}$ of $k_{i}$ is a polynomial; that is, $p\big|_{\interior{k}_{i}} : \interior{k}_{i} \rightarrow \real^{n_{L}}$ is a polynomial for all $i = 1,\ldots,q$.
\end{definition}
\begin{remark}
We do not require piecewise polynomials to be continuous everywhere on the hyperrectangle. Specifically, we allow discontinuities at the borders of simplices. However, the existence of left and right limits of the function at every point on the bounded support is guaranteed by properties of polynomials.
\end{remark}

\cite{Raghuetal_2017} showed that  ReLU deep networks divide the input domain into activation patterns (see \cite{Sudjianto2020}) that are disjoint convex polytopes $\{\mathcal{P}_{j}\}$ over which the output function is locally represented as the affine transformation $f_L(\x) =  NN^{j}(\x) = \mathbf{c}^{j} + \V^{j}\x $ for $\x \in \{\mathcal{P}_{j}\}$, the number of which grows at the order $\mathcal{O}((\max\{n_{l}\}_{l=1,\ldots,L})^{n_{0} L})$. \cite{Sudjianto2020} outline an algorithm for extracting the full set of polytopes and determining local affine transformations, including the coefficients   $\mathbf{c}^{j}, \V^{j}$ for all $\{\mathcal{P}_{j}\}$, by propagating through the layers. For our computations, we utilize a recent GPU-accelerated algorithm from \cite{Berzins_2023}.

We aim to derive a superset of the range of the network output. For this, we exploit the technique of Interval Bound Propagation (IBP), based on ideas from \cite{Gowaletal_2019, Wangetal_2022,Gehretal2018}. Propagating the $n_{0}$-dimensional box through the network leads to the superset of the range of the network output. We compute the cdf of the network's output at each point of a grid of the superset of the output range.

\begin{theorem}[Exact cdf of ReLU NN w.r.t. piecewise polynomial pdf]\label{thm::exact_cdf}

Let $\pY: \real^{n_{0}} \rightarrow \real^{n_{L}}$ 
be a feedforward ReLU NN, which splits the input space into a set of almost disjoint polytopes $\{\mathcal{P}_{j}\}_{j=1}^{q_{Y}}$ with local affine transformations $\pY(\x) = NN^{j}(\x)$ for $\x \in \mathcal{P}_{j}$. Let $\phi(\x)$ denote the pdf of the random vector $\X$ that is a piecewise polynomial with local polynomials, $\phi(\x) = \phi_{i}(\x)$ for all $\x \in \interior{k}_{i}$ over an almost disjoint set of simplices $\{k_{i}\}_{i=1}^{q_\phi}$, and a compact hyperrectangle support $K \subset \real^{n_{0}}$. Then, the cdf of $\pY$ at point $\y \in \real^{n_{L}}$ is
\begin{align*}
F_{\pY}(\y)
= \pr\left[\pY \leq \y\right]
&= \sum\limits_{i=1}^{q_{\phi}}\sum\limits_{j=1}^{q_{Y}}\mathcal{I}\left[\phi_{i}(\x) ;\mathcal{P}^{r}_{j,i}\right]\ \\
&=\sum\limits_{i=1}^{q_{\phi}}\sum\limits_{j=1}^{q_{Y}}\sum\limits_{s=1}^{S_{i,j}}\mathcal{I}\left[\phi_{i}(\x) ;\mathcal{T}_{i,j,s}\right],
\end{align*}
where $\mathcal{I}\left[\phi_{i}(\x) ;\mathcal{T}_{i,j,s}\right]$ is the integral of the polynomial $\phi_{i}(\x)$ over the simplex $\mathcal{T}_{i,j,s}$ such that the reduced polytope
\begin{align}\label{eq:reduced_polytope}
\mathcal{P}^{r}_{j,i} = \mathcal{P}_{j} \cap k_{i} \cap \{\x: NN^{j}(\x) \leq \y\} = \bigcup\limits_{s=1}^{S_{i,j}}\mathcal{T}_{i,j,s}
\end{align}
is defined by the intersection of polytopes $\mathcal{P}_{j}$ and $k_{i}$, and the intersection of halfspaces 
\begin{align*}
\{\x: NN^{j}(\x) \leq \y\} = \bigcap\limits_{t = 1}^{n_{L}} \left\{\x: NN^{j}_{t}(\x) \leq y_{t}\right\}.
\end{align*}
\end{theorem}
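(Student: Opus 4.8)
The plan is to rewrite the probability $\pr[\pY \le \y]$ as an integral of $\phi$ over the input-space preimage of the event and then decompose that preimage into cells on which both the integrand and the network are ``simple.'' First I would write
\[
F_{\pY}(\y) = \int_{\{\x:\, \pY(\x)\, \le\, \y\}} \phi(\x)\, d\x .
\]
Since $\phi$ is supported on the compact hyperrectangle $K = \bigcup_{i} k_{i}$ and equals the local polynomial $\phi_{i}$ on each interior $\interior{k}_{i}$, and since the simplices $\{k_{i}\}$ are almost disjoint, this integral splits as a sum over $i$ with integrand $\phi_{i}$ and domain intersected with $k_{i}$; the pairwise overlaps carry zero Lebesgue measure and hence contribute nothing.

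Second, I would invoke the activation-pattern decomposition. Because the $\{\mathcal{P}_{j}\}$ are almost-disjoint polytopes covering $\real^{n_{0}}$ on which $\pY$ coincides with the affine map $NN^{j}$, intersecting each $k_{i}$-piece with the $\mathcal{P}_{j}$ refines the domain into cells on which $\pY(\x) = NN^{j}(\x)$. On such a cell the event $\pY(\x) \le \y$ becomes $NN^{j}(\x) \le \y$, i.e., componentwise $NN^{j}_{t}(\x) \le y_{t}$ for $t = 1,\dots,n_{L}$. Each constraint is affine in $\x$, so it defines a closed halfspace, and their intersection is again a convex polytope. Collecting these facts, the relevant integration cell is exactly the reduced polytope $\mathcal{P}^{r}_{j,i} = \mathcal{P}_{j} \cap k_{i} \cap \{\x : NN^{j}(\x) \le \y\}$, and the integral becomes the double sum $\sum_{i}\sum_{j} \mathcal{I}[\phi_{i};\mathcal{P}^{r}_{j,i}]$.

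Third, to pass to the triple-sum form and to make each term computable, I would triangulate every convex polytope $\mathcal{P}^{r}_{j,i}$ into almost-disjoint simplices $\mathcal{T}_{i,j,s}$, $s=1,\dots,S_{i,j}$, which is always possible for a bounded convex polytope. Additivity of the integral over this almost-disjoint subdivision then yields $\mathcal{I}[\phi_{i};\mathcal{P}^{r}_{j,i}] = \sum_{s} \mathcal{I}[\phi_{i};\mathcal{T}_{i,j,s}]$, and each summand is the integral of a polynomial over a simplex, for which a closed-form evaluation exists.

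The main obstacle I anticipate is the measure-theoretic bookkeeping: I must verify that the two nested ``almost-disjoint'' decompositions (over the $k_{i}$ and over the $\mathcal{P}_{j}$) genuinely tile the preimage up to a set of measure zero, so that no region is double-counted or omitted; this relies on the boundedness of $\phi$ and on the discontinuities being confined to simplex boundaries. A secondary point requiring care is confirming that $\mathcal{P}^{r}_{j,i}$, being an intersection of polytopes with finitely many halfspaces, is a bounded convex polytope -- possibly empty or lower-dimensional, in which case its contribution vanishes -- and therefore admits a finite simplicial subdivision.
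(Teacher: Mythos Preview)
Your proposal is correct and follows essentially the same route as the paper: express the cdf as an integral of $\phi$ over the preimage, decompose that preimage using the almost-disjoint simplices $\{k_i\}$ and activation-pattern polytopes $\{\mathcal{P}_j\}$, rewrite the event as an intersection of affine halfspaces to obtain the reduced polytopes $\mathcal{P}^r_{j,i}$, and then triangulate each into simplices so that every term is a polynomial integral over a simplex. The paper's proof is terser and names the specific tools (Delaunay triangulation for the subdivision, Lasserre's algorithm for the simplex integrals), whereas you are more explicit about the measure-zero bookkeeping and the handling of empty or lower-dimensional cells; but the argument is the same.
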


Theorem \ref{thm::exact_cdf} is shown in Appendix \ref{app:thm_proof_Relu_cdf}. The proof relies on the algorithm for evaluating the integral of a polynomial over a simplex, as described in \cite{Lasserre2021}. The right-hand side of (\ref{eq:reduced_polytope}) results from the Delaunay triangulation \cite{Delaunay_1934}, dividing the reduced polytope $\mathcal{P}^{r}_{j,i}$ into 
$S_{i,j}$ almost disjoint simplices $\{\mathcal{T}_{i,j,s}\}$. 

\begin{remark}
Theorem \ref{thm::exact_cdf} is a tool for approximating the output cdf of any feedforward NN with piecewise linear activation functions on a compact domain, given random inputs of arbitrary continuous distribution at any desired degree of accuracy.${}^{\ref{foot:Stone-Weierstrass}}$
\end{remark}

\begin{example}\label{exmp:Iris_cdf}
We compute the output cdf of a 3-layer, 12-neuron  fully connected ReLU NN with the last (before $\mathrm{softmax}$) linear 3-neuron layer trained on the Iris dataset~\cite{Fisher_1936}. The Iris dataset consists of 150 samples of iris flowers from three different species: Setosa, Versicolor, and Virginica. Each sample includes four features: Sepal Length, Sepal Width, Petal length, and Petal width. We focus on two features, \textit{Sepal Length} and \textit{Sepal Width} (scaled to $[0,1]$), to classify flowers into three classes. Specifically, we recover the distribution of the first component (class \textit{Setosa}) before applying the $\mathrm{softmax}$ function, assuming Beta-distributed inputs with parameters $(2,2)$ and $(3,2)$. The exact cdf is plotted in purple in Figure~\ref{fig:Iris_cdf}, with additional details provided in Appendix~\ref{app:iris}. The agreement with the empirical cdf is almost perfect.
\end{example}

\subsection{Algorithm for Upper and Lower Approximation of the Neural Network using ReLU activation functions.}\label{sec:RelU_approx_algorithm}

We start by showing a general result in Theorem~\ref{thm::relu_approx}.

\begin{theorem}\label{thm::relu_approx}
Let $\widetilde{Y}$ be a feedforward NN with $L$  layers of arbitrary width with continuous activation functions. There exist sequences of fully connected ReLU NNs $\{\overline{Y}_{n}\}$, $\{\underline{Y}_{n}\}$, which are monotonically decreasing and increasing, respectively, such that for any $\epsilon >0$ and any compact hyperrectangle $K \subset \real^{n_{0}}$, there exists  $N \in \nat$ such that for all $n \geq N $
\begin{align*}
0 \leq  \widetilde{Y} (\x)  - \underline{Y}_{n}(\x) < \epsilon, \quad
0 \leq  \overline{Y}_{n}(\x) - \widetilde{Y} (\x)  < \epsilon
\end{align*}
for all $\x \in K$.
\end{theorem}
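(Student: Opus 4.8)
The plan is to reduce the problem to the uniform approximation of each scalar activation by monotone piecewise-linear (PL) envelopes, and then to propagate these envelopes through the network with an interval-bound-propagation (IBP) style recursion, using that every continuous PL function of one variable is exactly a ReLU network and that ReLU networks are closed under composition and under affine combination of their outputs. Throughout I assume, as in the contributions, that each $\sigma^{(l)}$ is monotone (WLOG increasing; a decreasing activation is the reflection of an increasing one, and the sign can be absorbed into $\W^{(l+1)}$). \emph{Step 1 (single-activation envelopes).} Fix a compact interval $D$. Since $\sigma^{(l)}$ is continuous on the compact $D$, it is uniformly continuous, so for a grid of mesh tending to $0$ I can build PL functions $\underline{\sigma}^{(l)}_n \le \sigma^{(l)} \le \overline{\sigma}^{(l)}_n$ on $D$ with $\|\overline{\sigma}^{(l)}_n - \sigma^{(l)}\|_\infty,\ \|\sigma^{(l)} - \underline{\sigma}^{(l)}_n\|_\infty \le \eta_n \to 0$; concretely the upper envelope interpolates the per-cell maxima of $\sigma^{(l)}$ (the lower envelope the per-cell minima), which guarantees domination and, since $\sigma^{(l)}$ is increasing, gives monotone increasing PL envelopes. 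Using nested (dyadic) grids makes the sequence $\{\overline{\sigma}^{(l)}_n\}$ pointwise non-increasing and $\{\underline{\sigma}^{(l)}_n\}$ pointwise non-decreasing in $n$. Each envelope, being a continuous PL function of one variable, equals a one-hidden-layer ReLU network.

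\emph{Step 2 (layerwise propagation and sandwiching).} Let $h^{(l)}(\x)$ denote the true post-activation of layer $l$, with $h^{(0)}(\x)=\x$. Define recursively ReLU functions $\underline{h}^{(l)}_n \le h^{(l)} \le \overline{h}^{(l)}_n$, with $\overline{h}^{(0)}_n=\underline{h}^{(0)}_n=\x$. Writing $(\W)^+=\max(\W,0)$ and $(\W)^-=\max(-\W,0)$ componentwise, set the pre-activation bounds $\overline{z}^{(l)}_n = (\W^{(l)})^+ \overline{h}^{(l-1)}_n - (\W^{(l)})^- \underline{h}^{(l-1)}_n + \bb^{(l)}$ and symmetrically $\underline{z}^{(l)}_n$, then $\overline{h}^{(l)}_n = \overline{\sigma}^{(l)}_n(\overline{z}^{(l)}_n)$ and $\underline{h}^{(l)}_n = \underline{\sigma}^{(l)}_n(\underline{z}^{(l)}_n)$ componentwise. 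Signed-weight bookkeeping gives $\underline{z}^{(l)}_n(\x) \le z^{(l)}(\x) \le \overline{z}^{(l)}_n(\x)$ pointwise, and the monotonicity of $\sigma^{(l)}$ together with the envelope inequalities yields $\underline{h}^{(l)}_n \le h^{(l)} \le \overline{h}^{(l)}_n$; induction over $l$ gives the sandwiching at the output, $\underline{Y}_n := \underline{h}^{(L)}_n \le \widetilde{Y} \le \overline{h}^{(L)}_n =: \overline{Y}_n$. Both are ReLU networks, being built from affine maps, the ReLU-representable envelopes, and their compositions and linear combinations. The same bookkeeping (positive parts multiply the decreasing upper bounds, negative parts the increasing lower bounds, and the monotone $\overline{\sigma}^{(l)}_n$ are themselves non-increasing in $n$) shows that $\{\overline{Y}_n\}$ is non-increasing and $\{\underline{Y}_n\}$ non-decreasing.

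\emph{Step 3 (uniform convergence of the gap).} Propagating the box $K$ by IBP gives compact intervals $D_l$ containing the ranges of $z^{(l)}(\x)$ over $K$; I build the envelopes on a fixed, slightly enlarged $D_l$ and verify inductively that, for $n$ large, $\overline{z}^{(l)}_n(\x),\underline{z}^{(l)}_n(\x)$ remain in $D_l$ (they converge to $z^{(l)}(\x)$, which lies in the interior). Let $g_l(\x)=\max_i\bigl(\overline{h}^{(l)}_{n,i}(\x)-\underline{h}^{(l)}_{n,i}(\x)\bigr)$. The pre-activation gap satisfies $\overline{z}^{(l)}_{n,i}-\underline{z}^{(l)}_{n,i} \le \|(\W^{(l)})_i\|_1\, g_{l-1}$, and splitting $\overline{\sigma}^{(l)}_n(\overline{z})-\underline{\sigma}^{(l)}_n(\underline{z})$ into two envelope errors ($\le 2\eta_n$) plus $\sigma^{(l)}(\overline{z})-\sigma^{(l)}(\underline{z}) \le \omega_l(\overline{z}-\underline{z})$, with $\omega_l$ the modulus of continuity of $\sigma^{(l)}$ on $D_l$, yields the recursion $g_l \le 2\eta_n + \omega_l(c_l\, g_{l-1})$, where $g_0=0$ and $c_l=\max_i\|(\W^{(l)})_i\|_1$. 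Since $L$ is finite, $\eta_n\to 0$, and each $\omega_l(t)\to 0$ as $t\to 0$, a finite induction gives $\sup_{\x\in K} g_L(\x)\to 0$; choosing $N$ with $g_L<\epsilon$ for $n\ge N$ finishes the argument.

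The main obstacle is Step 3: making the error not merely bounded but uniformly vanishing through the composition. A naive IBP over the whole box $K$ is loose and would not contract, so it is essential to carry the bounds as \emph{functions} of $\x$ and to control the gap pointwise, which is exactly what lets the modulus-of-continuity recursion close. Two secondary technical points require care: ensuring the approximating pre-activations stay inside the compact domains $D_l$ on which the envelopes were constructed (so the next layer's approximation is valid), and arranging the PL envelopes so that the \emph{full} sequences $\{\overline{Y}_n\},\{\underline{Y}_n\}$, and not merely the per-activation envelopes, are genuinely monotone; both are resolved by using nested grids and by slightly enlarging the $D_l$ and invoking convergence to the interior.
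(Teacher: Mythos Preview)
Your argument is essentially correct, but it proves a \emph{different} (stronger-hypothesis) statement than the one in the theorem: you explicitly assume each $\sigma^{(l)}$ is monotone, whereas the theorem only requires continuity. Without monotonicity the sandwiching in Step~2 breaks down, since $\underline{z}\le z\le\overline{z}$ no longer implies $\sigma(\underline{z})\le\sigma(z)\le\sigma(\overline{z})$. So as a proof of the stated theorem your proposal has a genuine gap; as a proof of the monotone-activation case it is sound (modulo the somewhat informal construction of the PL envelopes in Step~1 and the monotonicity-in-$n$ of those envelopes, which deserve a line or two more).

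The paper's own proof takes a completely different, and much shorter, route that needs no monotonicity: since $\widetilde{Y}$ is continuous on the compact $K$, the UAT gives ReLU networks $Y_{\epsilon_n}$ with $\sup_K\|\widetilde{Y}-Y_{\epsilon_n}\|<\epsilon_n$ for a decreasing sequence $\epsilon_n\downarrow 0$; then $\underline{Y}'_n:=Y_{\epsilon_n}-\epsilon_n$ and $\overline{Y}'_n:=Y_{\epsilon_n}+\epsilon_n$ sandwich $\widetilde{Y}$, and monotonicity of the \emph{sequences} is enforced a posteriori by taking $\underline{Y}_n:=\max_{i\le n}\underline{Y}'_i$ and $\overline{Y}_n:=\min_{i\le n}\overline{Y}'_i$, both of which remain ReLU networks since $\min$ and $\max$ are ReLU-representable. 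What your approach buys is constructiveness and explicit error control layer by layer --- indeed, it is essentially the machinery the paper develops separately for its constructive Theorem~\ref{thm:relu_estimator_convergence} under the extra piecewise-$\mathcal{C}^2$ and monotonicity hypotheses. What the paper's proof buys is full generality (arbitrary continuous activations) in three lines, at the cost of being non-constructive.
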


The proof is presented in Appendix \ref{app:thm_proof_Relu_est}.

Theorem~\ref{thm::relu_approx}  cannot be directly applied in practice. We develop an approach to approximate the activation functions of a NN provided they are non-decreasing piecewise twice continuously differentiable. 

\begin{definition}
    Let $\Omega = \left[\underline{a},\overline{a}\right] \subset \real$ be a closed interval and $g: \Omega \rightarrow \real$ is well-defined, continuous on $\Omega$. We will say that $g$ is \textbf{piecewise twice continuously differentiable}  on $\Omega$, that is $g \in \mathcal{C}^{2}_{p.w.}(\Omega)$, if there exists a finite partition of $\Omega$ into closed subintervals $\bigcup_{i=1}^{n}\left[a_{i},a_{i+1}\right] = \Omega$ where $\underline{a} = a_{1} < a_{2} < \ldots < a_{n+1} = \overline{a}$ and $n \in \nat$, such that $g\big|_{\left[a_{i},a_{i+1}\right]}$  is twice continuously differentiable.
\end{definition}

Our  method provides a \textbf{constructive} proof for the restricted case of Theorem \ref{thm::relu_approx} for non-decreasing activation functions in \( \mathcal{C}^{2}_{p.w.}(\Omega_i) \) at each node \(i\) of a NN, where \( \Omega_{i} \) is the input domain of node \(i\), and applies to any fully connected or convolutional (CNN) feedforward NN with such activation functions analyzed on a hyperrectangle.  The key features of our approach are:
\begin{itemize}
\item 
\textbf{Local adaptability:} The algorithm adapts to the curvature of the activation function, providing an adaptive approximation scheme depending on whether the function is locally convex or concave.

\item \textbf{Streamlining:} By approximating the network with piecewise linear functions, the complexity of analyzing the network output is significantly reduced.

\end{itemize}

How it works:
\paragraph
{Input/Output range evaluation:}
    Using IBP \cite{Gowaletal_2019}, we compute supersets of the input and output ranges of the activation function for every neuron and every layer. 
    

    \paragraph{Segment Splitting:} First, input intervals are divided into macro-areas based on inflection points (different curvature areas) and 
    points of discontinuity in the first or second derivative (e.g., $0$ for ReLU). Next, these macro-areas are subdivided into intervals based on user-specified points or their predefined number within each range.
The algorithm utilizes knowledge about the behavior of the activation function and differentiates between concave and convex regions of the activation function, which impacts how the approximations are constructed and how to choose the points of segment splitting.  A user defines the number of splitting segments and the algorithm ensures the resulting disjoint sub-intervals are properly ordered and on each sub-interval the function is either concave or convex. If the function is linear in a given area, it remains unchanged, with the upper and lower approximations equal to the function itself.
\paragraph{Upper and Lower Approximations:} The method constructs tighter upper and lower bounds through the specific choice of points and subsequent linear interpolation. It calculates new points within each interval (one per interval) and uses them to refine the approximation, ensuring the linear segments closely follow the curvature of the activation function.

Our method differentiates between upper and lower approximations over concave and convex segments. For upper (lower) approximations on convex (concave) segments $\left[ a_{k}, a_{k+1}\right]$, we employ local linear interpolation by introducing an intermediate point $a_{k'}$ between the segment endpoints. That is, we let
\begin{align*}
    a_{k'} = a^{lin\_int}_{k'}, \hspace{0.1cm}a_{k'} &\in \left[ a_{k}, a_{k+1}\right]\\
    \widetilde{f}(x) = \widetilde{f}^{lin\_int}(x) \hspace{0.2cm} \text{for} \hspace{0.2cm}x &\in \left[ a_{k}, a_{k+1}\right]
\end{align*}
Conversely, for upper (lower) approximations on concave (convex) segments $\left[ a_{k}, a_{k+1}\right]$, we construct a piecewise tangent approximation by inserting a linking point $a_{k'}$ between the function tangent at the segment boundaries. That is, 
\begin{align*}
    a_{k'} = a^{pie\_tan}_{k'}, \hspace{0.1cm}a_{k'} &\in \left[ a_{k}, a_{k+1}\right]\\
    \widetilde{f}(x) = \widetilde{f}^{pie\_tan}(x) \hspace{0.2cm} \text{for} \hspace{0.2cm}x &\in \left[ a_{k}, a_{k+1}\right]
\end{align*}

The method guarantees that the piecewise linear approximation of the activation function for each neuron (a) is a non-decreasing function, and (b) 
the output domain remains the same. 

To see this, consider a layer neuron. For upper (lower) approximation on a convex (concave) segment, we choose a midpoint $a_{k'} = (a_{k} + a_{k+1}) / 2$ for each subinterval $\left[ a_{k}, a_{k+1}\right]$ and compute a linear interpolation, as follows:
\begin{flalign*}
&\kappa_{1} = \frac{f(a_{k'}) - f(a_{k})}{a_{k'} - a_{k}}, \hspace{0.5cm}\kappa_{2} = \frac{f(a_{k+1}) - f(a_{k'})}{a_{k+1} - a_{k'}},\\
&\widetilde{f}(\tau) = f(a_{k}) + (\tau - a_{k})\kappa_{1},  \hspace{0.45cm}\tau \in \left[ a_{k}, a_{k'}\right]\\
 &\widetilde{f}(\tau) = f(a_{k'}) + (\tau - a_{k'})\kappa_{2}, \hspace{0.3cm}\tau \in \left[ a_{k'}, a_{k+1}\right]
\end{flalign*}
For upper (lower) approximation on a concave (convex) segment, we compute derivatives and look for tangent lines at border points of the sub-interval $\left[ a_{k}, a_{k+1}\right]$. We choose a point $a_{k'}: a_{k} \leq  a_{k'} \leq a_{k+1}$ to be the intersection of the tangent lines. The 
original function is approximated by the following two tangent line segments: 
\begin{flalign*}
    &a_{k'} = \frac{f(a_{k}) - f(a_{k+1}) - (f_{+}'(a_{k})a_{k} - f_{-}'(a_{k+1})a_{k+1})}{f_{-}'(a_{k+1}) - f_{+}'(a_{k})}\\
    &\widetilde{f}(\tau) = 
        f(a_{k}) + f_{+}'(a_{k})(\tau - a_{k}),\hspace{1.15cm}\tau \in \left[ a_{k}, a_{k'}\right]
        \\
        &\widetilde{f}(\tau) = f(a_{k+1}) + f_{-}'(a_{k+1})(\tau - a_{k+1}),\hspace{0.10cm}\tau \in \left[ a_{k'}, a_{k+1}\right],
\end{flalign*}
where $f_{-}'(\cdot), f_{+}'(\cdot)$ are left and right derivatives, respectively.

For monotonically increasing functions, this procedure guarantees that the constructed approximators are exact upper and lower approximations. Moreover, decreasing the step size (increasing the number of segments) reduces the error at each point, meaning that the sequences of approximators for the activation functions at each node are monotonic: $\overline{f}_{n+1}(x) \leq \overline{f}_{n}(x), \underline{f}_{n+1}(x) \geq \underline{f}_{n}(x)$, for all $x$ in the IBP domain. This procedure of piecewise linear approximation of each activation function at each neuron is equivalent to forming a one-layer ReLU approximation network for the given neuron. 

By the UAT, for any continuous activation function $\sigma_{l,i}$ at each neuron and any positive $\epsilon_{l,i}$ we can always find a ReLU network-approximator $NN_{l, i}$, such that $|NN_{l, i}(x) - \sigma_{l,i}(x)| < \epsilon_{l,i}$ for all $x$ in the  input domain, defined by the IBP. To find such an approximating network we need to choose the corresponding number of splitting segments of the IBP input region. Uniform convergence is preserved by Dini's theorem, which states that a monotonic sequence of continuous functions that converges pointwise on a compact domain to a continuous function also converges uniformly. Moreover, the approximator always stays within the range of the limit function, ensuring that the domain in the next layer remains unchanged and preserves its uniform convergence. 
\begin{figure}[t]
    \begin{subfigure}[t]{\columnwidth}
        \centering
        \includegraphics[width=0.9\textwidth]{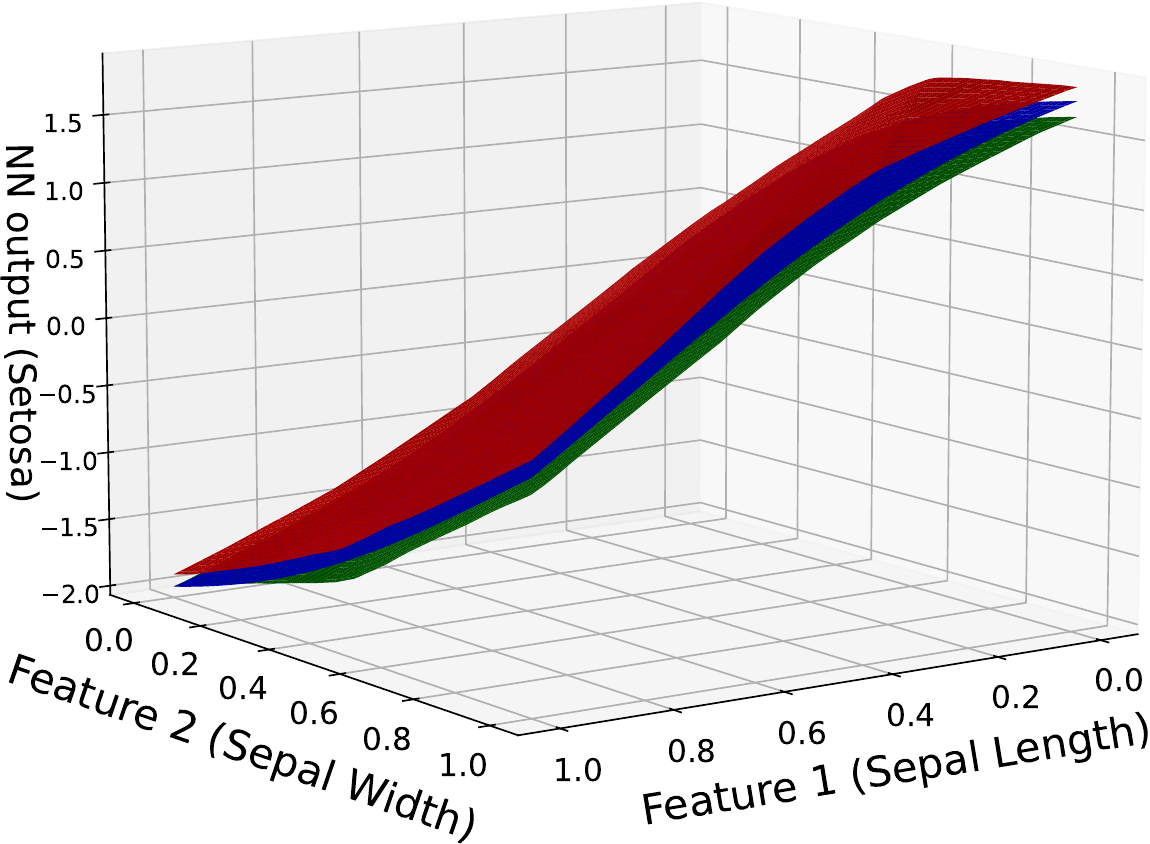}
    \end{subfigure}\\[1em]
    \begin{subfigure}[t]{\columnwidth}
        \centering
        \includegraphics[width=0.9\textwidth]{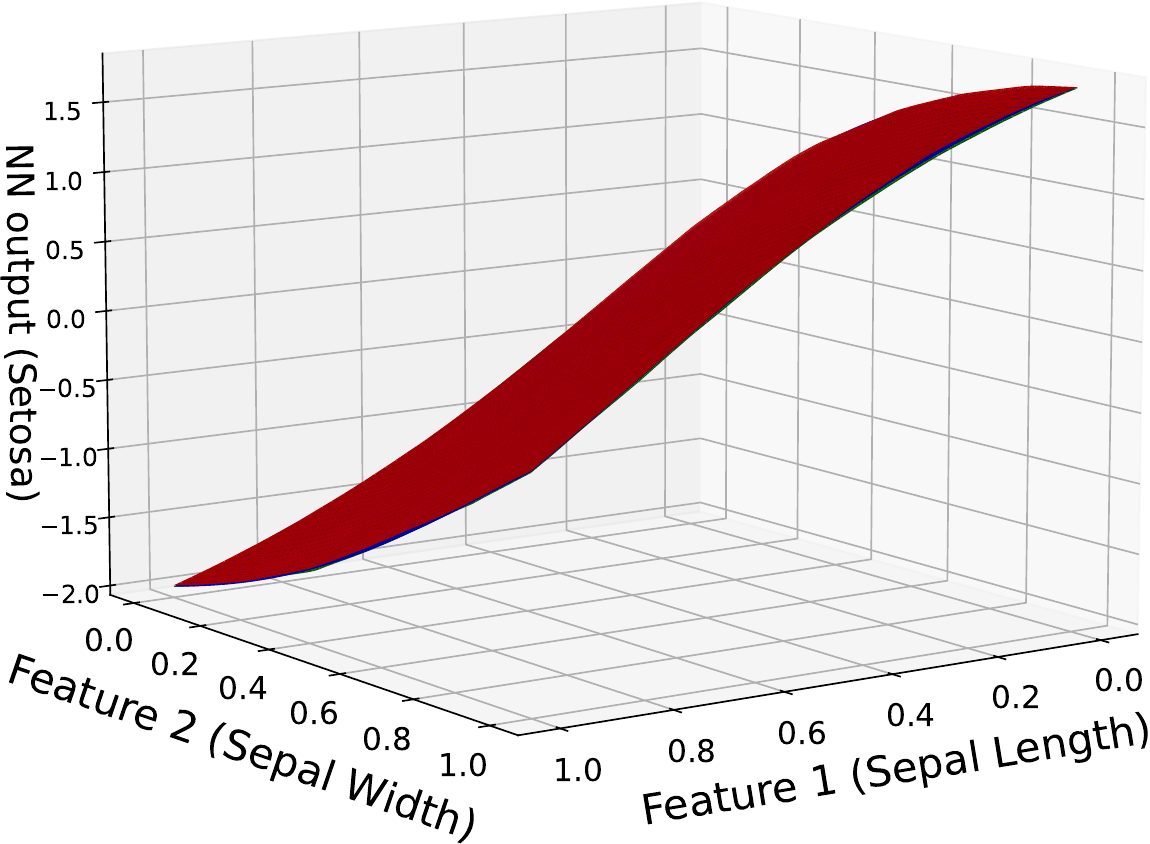}
    \end{subfigure}
    \caption{\textit{Tanh} NN output for the \textit{Setosa} class  (blue) in the Iris dataset, and its ReLU NN upper (red) and lower (green) approximations with 5 (upper panel) and 10 (lower panel) segments of bounding per convex section.}
    \label{fig:Iris_tanh}
\end{figure}

\begin{algorithm}
\caption{Piecewise Linear U\,/\,L Approximation}
\begin{algorithmic}[1]
\STATE \textbf{INPUT:} Interval bounds $[\underline{a}_\eta, \overline{a}_\eta]$ and activation function $f$ for neuron $\eta$
\STATE \textbf{OUTPUT:} Sets $\{[a_k, a_{k+1}]\}$, $\{\overline{f}|_k(x)\}$, $\{\underline{f}|_k(x)\}$
\STATE \textbf{PROCEDURE:}
\STATE Split $[\underline{a}_\eta, \overline{a}_\eta]$ into segments $\{[a_k, a_{k+1}]\}$ such that the $f|_{k}:[a_k, a_{k+1}]\to\mathbb{R}$ with $f|_k(x) = f(x)$ is either linear \textbf{or} twice continuously differentiable with constant sign of $f|_k''$
\FOR{each segment $[a_k, a_{k+1}]$}
    \IF{$f|_k$ is linear}
        \STATE $\overline{f}|_k(x) = \underline{f}|_k(x) = f|_k(x)$
    \ELSIF{$f|_k'' \geq 0$}
        \STATE $\overline{f}|_k(x) = \widetilde{f}^{lin\_int}|_{k}(x)$ 
        \STATE $\underline{f}|_k(x) = \widetilde{f}^{pie\_tan}|_{k}(x)$ 
    \ELSIF{$f|_k'' \leq 0$}
        \STATE $\overline{f}|_k(x) = \widetilde{f}^{pie\_tan}|_{k}(x)$ 
        \STATE $\underline{f}|_k(x) = \widetilde{f}^{lin\_int}|_{k}(x)$ 
    \ENDIF
\ENDFOR
\end{algorithmic}
\end{algorithm}

\paragraph{Transformation into ReLU-Equivalent Form:}
    The approximating piecewise linear upper-lower functions are converted into a form that mimics the ReLU function's behavior; i.e., $\tilde{f}(x)=W^{(2)}\mathrm{ReLU}(W^{(1)}x +b^{(1)})+b^{(2)}$. This involves creating new weighting coefficients and intercepts that replicate the ReLU’s activation pattern across the approximation intervals.

Specifically, we receive a set of intervals $\{\left[x_{i-1}, x_{i}\right]\}_{i=1}^{n}$ with the corresponding set of parameters of affine transformations $\{\left(v_{i}, c_{i}\right)\}_{i=1}^{n}$: $$\widetilde{f}(\tau) = c_{i} + v_{i}\tau, \hspace{0.5cm} \tau \in \left[x_{i-1}, x_{i}\right],$$
and set $v_{0} = 0$. Then, the corresponding ReLU-equivalent definition of approximation $\widetilde{f}$ on $\left[x_{0}, x_{n}\right]$ is
\begin{flalign*}
&\widetilde{f}(\tau) = x_{0}v_{1} + c_{1} + \sum\limits_{i = 1}^{n} \xi_{i}\mathrm{ReLU}(|v_{i} - v_{i-1}|(\tau - x_{i-1})), \\ 
&\xi_{i} = \sign(v_{i} - v_{i-1}).
\end{flalign*}

\paragraph{Full Neural Network Approximation:} The entire NN is approximated by applying the above techniques to each neuron, layer by layer, and then merging all intermediate weights and biases. For each neuron, both upper and lower approximations are generated, capturing the range of possible outputs under different inputs. To ensure the correct propagation of approximators, to create an upper approximation, we connect the upper approximation of the external layer with the upper approximation of the internal subnetwork if the internal subnetwork has a positive coefficient, or with the lower approximation if it has a negative coefficient. The reverse applies to the lower approximation. Detailed explanation is provided in Lemma \ref{lemma:bounds_on_linear_comb} and \ref{lemma:bounds_on_composition} in Appendix \ref{app:ReLU_bounds_convergence_lemmas}. This leads to a composition of uniformly convergent sequences, guaranteeing the overall uniform convergence of the final estimator to the original neural network, as we show in Theorem \ref{thm:relu_estimator_convergence}.
The final output is a set of piecewise linear approximations that bound the output of the original neural network, which can then be used for further analysis or verification.

\begin{example}\label{exmp:Iris_tanh_apprx}
Using the same setup as in Example \ref{exmp:Iris_cdf}, we train a fully connected neural network with 3 layers of 12 neurons each with $\mathrm{tanh}$ activation followed by 1 layer of 3 output neurons with linear activation 
on the Iris dataset, focusing on the re-scaled features \textit{Sepal Length} and \textit{Sepal Width}. We construct upper and lower approximations of the network's output by  ReLU neural networks with linear output layer. Two approximations are performed: with 5 and 10 bounding segments per convex section at each node (upper and lower panels of Figure \ref{fig:Iris_tanh}, respectively). Notably, with 10 segments, the original network and its approximations are nearly indistinguishable. 
\end{example} 

This procedure applies to various non-monotonic functions, such as monomials \( x^n, n \in \mathbb{N} \). These functions can be attained by a sequence of transformations that ensure monotonicity at all intermediate steps. These transformations can be represented as a subnetwork. 
Furthermore, multivariate functions like $\mathrm{softmax}$ and the product operation also have equivalent subnetworks with continuous monotonic transformations, as shown in Appendix~\ref{app:NN_equivalent}.

The main result of this section is Theorem \ref{thm:relu_estimator_convergence} that shows that the upper and lower ReLU bounds converge uniformly and monotonically to the target NN. 

\begin{theorem}[Uniform Monotonic Convergence of the ReLU Bounds]\label{thm:relu_estimator_convergence}
    The sequences of estimating functions that are ReLU NNs generated by the method in Section \ref{sec:RelU_approx_algorithm}, establish upper and lower bounds for the target NN. These sequences are monotonically decreasing for the upper bounds and monotonically increasing for the lower bounds, and converge uniformly to the target network.
\end{theorem}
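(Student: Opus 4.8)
The plan is to prove the three assertions — validity of the bounds, monotonicity in $n$, and uniform convergence — by induction on the compositional depth of the target network, using the per-neuron construction of Section~\ref{sec:RelU_approx_algorithm} as the base case and the two propagation lemmas (\ref{lemma:bounds_on_linear_comb}, \ref{lemma:bounds_on_composition}) for the inductive step. Throughout, I fix a compact hyperrectangle $K$ and the IBP-derived input interval $\Omega$ at each neuron; the decisive structural fact I will lean on is that every approximator stays within the \emph{range} of the true activation, so that a single $\Omega$ serves for the entire sequence at each node.

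First I would settle the base case at the level of a single non-decreasing activation $\sigma \in \mathcal{C}^{2}_{p.w.}(\Omega)$. On each convex (resp.\ concave) subinterval, elementary convexity shows the chord construction $\widetilde{f}^{lin\_int}$ lies above (resp.\ below) $\sigma$ and the piecewise-tangent construction $\widetilde{f}^{pie\_tan}$ lies below (resp.\ above), giving $\underline{f}_{n}\le\sigma\le\overline{f}_{n}$ on $\Omega$. Refining the partition can only tighten each chord/tangent pair, which yields $\overline{f}_{n+1}\le\overline{f}_{n}$ and $\underline{f}_{n+1}\ge\underline{f}_{n}$. The interpolation/tangent error on a subinterval is $O(|a_{k+1}-a_{k}|^{2})$ on a $C^{2}$ piece, so $\overline{f}_{n},\underline{f}_{n}\to\sigma$ pointwise; since all three functions are continuous and the convergence is monotone on the compact $\Omega$, \emph{Dini's theorem} upgrades this to uniform convergence. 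Because each approximator interpolates the endpoints of a monotone $\sigma$, its range sits inside $[\sigma(\underline{a}),\sigma(\overline{a})]$, preserving the downstream IBP domain.

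For the inductive step I write each layer-$l$ neuron as $\sigma_{l}\big(\sum_{i} w_{i}\,g_{i}(\x)+b\big)$, where the $g_{i}$ are outputs of strictly shallower subnetworks admitting, by the inductive hypothesis, monotone uniformly convergent ReLU bounds $\underline{g}_{i,n}\le g_{i}\le\overline{g}_{i,n}$. Lemma~\ref{lemma:bounds_on_linear_comb} assembles the affine combination by the sign-dependent pairing — upper bound pairs $w_{i}>0$ with $\overline{g}_{i,n}$ and $w_{i}<0$ with $\underline{g}_{i,n}$, and conversely for the lower bound — producing valid monotone bounds on $\sum_{i} w_{i}g_{i}+b$ whose convergence is uniform as a finite linear combination. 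Lemma~\ref{lemma:bounds_on_composition} then applies the activation: since $\sigma_{l}$, $\overline{f}_{n}$ and $\underline{f}_{n}$ are all non-decreasing, $\sigma(g(\x))\le\sigma(\overline{g}_{n}(\x))\le\overline{f}_{n}(\overline{g}_{n}(\x))$ gives the upper bound, and the chain $\overline{f}_{n+1}(\overline{g}_{n+1})\le\overline{f}_{n+1}(\overline{g}_{n})\le\overline{f}_{n}(\overline{g}_{n})$ (inner monotone plus outer non-decreasing, then outer monotone) preserves monotonicity in $n$.

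The step I expect to be the crux is the uniform convergence of these compositions, which is exactly where range preservation earns its keep. To control $|\overline{f}_{n}(\overline{g}_{n}(\x))-\sigma(g(\x))|$ I would split it as $|\overline{f}_{n}(\overline{g}_{n}(\x))-\sigma(\overline{g}_{n}(\x))|+|\sigma(\overline{g}_{n}(\x))-\sigma(g(\x))|$. The first term is bounded by $\sup_{y\in\Omega}|\overline{f}_{n}(y)-\sigma(y)|\to 0$, which is legitimate only because range preservation keeps $\overline{g}_{n}(\x)$ inside the fixed compact $\Omega$ on which $\overline{f}_{n}\to\sigma$ uniformly; the second term is controlled by the uniform continuity of $\sigma$ on $\Omega$ together with $\overline{g}_{n}\to g$ uniformly from the inductive hypothesis. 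Both bounds are independent of $\x\in K$, so the composition converges uniformly. The delicate bookkeeping is twofold: ensuring the IBP domains do not drift with $n$ (so one $\Omega$ works for the whole sequence at each node) and tracking the sign-dependent pairing of upper and lower bounds through every affine map. Once these are in place, iterating Lemmas~\ref{lemma:bounds_on_linear_comb} and~\ref{lemma:bounds_on_composition} across all $L$ layers delivers the full-network ReLU bounds with the claimed monotonicity and uniform convergence, completing the induction.
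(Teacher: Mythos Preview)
Your proposal is correct and follows essentially the same route as the paper's proof in Appendix~\ref{app:ReLU_bounds_convergence_theorem_proof}: per-neuron bounds from the chord/tangent construction (your base case is the content of Lemmas~\ref{lemma:boundary_values}--\ref{lemma:estimator_image} and \ref{lemma:monotonic_sequence}, together with the $O(h^{2})$ bounds of Theorems~\ref{thm:lin_int_uniform_convergence}--\ref{thm:pie_tan_uniform_convergence}), then propagation through affine layers and activations via Lemmas~\ref{lemma:bounds_on_linear_comb} and~\ref{lemma:bounds_on_composition}, with the range-preservation of Lemma~\ref{lemma:estimator_image} keeping the IBP domains fixed across the sequence. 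The only cosmetic differences are that you invoke Dini's theorem where the paper uses the explicit quadratic error bounds directly, and you inline the two-term splitting argument for uniform convergence of compositions that the paper packages separately as Lemma~\ref{lemma:composition_uniform_convergence}.
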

The proof is given in Appendix \ref{app:ReLU_bounds_convergence_theorem_proof}. It involves multiple steps, each outlined in different lemmas in Appendix~\ref{app:ReLU_bounds_convergence_lemmas}.

\begin{remark}
Theorem \ref{thm::relu_approx} follows from the UAT~\cite{Horniketal1989}. 
Theorem \ref{thm:relu_estimator_convergence} is a special case of Theorem \ref{thm::relu_approx} but our proof is constructive and explicitly outlines the sequences that establish the bounds.

\end{remark}

\subsection{Application to an arbitrary function on a compact domain}\label{sec:cdfapprox}

We present the \textit{universal distribution approximation theorem}, which may serve as a starting point for further research in the field of stochastic behavior of functions and the NNs that describe them.

\begin{theorem}[Universal distribution approximation theorem]\label{thm:UDAT}
Let $\X$ be a random vector with continuous pdf $\phi(\x)$ supported over a compact hyperrectangle $K \subset \real^{n_{0}}$. Let $Y = \mathcal{W}(\x)\in\mathbb{R}$ be a continuous function of $\X$ with domain $K$, and let $F(y)$ denote its cdf. Then, there exist sequences of cdf bounds $\{\underline{F}_{n}\}$,  $\{\overline{F}_{n}\}$, $n=1,2,\ldots$, which can be constructed by bounding the distributions of sequences of ReLU NNs and
such that  
\begin{align*}
\underline{F}_{n}(y) \leq  F(y) \le \overline{F}_{n}(y) 
\end{align*}
for all $y \in \{\mathcal{W}(\x): \x \in K\}$
and 
\begin{align*}
\underline{F}_{n} (y) \rightarrow F(y), \quad 
\overline{F}_{n} (y) \rightarrow F(y) 
\end{align*}
for all $y$ where $F(y)$ is continuous. Moreover, if $\mathcal{W}(\X)$ is almost surely nowhere locally constant, that is           \begin{align}\label{cont_contidion}
\int\limits_{\{\mathcal{W}(\x) = y\}} \phi(\x)d\x = 0
\end{align}
for all $y \in \{\mathcal{W}(\x): \x \in K\}$,
then both bounds $\underline{F}_{n}, \overline{F}_{n}$ converge uniformly to the true cdf $F$.
    
\end{theorem}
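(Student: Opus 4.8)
My plan is to decouple the two approximations hidden in the statement: first sandwich the \emph{function} $\mathcal{W}$ between two ReLU networks, then convert these function bounds into cdf bounds by a monotonicity (set-inclusion) argument. Suppose I have ReLU NNs $A_n, B_n$ with $B_n(\x) \le \mathcal{W}(\x) \le A_n(\x)$ for all $\x \in K$ and $\delta_n := \sup_{\x \in K}(A_n(\x) - B_n(\x)) \to 0$. For every $y$ the events nest as $\{A_n \le y\} \subseteq \{\mathcal{W} \le y\} \subseteq \{B_n \le y\}$, so taking $\pr$ (i.e.\ integrating $\phi$) gives $\pr(A_n(\X) \le y) \le F(y) \le \pr(B_n(\X) \le y)$. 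Thus the \emph{upper} ReLU bound of $\mathcal{W}$ yields the \emph{lower} cdf bound and vice versa, and I would set $\underline{F}_{n}(y) := \pr(A_n(\X) \le y)$ and $\overline{F}_{n}(y) := \pr(B_n(\X) \le y)$. Each is the output cdf of a ReLU NN under the input pdf $\phi$, precisely the object Theorem~\ref{thm::exact_cdf} evaluates once $\phi$ is further pinched between piecewise polynomials.

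To construct $A_n, B_n$ I would use that $\mathcal{W}$ is continuous on the compact $K$, hence uniformly continuous. By the UAT~\cite{Horniketal1989} there is a feedforward NN $g_n$ with $\sup_{K}|g_n - \mathcal{W}| < 1/(4n)$; choosing $g_n$ with monotonic $\mathcal{C}^{2}_{p.w.}$ activations, Theorem~\ref{thm:relu_estimator_convergence} supplies a ReLU NN $r_n$ with $\sup_{K}|r_n - g_n| < 1/(4n)$, so $\sup_{K}|r_n - \mathcal{W}| < 1/(2n)$. Shifting the output bias, $A_n := r_n + 1/(2n)$ and $B_n := r_n - 1/(2n)$ are ReLU NNs genuinely sandwiching $\mathcal{W}$ with $\delta_n = 1/n \to 0$ (equivalently one invokes the ReLU-UAT to obtain $r_n$ directly).

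For pointwise convergence I would combine the sandwich with the one-sided behaviour of a cdf. From $B_n \ge \mathcal{W} - \delta_n$ I get $\{B_n \le y\} \subseteq \{\mathcal{W} \le y + \delta_n\}$, hence $F(y) \le \overline{F}_{n}(y) \le F(y + \delta_n)$, and right-continuity of $F$ forces $\overline{F}_{n}(y) \to F(y)$ at \emph{every} $y$. From $A_n \le \mathcal{W} + \delta_n$ I get $\{\mathcal{W} \le y - \delta_n\} \subseteq \{A_n \le y\}$, hence $F(y-\delta_n) \le \underline{F}_{n}(y) \le F(y)$, which squeezes $\underline{F}_{n}(y)$ to the left limit $F(y^{-})$; this equals $F(y)$ exactly at continuity points of $F$, explaining why the lower bound's convergence is asserted only there.

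Finally, under condition~\eqref{cont_contidion} every level set carries zero $\phi$-mass, i.e.\ $\pr(\mathcal{W}(\X) = y) = 0$, so all jumps $F(y) - F(y^{-})$ vanish and $F$ is continuous; since $\mathcal{W}$ is continuous on compact $K$ its range is a compact interval, so $F$ is a continuous cdf of bounded support and therefore uniformly continuous with some modulus $\omega_F$. The sandwich then yields $0 \le \overline{F}_{n}(y) - F(y) \le \omega_F(\delta_n)$ and $0 \le F(y) - \underline{F}_{n}(y) \le \omega_F(\delta_n)$ uniformly in $y$, and $\omega_F(\delta_n) \to 0$. I expect this last step to be the main obstacle and the only place the hypotheses are essential: controlling $\sup_y |F(y \pm \delta_n) - F(y)|$ uniformly fails exactly when $F$ has atoms, and condition~\eqref{cont_contidion} is what rules these out and upgrades the pointwise squeeze to a uniform one.
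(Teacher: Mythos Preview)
Your proof is correct and shares the paper's backbone---sandwich $\mathcal{W}$ between shifted UAT approximants that are ReLU NNs, then convert the pointwise inequalities into set inclusions $\{A_n \le y\} \subseteq \{\mathcal{W} \le y\} \subseteq \{B_n \le y\}$ and hence cdf inequalities. The meaningful difference is that the paper \emph{also} sandwiches the density, building $\underline{\phi}_n \le \phi \le \overline{\phi}_n$ from shifted one-layer ReLU networks and defining $\overline{F}_n(y) = \min\bigl[1, \int_{\{\underline{Y}_n \le y\}} \overline{\phi}_n\bigr]$ and $\underline{F}_n(y) = \max\bigl[0, \int_{\{\overline{Y}_n \le y\}} \underline{\phi}_n\bigr]$. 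This second approximation is what buys exact computability via Theorem~\ref{thm::exact_cdf} (both integrand and integration region become piecewise linear); your bounds integrate the true $\phi$, so they are genuine cdfs of ReLU NN outputs but are not directly evaluable by that theorem unless $\phi$ is already piecewise polynomial---you note this but do not fold it into the construction. On convergence the paper must work harder because two objects move: it decomposes $\overline{F}_n - F$ into a term controlled by $\overline{\phi}_n - \phi \to 0$ uniformly and a term controlled by the shrinking symmetric difference between the sublevel sets of $\underline{Y}_n$ and $\mathcal{W}$, and then upgrades pointwise to uniform convergence by invoking P\'olya's theorem. Your one-line squeeze $F(y-\delta_n) \le \underline{F}_n(y) \le F(y) \le \overline{F}_n(y) \le F(y+\delta_n)$ together with the modulus $\omega_F$ is more elementary and more transparent precisely because only $\mathcal{W}$ is perturbed.
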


The proof is presented in Appendix \ref{app:thm_proof_UDAT}. It leverages the UAT \cite{Horniketal1989} to approximate the function $\mathcal{W}(\x)$ and input pdf $\phi(\x)$ with ReLU NNs to arbitrary accuracy. The cdf bounds are then computed over polytope intersections. Increasing the NN complexity results in more simplices and thus leads to finer local affine approximations of the pdf.

To bound the cdf of a given NN with respect to a specified input pdf, we construct upper and lower bounding ReLU NNs to approximate the target NN. Next, we subdivide the resulting polytopes of the bounding ReLU NNs into simplices as much as needed to achieve the desired accuracy and locally bound the input pdf with constant values (the simplest polynomial form) on these simplices, transforming the problem into the one described in Section \ref{sec:cdf_relu_polynomial}.

\section{Experiments}

We consider four datasets, Banana, Diabetes \cite{Efronetal_2004}, Iris \cite{Fisher_1936}, and Wine \cite{wine_109}\footnote{Iris, Wine and Diabetes are provided in the Python package \href{https://scikit-learn.org/stable/}{\texttt{scikit-learn}} and Banana in \href{https://www.kaggle.com/code/saranchandar/standard-classification-with-banana-dataset}{Kaggle}. }.

In all experiments, we compare our guaranteed bounds for the output cdf with cdf estimates obtained via a Monte Carlo (MC) simulation ($100$ million samples), and with the ``Piecewise Linear Transformation'' (PLT) method of \citet{Krapfetal2024} following their default subdivision setup. For each simplex an edgewise subdivision is performed with the number of subdivisions depending on the input dimension $n_0$. For input dimension $n_0$ from $1$ to $3$, the number of subdivisions was set to $250$, $100$ and $30$, respectively. In contrast, our implementation uses a bound (arbitrarily set to $50\,000$) on the total number of vertices based on the iteratively refined triangulation.

Since the true cdf is contained within the limits computed by our method, the experiments assess the tightness of our bounds compared to both MC and PLT by tallying the number of ``out of bounds'' instances. Essentially, achieving very tight bounds makes it challenging to stay within those limits, whereas even imprecise estimates can fall within broader bounds.

Our experimental setup is based on small pre-trained (fixed) neural networks. For the Banana data, we trained a ReLU network with $2$ hidden layers of  $16$ neurons each. For the Diabetes dataset, we trained a ReLU network with $3$ hidden layers with $32$, $16$, and $8$ neurons, respectively.  The univariate output has no activation. Training was performed on $70\%$ of the data, which were randomly selected. The remaining $30\%$ 
comprise the test set. We added  normal noise to the 1 or 2 randomly selected features (see $n_0$ in Table~\ref{tab:sim1}) of every observation with variance the sample variance of the selected feature(s) in the test set (different features in different observations). This is a simplified simulation setting of that of \cite{Krapfetal2024}. 

For both Iris and Wine, we replicated the exact experimental setup from \cite{Krapfetal2024} using the same test sets, Gaussian mixtures as randomness as well as the same pre-trained networks\footnote{GitHub page \url{https://github.com/URWI2/Piecewise-Linear-Transformation}, accessed Jan 25, 2025}. The $1$ to $3$ dimensional Gaussian mixtures were computed by first deleting  $25\%$ or $50\%$ of the test data. Subsequently, $50$ new observations are imputed using MICE \cite{vanBuurenetal2011}. The new imputed observations where used as the dataset for a Gaussian kernel density estimate providing the Gaussian mixture for each of the original deleted observations. 
The only difference in the experimental setup is the MC estimate, which we recomputed with a higher sample count ($10^8$ as opposed to $8\cdot 10^4$ till $1.024\cdot 10^7$ with a median of $6.4\cdot 10^5$ deduced by an incremental doubled sample size convergence criteria). Here, the ``ground truth'' are the exact cdf bounds our method computes as opposed to the MC estimate in the \cite{Krapfetal2024} experiments, which was an approximation to the true cdf.

We summarize our results in Table~\ref{tab:sim1}. When the input dimension is 1, our bounds are very tight and we thus observe a high ratio of ``out of bounds'' samples for both MC and PLT compared to the number of grid points the cdf was evaluated at. This has two components:  (a) In regions where the cdf is very flat, we obtain very tight bounds leading to small errors in a bucketed estimation approach easily falling outside of these tight bounds;  (b) due to either pure random effect in the case of MC or numerical estimation inaccuracies in case of PLT, MC and PLT estimates are outside the bounds. Although the PLT estimator directly targets the pdf and would be expected to achieve greater precision than ours (as we target bounding the cdf instead), we note that in these examples, especially as regards PLT, a ``coarse grid'' can cause inaccuracies in areas where the pdf fluctuates significantly. Nevertheless, this behavior diminishes as the input dimension increases. Due to our hard bound on the maximum number of vertices in this simulation (namely $50\,000$), our estimated cdf bounds are wider in higher dimensions as a consequence of the curse of dimensionality.

\begin{table*}[!ht]
    \centering
    \caption{\label{tab:sim1}Comparison of our approach U/L-Dist (guaranteed upper and lower bounds) with pointwise estimators from Monte-Carlo simulations and PLT \cite{Krapfetal2024}. Under column headed by $n_{0}$ are the numbers of input variables, \textit{\# Tests} is test set size grouped by $n_0$,  under \textit{U/L-dist} are the mean distance (standard deviation) between our upper and lower bounds, under $\mathrm{OOB}_{\mathrm{MC}}$ and $\mathrm{OOB}_{\mathrm{PLT}}$ are the minimum, median and maximum number of points outside our bounds relative to \textit{Grid Size} many cdf evaluation points for Monte-Carlo simulations and PLT, respectively. \textit{Runtime} gives the mean computation time for PLT and our approach in seconds (MC took about 20 sec in all cases).}
    \begin{tabular}{l c r | c *{2}{r@{ / }r@{ / }r} c *{2}{r@{}l} }
        \toprule
        Dataset & $n_0$ & \# Tests &  U/L-Dist
            &\multicolumn{6}{c}{\# Out of Bounds}
            &Grid
            &\multicolumn{4}{c}{Runtime [sec]} \\ \cline{5-10} \cline{12-15}
        & & &
            &\multicolumn{3}{c}{MC}
            &\multicolumn{3}{c}{PLT}
            &Size
            &\multicolumn{2}{c}{PLT}
            &\multicolumn{2}{c}{OUR} \\[-0.2em]
        & &
            &{\tiny Mean (Std)}
            &\multicolumn{3}{c}{\tiny (Min/Median/Max)}
            &\multicolumn{3}{c}{\tiny (Min/Median/Max)}
            &
            &\multicolumn{2}{c}{\tiny Mean}
            &\multicolumn{2}{c}{\tiny Mean} \\[-0.2em]
            \midrule
  banana     & 1 &     1591 & 0.0001 (0.0000) &   0& 103& 896 &   77& 565& 999 & 1000 &   1&.4  &   1&.0 \\ \hline
diabetes     & 1 &      133 & 0.0001 (0.0000) &   2& 265& 815 &  107& 678& 993 & 1000 &   1&.2  &   1&.0 \\
diabetes     & 2 &      133 & 0.0015 (0.0009) &   0&   1&  92 &    0& 207& 992 & 1000 & 504&    &   4&.0 \\ \hline
 iris$_{25}$ & 1 &       25 & 0.0000 (0.0000) &  44& 282& 868 & 1510&2245&2594 & 8000 &   0&.23 &   5&.9 \\
 iris$_{25}$ & 2 &       10 & 0.0059 (0.0040) &   0&   9& 127 &    0&   0&  74 & 8000 &  38&    &  20&   \\
 iris$_{25}$ & 3 &      --- \\ \hline
 iris$_{50}$ & 1 &       20 & 0.0000 (0.0000) & 115& 321& 828 & 1794&2311&2976 & 8000 &   0&.41 &   5&.9 \\
 iris$_{50}$ & 2 &       23 & 0.0064 (0.0064) &   0&  37& 413 &    0&   9& 119 & 8000 &  43&    &  20&   \\
 iris$_{50}$ & 3 &        8 & 0.2019 (0.0414) &   0&  18& 128 &    0&  31& 178 & 8000 & 270&    & 100&   \\ \hline
wine2$_{25}$ & 1 &       32 & 0.0000 (0.0000) &  23& 329& 853 & 1864&2294&3026 & 8000 &   0&.36 &   5&.8 \\
wine2$_{25}$ & 2 &        8 & 0.0043 (0.0016) &  11&  34& 341 &    0&   4&  22 & 8000 &  44&    &  19&   \\
wine2$_{25}$ & 3 &        2 & 0.1777 (0.0068) & 300& 449& 598 &   60&  63&  66 & 8000 & 340&    &  99&   \\ \hline
wine2$_{50}$ & 1 &       27 & 0.0000 (0.0000) & 169& 491&2129 & 1892&2191&2955 & 8000 &   0&.41 &   5&.9 \\
wine2$_{50}$ & 2 &       21 & 0.0053 (0.0028) &   0&  70& 615 &    0&   4&  46 & 8000 &  42&    &  20&   \\
wine2$_{50}$ & 3 &       13 & 0.1859 (0.0560) &  13& 110& 489 &    1&  39&  93 & 8000 & 300&    &  99&   \\
        \bottomrule
    \end{tabular}
\end{table*}

\section{Related Work}
\label{related}

The literature on NN verification is not directly related to ours as it has been devoted to standard non-stochastic input NNs, where the focus is on establishing guarantees of local robustness. This line of work develops testing algorithms for whether the output of a NN stays the same within a specified neighborhood of the deterministic input (see, e.g., \citet{Gowaletal_2019,Xuetal_2020,Zhangetal2018,Zhangetal2020,Shietal2025,Buneletal_2019,Ferrarietal_2022,Katzetal_2017,Katzetal_2019,Wuetal_2024}). 

To handle noisy data or aleatoric uncertainty (random input) in NNs, two main approaches have been proposed: sampling-based and probability density function (pdf) approximation-based. Sampling-based methods use Monte Carlo simulations to propagate random samples through the NN (see, e.g., \citet{Abdelazizetal2015,Jietal2020}), but the required replications to achieve similar accuracy to theoretical approaches such as ours, as can be seen in Table \ref{tab:sim1}, can be massive. Pdf approximation-based methods assume specific distributions for inputs or hidden layers, such as Gaussian \cite{Abdelazizetal2015} or Gaussian Mixture Models \cite{ZhangShin2021}, but these methods often suffer from significant approximation errors and fail to accurately quantify predictive uncertainty. Comprehensive summaries and reviews of these approaches can be found in sources like \citet{Sickingetal2022} and  \citet{Gawlikowskietal2023}.

In the context of verifying neural network properties within a probabilistic framework,  \cite{Wengetal2019} proposed PROVEN, a general probabilistic framework that ensures robustness certificates for neural networks under Gaussian and Sub-Gaussian input perturbations with bounded support with a given probability. It employs CROWN \cite{Zhangetal2018,Zhangetal2020} to compute deterministic affine bounds and subsequently leverages straightforward probabilistic techniques based on Hoeffding's inequality \cite{Hoeffding1963}.  PROVEN provides a probabilistically sound solution to ensuring the output of a NN is the same for small input perturbations with a given probability, its effectiveness hinges on the activation functions used. It cannot refine bounds or handle various input distributions, which may limit its ability to capture all adversarial attacks or perturbations in practical scenarios.

The most relevant published work to ours we could find in the literature is \citet{Krapfetal2024}. They propagate input densities through NNs with piecewise linear activations like ReLU, without needing sampling or specific assumptions beyond bounded support. Their method calculates the propagated pdf in the output space using the piecewise linearity of the ReLU activation. 
\cite{Krapfetal2024} estimate the output pdf and show experimentally that it is very close to the Monte Carlo based pdf. Despite its originality, the approach has drawbacks, as they compare histograms rather than the actual pdfs in their experiments. Theorem 5 (App. C) in \cite{Krapfetal2024} suggests approximating the distribution with fine bin grids and input subdivisions, but this is  difficult to implement in practice. Without knowledge of the actual distribution, it is challenging to define a sufficiently ``fine'' grid. In contrast, we compute exact bounds of the true output cdf over its entire support (at any point, no grid required) that depicts the maximum error over its support, and show 
convergence to the true cdf. 
\citet{Krapfetal2024} use a piecewise constant approximation for input pdfs, which they motivate by their Lemma 3 (App. C) to deduce that exact propagation of piecewise polynomials through a neural network is infeasible. 
We show that, in effect, it is possible and provide a method for exact integration over polytopes. Additionally, their approach is limited to networks with piecewise linear activations, excluding locally nonlinear functions. In contrast, our method adapts to NNs with continuous, monotonic piecewise \textit{twice continuously differentiable} activations.

\section{Conclusion}

We develop a novel method to analyze the probabilistic behavior of the output of 
a neural network subject to noisy (stochastic) inputs. We formulate an algorithm to compute bounds (upper and lower) for the cdf of a neural network's output and prove that the bounds are guaranteed and that they converge uniformly to the true cdf. 

Our approach enhances deterministic local robustness verification using non-random function approximation. By bounding intermediate neurons with piecewise affine transformations and known ranges of activation functions evaluated with IBP \cite{Gowaletal_2019}, we achieve more precise functional bounds.  These bounds converge to the true functions of input variables as local linear units increase.  

Our method targets neural networks with continuous monotonic piecewise twice continuously differentiable activation functions using tools like Marabou \cite{Wuetal_2024}, originally designed for piecewise linear functions.
 While the current approach analyzes the behavior of  NNs on a compact hyperrectangle, we can easily extend our theory to unions of bounded polytopes. 
 In future research,  we plan to bound the cdf of a neural network where the input admits arbitrary distributions with bounded piecewise continuous pdf supported on arbitrary compact sets. Moreover, we intend to improve the algorithmic performance so that our method applies to larger networks.

\section*{Impact Statement}
This paper presents work whose goal is to advance the field
of Machine Learning. There are many potential societal
consequences of our work, none which we feel must be
specifically highlighted here.
 
\section*{Acknowledgments}
We would like to thank three reviewers for their helpful feedback and suggestions, which improved our work.

The research in this paper has been partially funded by the Vienna Science and Technology Fund (WWTF) grant [10.47379/ICT19018] (ProbInG), the TU Wien Doctoral College (SecInt), the FWF research
project P 30690-N35, and
WWTF project ICT22-023 (TAIGER).

\clearpage

\bibliography{refs}
\bibliographystyle{icml2025}

\newpage
\appendix
\onecolumn

\section{Proof of Theorems}

\subsection{Theorem \ref{thm::exact_cdf}}\label{app:thm_proof_Relu_cdf}
Suppose the activation function in the prediction NN \eqref{MLP} is ReLU and $n_{0}$ and $n_{L}$ are the number of input and output neurons, respectively. The integral of a function over a given domain can be expressed as the sum of integrals over a partition of the domain (disjoint subdomains whose union constitutes the original domain).
To compute the cdf of $\pY = f_{L}(\x)$  at $\y$, $F_{\pY}(y) = \Pr[f_{L}(\x) \leq \y]$, we compute the sum of $\pr[NN^{j}(\x) \leq y \mid \x \in \mathcal{P}_{j}]$, each of which is the integral of the pdf of the input over the given polytopes subject to $NN^{j}(\x) \leq \y$. These sets of polytopes \( \{\mathcal{P}_{j}\} \) and the corresponding local affine transformations \( \{NN^{j}(\mathbf{x})\} \) always exist, as shown in \cite{Raghuetal_2017}.

$\mathcal{P}_{j}$ is a convex polytope and  can be represented as the intersection of halfspaces (see \cite{Ziegler_1995}). The set $\{\x: NN^{j}(\x) = \cb^{j} + \V^{j}\x \leq \y\}$ is defined as the intersection of halfspaces $$\bigcap\limits_{t = 1}^{n_{L}} \left\{\x: NN^{j}_{t}(\x) = c^{j}_{t} + \sum_{z=1}^{n_{0}}x_{z}v^{j}_{t, z} \leq y_{t}\right\},$$ which when intersected with $\mathcal{P}_{j}$ and $k_{i}$ defines the reduced complex polytope $\mathcal{P}^{r}_{j,i}$. The desired local probability, $\pr[NN^{j}(\x) \leq \y \mid \x \in \mathcal{P}_{j}]$, is the integral $\mathcal{I}\left[\phi_{i}(\x) ;\mathcal{P}^{r}_{j,i}\right]$ of the pdf $\phi_{i}(\x)$ over the reduced polytope $\mathcal{P}^{r}_{j,i}$.

Using the Delaunay triangulation \cite{Delaunay_1934} one can decompose any convex polytope $\mathcal{P}^{r}_{j,i}$ into a disjoint set of simplices $\mathcal{T}_{i,j,s}$. This triangulation allows us to compute the integral over the polytope as a sum of integrals over each simplex. Assuming that the pdf of the input is a piecewise polynomial allows us to use the algorithm from \cite{Lasserre2021} to compute exact integrals over all simplices. The sum of all these localized integrals (probabilities) is the exact cdf value at point $\y$.

\subsection{Theorem \ref{thm::relu_approx}}\label{app:thm_proof_Relu_est}
Since $\widetilde{Y}$ is a feedforward NN with continuous activation functions on a compact support $K \subset \real^{n_{0}}$, for any $\epsilon > 0$, let $\{\epsilon_{n}\}$, $\epsilon > \epsilon_{n} > 0$ be a decreasing sequence. By the UAT~\cite{Horniketal1989}, there exists a sequence of ReLU networks $\{Y_{\epsilon_{n}}\}$, such that 
\[ \sup_{K}\|\widetilde{Y}(\x) - Y_{\epsilon_{n}}(\x)\|_{\real^{n_{0}}} < \epsilon_{n}.\]
Setting $\underline{Y}'_{n}(\x) = Y_{\epsilon_{n}}(\x) - \epsilon_{n}$ and $\overline{Y}'_{n}(\x) = Y_{\epsilon_{n}}(\x) + \epsilon_{n}$, we have
\[ \underline{Y}'_{n}(\x) \leq \widetilde{Y} (\x) \leq \overline{Y}'_{n}(\x).\]
Now, we let \( \underline{Y}_n(\mathbf{x}) = \max_{1 \leq i \leq n} \underline{Y}'_n(\mathbf{x}) \), which is still not greater than \( \widetilde{Y}(\mathbf{x}) \), and \( \overline{Y}_n(\mathbf{x}) = \min_{1 \leq i \leq n} \overline{Y}'_n(\mathbf{x}) \), which is still not smaller than \( \widetilde{Y}(\mathbf{x}) \).
One can see that $\{\overline{Y}_{n}\}$ and $\{\underline{Y}_{n}\}$ are monotonically decreasing and increasing, respectively. It should be noted that the $\min$ and $\max$ operators can be represented as ReLU networks (see Fig. \ref{fig:NN_max}), and the composition of ReLU networks is itself a ReLU network.

\subsection{Theorem \ref{thm:UDAT}}\label{app:thm_proof_UDAT}
According to the UAT~\cite{Horniketal1989}, for any $\epsilon > 0$ there exist one-layer networks $\widetilde{Y}$, $\widetilde{\phi}$ with ReLU activation function, such that
\begin{align*}
    \sup\limits_{\x \in K}\parallel \mathcal{W}(\x) - \widetilde{Y}(\x) \parallel < \epsilon, \qquad
    \sup\limits_{\x \in K}\parallel \phi(\x) - \widetilde{\phi}(\x) \parallel < \epsilon.
\end{align*}
Define $\underline{Y}_{n}(\x) = \widetilde{Y}(\x) - \epsilon$, which is also a NN. Similarly, for $\overline{Y}_{n}$, $\underline{\phi}_{n}$, $\overline{\phi}_{n}$. Then, \begin{align*}
    \mathcal{W}(\x) - 2\epsilon \leq \underline{Y}_{n}(\x) = \widetilde{Y}(\x) - \epsilon \leq \mathcal{W}(\x) \leq \widetilde{Y}(\x) + \epsilon = \overline{Y}_{n}(\x) < \mathcal{W}(\x) + \epsilon \\
    \phi(\x) - 2\epsilon < \underline{\phi}_{n}(\x) = \widetilde{\phi}(\x) - \epsilon \leq \phi(\x) \leq \widetilde{\phi}(\x) + \epsilon = \overline{\phi}_{n}(\x) < \phi(\x) + 2\epsilon,
\end{align*}
Letting $\epsilon \to 0$ results in   $\underline{Y}_{n}, \overline{Y}_{n}\to \mathcal{W}$ and $\underline{\phi}_{n}, \overline{\phi}_{n} \to \phi$, uniformly on a compact domain while guaranteeing that they be lower/upper bounds.

Let
\begin{align*}
    \overline{F}_{n}(y) = \min \left[1, \int\limits_{\{\x: \x \in K \cap \underline{Y}_{n}(\x) \leq y\}}\overline{\phi}_{n}(\x)d\x \right], \quad 
    \underline{F}_{n}(y) = 
    \max \left[0, \int\limits_{\{\x: \x \in K \cap \overline{Y}_{n}(\x) \leq y\}}\underline{\phi}_{n}(\x)d\x \right]
\end{align*}
The limit cdf is 
\begin{align*}
    F(y) &= \int\limits_{\{\x: \x \in K \cap \mathcal{W}(\x) \leq y\}}\phi(\x)d\x 
\end{align*}
Since $\underline{\phi}_{n}(\x) \leq \phi(\x) \leq \overline{\phi}_{n}(\x)$  and $\underline{Y}_{n}(\x) \leq \mathcal{W}(\x) \leq \overline{Y}_{n}(\x)$ for any $\x \in K$, 
$\{\x: \x \in K \cap \underline{Y}_{n}(\x) \leq y\} \supseteq \{\x: \x \in K \cap \mathcal{W}(\x) \leq y\}$ and $\{\x: \x \in K \cap \overline{Y}_{n}(\x) \leq y\} \subseteq \{\x: \x \in K \cap \mathcal{W}(\x) \leq y\}$. Since $0 \leq F(y) \leq 1$ for all $y$, 
\begin{align*}
     \underline{F}_{n}(y) \leq F(y) \leq  \overline{F}_{n}(y)
\end{align*}
for all $y \in \{\mathcal{W}(\x): \x \in K\}$.

Now let us fix an arbitrary $y = \mathcal{W}(\x)$ for $\x\in K$, such that $y$ is a continuity point of $F$.
\begin{align*}
    \overline{F}_{n}(y) - F(y) &\leq \int\limits_{\{\x: \x \in K \cap \underline{Y}_{n}(\x) 
    \leq y\}}\overline{\phi}_{n}(\x)dx - \int\limits_{\{\x: \x \in K \cap \mathcal{W}(\x) \leq y\}}\phi(\x)d\x \\
    &= \underbrace{\int\limits_{\{\x: \x \in K \cap \underline{Y}_{n}(\x) \leq y\}}(\overline{\phi}_{n}(\x) - \phi(\x))dx}_{A} + \underbrace{\int\limits_{\{\x: \x \in K \cap \underline{Y}_{n}(\x) 
    \leq y\}}\phi(\x)dx - \int\limits_{\{\x: \x \in K \cap \mathcal{W}(\x) \leq y\}}\phi(\x)dx}_{B}
\end{align*}
\begin{align*}
     F(y) - \underline{F}_{n}(y) &\leq \int\limits_{\{\x: \x \in K \cap \mathcal{W}(\x) \leq y\}}\phi(\x)dx -\int\limits_{\{\x: \x \in K \cap \overline{Y}_{n}(x) 
    \leq y\}}\underline{\phi}_{n}(\x)d\x   \\
    &= \underbrace{\int\limits_{\{\x: \x \in K \cap \overline{Y}_{n}(\x) \leq y\}}( \phi(\x) - \underline{\phi}_{n}(\x))dx}_{C} + \underbrace{\int\limits_{\{\x: \x \in K \cap \mathcal{W}(\x) \leq y\}}\phi(\x)d\x - \int\limits_{\{\x: \x \in K \cap \overline{Y}_{n}(\x) 
    \leq y\}}\phi(\x)d\x}_{D}
\end{align*}
The left integrals in both equations (A and C) converge to zero due to the uniform convergence to zero of the integrands over the whole set $K$. The second differences (B and D) converge to zero, since the superset $\{\x: \x \in K \cap \underline{Y}_{n}(\x) \leq y\}$ and subset $\{\x: \x \in K \cap \overline{Y}_{n}(\x) \leq y\}$ of the limits of integrals, respectively, converge to the true limit set $\{\x: \x \in K \cap \mathcal{W}(\x) \leq y\}$ due to continuity. We have proven pointwise convergence for every  point of continuity of the limiting cdf.




Requiring \ref{cont_contidion} means that the limiting distribution has no point mass; i.e., it is continuous. The support of $Y$ is compact because it is the continuous image of the compact set 
$K$.  We can then apply Polya's theorem~\cite{Rao_1962}  that the convergence of both bounds is uniform as sequences of monotonically increasing functions converging pointwise to a continuous function on a compact set.

\section{Convergence of the ReLU bounds}

\subsection{Preliminary results}\label{app:ReLU_bounds_convergence_lemmas}

\begin{lemma}\label{lemma:boundary_values}
Let $a_{k}, a_{k+1} \in \real$ with $a_{k+1} > a_{k}$ and assume 
$f: \left[a_{k}, a_{k+1}\right] \rightarrow \real$ is twice continuously differentiable, monotone increasing and strictly convex (concave) on $\left[a_{k}, a_{k+1}\right]$. Then the values of the linear interpolation and piecewise tangent approximation at boundary points coincide with the original function. That is, $\widetilde{f}^{lin\_int}(a_{k}) = \widetilde{f}^{pie\_tan}(a_{k}) = f(a_{k})$ and $\widetilde{f}^{lin\_int}(a_{k+1}) = \widetilde{f}^{pie\_tan}(a_{k+1}) = f(a_{k+1})$.
\end{lemma}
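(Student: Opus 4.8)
The plan is to verify each of the four claimed equalities by direct substitution into the explicit piecewise formulas defining $\widetilde{f}^{lin\_int}$ and $\widetilde{f}^{pie\_tan}$ that were given in Section~\ref{sec:RelU_approx_algorithm}. The strict convexity (concavity) and monotonicity hypotheses are needed only to guarantee that the intermediate node $a_{k'}$ is well defined and lies in $(a_k, a_{k+1})$ — for the tangent construction, strict convexity makes the derivative strictly monotone so the denominator $f_{-}'(a_{k+1}) - f_{+}'(a_k)$ is nonzero and the two tangent lines meet at a unique interior point. Once well-definedness is in hand, the boundary equalities are simply a matter of evaluating the correct piece at the correct endpoint.

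First I would treat the linear interpolation. On the left piece $\widetilde{f}^{lin\_int}(\tau) = f(a_{k}) + (\tau - a_{k})\kappa_{1}$, so evaluating at $\tau = a_{k}$ annihilates the second term and returns $f(a_{k})$. On the right piece $\widetilde{f}^{lin\_int}(\tau) = f(a_{k'}) + (\tau - a_{k'})\kappa_{2}$ with $\kappa_{2} = \bigl(f(a_{k+1}) - f(a_{k'})\bigr)/(a_{k+1} - a_{k'})$; substituting $\tau = a_{k+1}$ gives $f(a_{k'}) + \bigl(f(a_{k+1}) - f(a_{k'})\bigr) = f(a_{k+1})$, where the factor $a_{k+1} - a_{k'}$ cancels against the denominator of the slope.

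Next I would treat the piecewise tangent approximation. On the left piece $\widetilde{f}^{pie\_tan}(\tau) = f(a_{k}) + f_{+}'(a_{k})(\tau - a_{k})$, so $\tau = a_{k}$ again kills the slope term and returns $f(a_{k})$; on the right piece $\widetilde{f}^{pie\_tan}(\tau) = f(a_{k+1}) + f_{-}'(a_{k+1})(\tau - a_{k+1})$, and $\tau = a_{k+1}$ kills that slope term and returns $f(a_{k+1})$. Both segments are, by construction, the first-order Taylor (tangent) lines of $f$ anchored at the two endpoints, so passing through $\bigl(a_{k}, f(a_{k})\bigr)$ and $\bigl(a_{k+1}, f(a_{k+1})\bigr)$ respectively is automatic.

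There is no substantive obstacle in this lemma: it is a book-keeping statement recording the interpolation/tangency conditions that are built into the two constructions, and its purpose is to serve as a hypothesis in the later gluing arguments (guaranteeing that consecutive segments agree at shared nodes, that the composite approximation is continuous, and that its output range coincides with that of $f$). The only point that merits a sentence of care is that each right piece is evaluated at the far endpoint $a_{k+1}$, not at the interior node $a_{k'}$, so it is the telescoping within the $\kappa_{2}$ term (respectively the vanishing displacement $a_{k+1} - a_{k+1}$ in the tangent slope) that delivers $f(a_{k+1})$.
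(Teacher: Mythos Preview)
Your proposal is correct and matches the paper's proof essentially line for line: both proceed by direct substitution of the endpoints into the two pieces of each construction, letting the vanishing displacement (for the tangent case) or the cancellation of $a_{k+1}-a_{k'}$ against the denominator of $\kappa_2$ (for the interpolation case) yield the result. Your remarks on the role of strict convexity in making $a_{k'}$ well defined are handled separately in the paper (Lemma~\ref{lemma:local_segment_continuity}), but including them here does no harm.
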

\begin{proof}
    \begin{align*}
        \widetilde{f}^{lin\_int}(a_{k}) &= f(a_{k}) + (a_{k} - a_{k})\kappa_{1} = f(a_{k}), \\
        \\
        \widetilde{f}^{lin\_int}(a_{k+1}) &= f(a^{lin\_int}_{k'}) + (a_{k+1} - a^{lin\_int}_{k'})\kappa_{2} \\
        &= f(a^{lin\_int}_{k'}) + (a_{k+1} - a^{lin\_int}_{k'})\frac{f(a_{k+1}) - f(a^{lin\_int}_{k'})}{a_{k+1} - a^{lin\_int}_{k'}} \\
        &= f(a_{k+1}),
    \end{align*}
    \begin{align*}
    \widetilde{f}^{pie\_tan}(a_k) &= f(a_{k}) + f_{+}'(a_{k})(a_k - a_{k}) = f(a_{k}), \\
    \\
    \widetilde{f}^{pie\_tan}(a_{k+1}) &= f(a_{k+1}) + f_{-}'(a_{k+1})(a_{k+1} - a_{k+1}) = f(a_{k+1}).
\end{align*}
\end{proof}

\begin{lemma}\label{lemma:local_segment_continuity}
    Let $\left[a_{k}, a_{k+1}\right] \in \mathbb{R}$ be a closed interval with $a_{k+1} > a_{k}$, and let $f: \left[a_{k}, a_{k+1}\right] \to \mathbb{R}$ be twice continuously differentiable, monotonically increasing, and strictly convex (or concave) on $\left[a_{k}, a_{k+1}\right]$. Then, the intermediate points lie strictly within the interval $\left[a_{k}, a_{k+1}\right]$, $a_{k} < a^{{lin\_int}}_{k'} < a_{k+1}$ and $a_{k} < a^{{pie\_tan}}_{k'} < a_{k+1}$, and the functions of linear interpolation and piecewise tangent approximation are continuous on $\left[a_{k}, a_{k+1}\right]$.
\end{lemma}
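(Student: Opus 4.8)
The plan is to establish the two assertions separately---first the strict interiority of the two intermediate points, then the continuity of the two approximants---handling the linear-interpolation construction and the piecewise-tangent construction in turn, since the former is essentially immediate while the latter carries the real content.

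For the linear-interpolation point, recall that $a^{lin\_int}_{k'} = (a_{k} + a_{k+1})/2$, so $a_{k} < a^{lin\_int}_{k'} < a_{k+1}$ holds trivially from $a_{k+1} > a_{k}$. For continuity, I would note that $\widetilde{f}^{lin\_int}$ consists of two affine pieces on $[a_{k}, a^{lin\_int}_{k'}]$ and $[a^{lin\_int}_{k'}, a_{k+1}]$, each continuous, so it suffices to check agreement at the junction $a^{lin\_int}_{k'}$. Evaluating the left piece there gives $f(a_{k}) + (a^{lin\_int}_{k'} - a_{k})\kappa_{1} = f(a^{lin\_int}_{k'})$ by the definition of $\kappa_{1}$, while the right piece evaluates to $f(a^{lin\_int}_{k'})$ by inspection; the two agree, establishing continuity.

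The substantive step is the interiority of the piecewise-tangent point. I would introduce the two tangent lines $T_{k}(\tau) = f(a_{k}) + f_{+}'(a_{k})(\tau - a_{k})$ and $T_{k+1}(\tau) = f(a_{k+1}) + f_{-}'(a_{k+1})(\tau - a_{k+1})$, whose intersection is by definition $a^{pie\_tan}_{k'}$, and consider their difference $g(\tau) = T_{k}(\tau) - T_{k+1}(\tau)$. The key tool is the supporting-line property of a strictly convex function: $T_{p}(\tau) \leq f(\tau)$ with equality only at $\tau = a_{p}$ for $p \in \{k, k+1\}$. Evaluating at the endpoints then yields $g(a_{k}) = f(a_{k}) - T_{k+1}(a_{k}) > 0$ and $g(a_{k+1}) = T_{k}(a_{k+1}) - f(a_{k+1}) < 0$, both strict since $a_{k} \neq a_{k+1}$. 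Because strict convexity forces $f'$ to be strictly increasing, the slope $f_{+}'(a_{k}) - f_{-}'(a_{k+1})$ of the affine function $g$ is nonzero, so $g$ is strictly monotone and has a unique zero, which by the intermediate value theorem lies strictly in $(a_{k}, a_{k+1})$; this zero is precisely $a^{pie\_tan}_{k'}$. The strictly concave case is symmetric: tangent lines now lie strictly above $f$, the inequalities on $g$ reverse, and $f'$ is strictly decreasing, again yielding a unique interior root.

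Continuity of $\widetilde{f}^{pie\_tan}$ then follows immediately, as it is affine on each of $[a_{k}, a^{pie\_tan}_{k'}]$ and $[a^{pie\_tan}_{k'}, a_{k+1}]$, and the two pieces coincide at the junction $a^{pie\_tan}_{k'}$ exactly because that point was defined as the intersection of the two tangent lines. The main obstacle throughout is the interiority of $a^{pie\_tan}_{k'}$; once the supporting-line barrier property and the strict monotonicity of $f'$ are in place, the sign analysis of $g$ closes the argument, and the continuity claims reduce to the routine junction checks above. I would also remark that the hypothesis of monotone increase of $f$ is not actually needed for \emph{this} lemma---strict convexity or concavity alone drives both the interiority and the continuity---even though monotonicity is invoked elsewhere to guarantee that the approximants are valid bounds.
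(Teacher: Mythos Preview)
Your proposal is correct, and the overall architecture matches the paper's: the linear-interpolation claims are dispatched trivially via the midpoint definition, and the continuity checks reduce to verifying that the two affine pieces agree at the junction. The interesting divergence is in how you establish $a_{k} < a^{pie\_tan}_{k'} < a_{k+1}$. The paper works directly with the explicit formula for $a^{pie\_tan}_{k'}$: it writes the convexity inequality $(a_{k}-a_{k+1})f'_{-}(a_{k+1}) < f(a_{k})-f(a_{k+1}) < (a_{k}-a_{k+1})f'_{+}(a_{k})$, adds $f'_{-}(a_{k+1})a_{k+1} - f'_{+}(a_{k})a_{k}$ throughout, and divides by the (signed) denominator $f'_{-}(a_{k+1}) - f'_{+}(a_{k})$ to sandwich the quotient between $a_{k}$ and $a_{k+1}$. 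Your argument instead packages the same supporting-line inequality geometrically: you form $g = T_{k} - T_{k+1}$, read off strict sign changes at the endpoints from the tangent-below property, note $g$ is affine with nonzero slope by strict convexity, and invoke the intermediate value theorem. The two arguments are equivalent in content---the paper's starting inequality \emph{is} the supporting-line property---but your version sidesteps the algebraic manipulation and makes the role of strict convexity (both for the sign change and for the nonvanishing slope) more transparent. Your closing remark that monotonicity of $f$ is not used in this lemma is also correct and worth noting; the paper does not comment on this.
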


\begin{proof}
For linear interpolation, the statement follows immediately from the definition. Specifically, for the midpoint interpolation, we have
    \begin{align*}
        a^{{lin\_int}}_{k'} &= \frac{a_{k} + a_{k+1}}{2}, \\
        a_{k} &< \frac{a_{k} + a_{k+1}}{2} < a_{k+1}.
    \end{align*}
For piecewise tangent approximation, we define $a^{{pie\_tan}}_{k'}$ as
    \begin{flalign*}
        a^{{pie\_tan}}_{k'} &= \frac{f(a_{k}) - f(a_{k+1}) - \left(f_{+}'(a_{k})a_{k} - f_{-}'(a_{k+1})a_{k+1}\right)}{f_{-}'(a_{k+1}) - f_{+}'(a_{k})}.
    \end{flalign*}
Consider first the case where $f$ is convex on $\left[a_{k}, a_{k+1}\right]$. By the convexity of $f$, we have the inequality
    \[
        (a_{k} - a_{k+1}) f'_{-}(a_{k+1}) < f(a_{k}) - f(a_{k+1}) < (a_{k} - a_{k+1}) f'_{+}(a_{k}).
    \]
Adding the terms $f'_{-}(a_{k+1})a_{k+1} - f'_{+}(a_{k})a_{k}$ to each part of the inequality, we get
    \begin{align*}
        a_{k} f'_{-}(a_{k+1}) - a_{k} f'_{+}(a_{k}) &<  f(a_{k}) - f(a_{k+1}) - f'_{+}(a_{k})a_{k} + f'_{-}(a_{k+1})a_{k+1} \\
        &< a_{k+1} f'_{-}(a_{k+1}) - a_{k+1} f'_{+}(a_{k}).
    \end{align*}
    Since the denominator of $a^{{pie\_tan}}_{k'}$, i.e., $f'_{-}(a_{k+1}) - f'_{+}(a_{k})$, is strictly positive by convexity, dividing the entire inequality by this denominator yields
    \[
        a_{k} < a^{{pie\_tan}}_{k'} < a_{k+1}.
    \]
In the case where $f$ is concave on $\left[a_{k}, a_{k+1}\right]$, we have a similar inequality:
    \[
        (a_{k+1} - a_{k}) f'_{-}(a_{k+1}) < f(a_{k+1}) - f(a_{k}) < (a_{k+1} - a_{k}) f'_{+}(a_{k}).
    \]
    Adding the terms $f'_{+}(a_{k})a_{k} - f'_{-}(a_{k+1})a_{k+1}$ to each part of the inequality, we get
    \begin{align*}
        a_{k} f'_{+}(a_{k}) - a_{k} f'_{-}(a_{k+1}) &< f(a_{k+1}) - f(a_{k}) + f'_{+}(a_{k})a_{k} - f'_{-}(a_{k+1})a_{k+1} \\
        &< a_{k+1} f'_{+}(a_{k}) - a_{k+1} f'_{-}(a_{k+1}).
    \end{align*}  
    Since the denominator $f'_{-}(a_{k+1}) - f'_{+}(a_{k})$ of $a^{{pie\_tan}}_{k'}$ is strictly negative for concave functions, dividing the entire inequality by this negative denominator yields
    \[
        a_{k} < a^{{pie\_tan}}_{k'} < a_{k+1}.
    \]

    Next, we show that the interpolation functions are continuous at the intermediate point. For linear interpolation, we check the continuity by verifying that the two parts meet at $a^{{lin\_int}}_{k'}$. We have
    \begin{align*}
        f(a^{{lin\_int}}_{k'}) &= f(a^{{lin\_int}}_{k'}) + (a^{{lin\_int}}_{k'} - a^{{lin\_int}}_{k'})\kappa_2 \stackrel{\text{right}}{=} \widetilde{f}^{{lin\_int}}(a^{{lin\_int}}_{k'}), \\
        &\stackrel{\text{left}}{=} f(a_{k}) + (a^{{lin\_int}}_{k'} - a_{k})\kappa_1 = f(a_{k}) + (a^{{lin\_int}}_{k'} - a_{k}) \frac{f(a^{{lin\_int}}_{k'}) - f(a_{k})}{a^{{lin\_int}}_{k'} - a_{k}} = f(a^{{lin\_int}}_{k'}).
    \end{align*}
    Thus, the two parts of the linear interpolation meet at $a^{{lin\_int}}_{k'}$, and $\widetilde{f}^{{lin\_int}}$ is continuous at $a^{{lin\_int}}_{k'}$.

    For piecewise tangent approximation, we check that the left and right parts meet at $a^{{pie\_tan}}_{k'}$. From the left, we have
    \[
        \widetilde{f}^{{pie\_tan}}(\tau) = f(a_k) + f'_{+}(a_k)(\tau - a_k), \quad \tau \in [a_k, a^{{pie\_tan}}_{k'}].
    \]
    Substituting $\tau = a_{k'}$, we get
    \[
        \widetilde{f}^{{pie\_tan}}(a^{-}_{k'}) = f(a_k) + f'_{+}(a_k)(a^{-}_{k'} - a_k).
    \]
    From the right, we have
    \[
        \widetilde{f}^{{pie\_tan}}(\tau) = f(a_{k+1}) + f'_{-}(a_{k+1})(\tau - a_{k+1}), \quad \tau \in [a^{{pie\_tan}}_{k'}, a_{k+1}].
    \]
    Substituting $\tau = a_{k'}$, we get
    \[
        \widetilde{f}^{{pie\_tan}}(a^{+}_{k'}) = f(a_{k+1}) + f'_{-}(a_{k+1})(a^{+}_{k'} - a_{k+1}).
    \]
    Setting these equal, we have
    \[
        f(a_k) + f'_{+}(a_k)(a_{k'} - a_k) = f(a_{k+1}) + f'_{-}(a_{k+1})(a_{k'} - a_{k+1}).
    \]
Solving for $a_{k'}$, we obtain

\[
    a_{k'} = \frac{f(a_k) - f(a_{k+1}) - (a_k f_{+}'(a_k) - a_{k+1} f_{-}'(a_{k+1}))}{f_{-}'(a_{k+1}) - f_{+}'(a_k)} = a^{pie\_tan}_{k'}.
\]
This confirms the continuity of \( \widetilde{f}^{pie\_tan}(\tau) \) at \( a^{pie\_tan}_{k'} \). 

\end{proof}

\begin{lemma}\label{lemma:monotonic_estimator}
    Let \(a_{k+1} > a_{k}\) and \(\left[a_{k}, a_{k+1}\right] \subset \mathbb{R}\), and let \( f: \left[a_{k}, a_{k+1}\right] \to \mathbb{R} \) be a twice continuously differentiable, monotonically increasing, and strictly convex (or strictly concave) function on \(\left[a_{k}, a_{k+1}\right]\). Then, the estimating functions defined by linear interpolation \(\widetilde{f}^{{lin\_int}}\) and piecewise tangent approximation \(\widetilde{f}^{{pie\_tan}}\) are non-decreasing on \(\left[a_{k}, a_{k+1}\right]\).
\end{lemma}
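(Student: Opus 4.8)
The plan is to treat the two approximators separately and, in each case, reduce the claim to the elementary fact that a continuous piecewise-linear function whose linear pieces all have non-negative slope is non-decreasing. Two ingredients from Lemma~\ref{lemma:local_segment_continuity} do most of the structural work: the intermediate point obeys $a_k < a_{k'} < a_{k+1}$, so both linear pieces are genuine (nondegenerate), and each approximator is continuous at $a_{k'}$, so the two pieces join without a downward jump.

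For the linear interpolation $\widetilde{f}^{lin\_int}$, I would examine the two slopes $\kappa_1 = (f(a_{k'}) - f(a_k))/(a_{k'} - a_k)$ and $\kappa_2 = (f(a_{k+1}) - f(a_{k'}))/(a_{k+1} - a_{k'})$ directly. Since $f$ is monotonically increasing and $a_k < a_{k'} < a_{k+1}$, both numerators and both denominators are non-negative (strictly positive if $f$ is strictly increasing), so $\kappa_1, \kappa_2 \geq 0$. Each piece is thus non-decreasing.

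For the piecewise tangent approximation $\widetilde{f}^{pie\_tan}$, the two pieces have slopes equal to the one-sided derivatives $f'_{+}(a_k)$ and $f'_{-}(a_{k+1})$. Because $f$ is monotonically increasing and differentiable on $[a_k, a_{k+1}]$, its derivative is non-negative throughout, so in particular $f'_{+}(a_k) \geq 0$ and $f'_{-}(a_{k+1}) \geq 0$, and again each piece is non-decreasing. Note that this step uses only the \emph{monotonicity} of $f$, not its convexity or concavity; convexity/concavity was needed in the companion lemmas to locate $a_{k'}$ strictly inside the interval and to establish continuity, but it plays no role in the sign of the slopes here.

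The common closing step is to upgrade per-piece monotonicity to global monotonicity on $[a_k, a_{k+1}]$: given $x_1 < x_2$, if they lie in the same piece the conclusion is immediate from the non-negative slope, and if they straddle $a_{k'}$ I would chain $\widetilde{f}(x_1) \leq \widetilde{f}(a_{k'}) \leq \widetilde{f}(x_2)$, using the per-piece inequalities together with the continuity of the approximator at $a_{k'}$. There is no genuine obstacle in this lemma; the only point deserving care is to keep straight which hypothesis (monotonicity versus convexity) is actually being invoked at each stage, so that the slope-sign argument is not accidentally attributed to the curvature assumption.
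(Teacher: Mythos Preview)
Your proposal is correct and follows essentially the same route as the paper: show each linear piece has non-negative slope (from the monotonicity of $f$ for both $\kappa_1,\kappa_2$ and the one-sided derivatives), then invoke Lemma~\ref{lemma:local_segment_continuity} for the placement of $a_{k'}$ and continuity to glue the pieces into a global non-decreasing function. The only difference is that you spell out the chaining argument across $a_{k'}$ explicitly, which the paper leaves implicit.
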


\begin{proof}
    We prove that both local estimators are non-decreasing functions.

    \textit{Case 1: Linear Interpolation.}  
    The slopes of the linear interpolation segments are given by
    \[
        \kappa_{1} = \frac{f(a^{{lin\_int}}_{k'}) - f(a_{k})}{a^{{lin\_int}}_{k'} - a_{k}} > 0, \quad 
        \kappa_{2} = \frac{f(a_{k+1}) - f(a^{{lin\_int}}_{k'})}{a_{k+1} - a^{{lin\_int}}_{k'}} > 0.
    \]
    Since the original function \( f \) is non-decreasing and \( a_{k} < a^{{lin\_int}}_{k'} < a_{k+1} \), as established by Lemma \ref{lemma:local_segment_continuity}, both slopes are non-negative, ensuring that the interpolated function is non-decreasing.

    \textit{Case 2: Piecewise Tangent Approximation.}  
    The derivatives of both the left and right segments of the piecewise tangent approximation are non-negative due to the increasing nature of the approximated function. Furthermore, by Lemma~\ref{lemma:local_segment_continuity}, we have \( a_{k} < a^{{pie\_tan}}_{k'} < a_{k+1} \), and the approximation remains continuous everywhere. Since both segments are non-decreasing linear functions, their combination also results in a non-decreasing estimator over \(\left[a_{k}, a_{k+1}\right]\).

    Thus, both estimation methods preserve the monotonicity of \( f \).
\end{proof}

\begin{lemma}
    Let $\left[\underline{a},\overline{a}\right] \subset \mathbb{R}$ be a closed interval, and let \( f: \left[\underline{a},\overline{a}\right] \to \mathbb{R} \) be a continuous function satisfying \( f \in \mathcal{C}^{2}_{p.w.}( \left[\underline{a},\overline{a}\right]) \). Assume that there exist points \( \underline{a} = a_{1} < a_{2} < \dots < a_{n+1} = \overline{a} \) for some \( n \in \mathbb{N} \) such that \( f \) is twice continuously differentiable, monotonically increasing, and either strictly convex, strictly concave, or linear on each subinterval \( \left[a_{k}, a_{k+1}\right] \) for \( 1 \leq k \leq n \). Then, the approximation method described in Section~\ref{sec:RelU_approx_algorithm} constructs a continuous, non-decreasing estimating function \( \widetilde{f}(x) \) over the entire interval \( \left[\underline{a},\overline{a}\right] \).
\end{lemma}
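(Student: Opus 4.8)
The plan is to establish the two claimed global properties---continuity and monotonicity of $\widetilde{f}$ on all of $[\underline{a}, \overline{a}]$---by gluing together the local results already proved in Lemmas~\ref{lemma:boundary_values}, \ref{lemma:local_segment_continuity}, and \ref{lemma:monotonic_estimator}. The crucial observation is that on each subinterval $[a_k, a_{k+1}]$ the estimator $\widetilde{f}$ coincides with one of three things: $f$ itself (linear case), $\widetilde{f}^{lin\_int}$, or $\widetilde{f}^{pie\_tan}$; and that all three preliminary lemmas hold for \emph{both} the interpolation and the tangent approximations. Hence, no matter which local approximator the algorithm selects (upper or lower, over a convex or concave piece), the restriction $\widetilde{f}\big|_{[a_k, a_{k+1}]}$ is continuous and non-decreasing on that subinterval and agrees with $f$ at both endpoints; in the linear case these three properties are immediate.

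First I would argue continuity. On each open subinterval $(a_k, a_{k+1})$, continuity of $\widetilde{f}$ follows directly from Lemma~\ref{lemma:local_segment_continuity} (or trivially in the linear case). It remains to verify continuity at the interior breakpoints $a_k$, $2 \le k \le n$. By Lemma~\ref{lemma:boundary_values}, the restriction of $\widetilde{f}$ to $[a_{k-1}, a_k]$ attains the value $f(a_k)$ at its right endpoint, while the restriction to $[a_k, a_{k+1}]$ attains $f(a_k)$ at its left endpoint. Since both one-sided limits equal the common value $f(a_k)$---well-defined because $f$ is continuous---the function $\widetilde{f}$ is continuous at each $a_k$, hence continuous on all of $[\underline{a}, \overline{a}]$.

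Next I would establish global monotonicity. Because $f$ is monotonically increasing on each subinterval and globally continuous, its breakpoint values satisfy $f(a_1) \le f(a_2) \le \cdots \le f(a_{n+1})$, and endpoint matching gives $\widetilde{f}(a_k) = f(a_k)$ for every $k$. Now take arbitrary $x_1 < x_2$ in $[\underline{a}, \overline{a}]$, say $x_1 \in [a_i, a_{i+1}]$ and $x_2 \in [a_j, a_{j+1}]$ with $i \le j$. If $i = j$, then $\widetilde{f}(x_1) \le \widetilde{f}(x_2)$ by local monotonicity (Lemma~\ref{lemma:monotonic_estimator}). If $i < j$, I would chain
\begin{align*}
\widetilde{f}(x_1) \le \widetilde{f}(a_{i+1}) = f(a_{i+1}) \le f(a_j) = \widetilde{f}(a_j) \le \widetilde{f}(x_2),
\end{align*}
using local monotonicity on $[a_i, a_{i+1}]$ for the first inequality, the monotonicity of $f$ at breakpoints ($a_{i+1} \le a_j$) for the middle, and local monotonicity on $[a_j, a_{j+1}]$ for the last. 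This shows $\widetilde{f}$ is non-decreasing on the whole interval.

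The argument is essentially bookkeeping, so there is no serious obstacle; the one point requiring care is to confirm that the three preliminary lemmas genuinely cover whichever local approximator the algorithm assigns to each segment---upper versus lower bound over convex versus concave pieces, plus the degenerate linear case---so that local continuity, endpoint-matching, and monotonicity are available uniformly across all segments before the gluing step is carried out.
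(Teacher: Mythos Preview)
Your proposal is correct and follows essentially the same approach as the paper: invoke Lemmas~\ref{lemma:boundary_values}, \ref{lemma:local_segment_continuity}, and \ref{lemma:monotonic_estimator} to obtain endpoint matching, local continuity, and local monotonicity on each segment, then glue across the breakpoints $a_k$. If anything, you are more explicit than the paper in writing out the chaining argument for global monotonicity, whereas the paper simply states that ``sequentially linking the estimators'' yields the continuous non-decreasing function.
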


\begin{proof}
    The claim follows from the following observations:
    \begin{itemize}
        \item[a)] Each subinterval \( \left[a_{k},a_{k+1}\right] \) is suitable for approximating \( f \) using either linear interpolation, piecewise tangent approximation, or local linear approximation.
        \item[b)] All the methods mentioned in (a) ensure that the estimator matches the original function at the boundary points of each subinterval, by Lemma~\ref{lemma:boundary_values}.
        \item[c)] The approximations described in (a) are continuous within their respective subintervals (see Lemma~\ref{lemma:local_segment_continuity}).
        \item[d)] The methods in (a) produce non-decreasing functions within each subinterval (see Lemma~\ref{lemma:monotonic_estimator}).
    \end{itemize} 
    By sequentially linking the estimators across all segments \( \{ [ a_{k}, a_{k+1} ] \} \), we obtain a continuous, non-decreasing, piecewise linear function \( \widetilde{f}(x) \) over \( \left[\underline{a},\overline{a}\right] \).
\end{proof}

\begin{lemma}[Image of the local estimator]\label{lemma:estimator_image}
        Let 
        $f: \left[\underline{a},\overline{a}\right] \rightarrow \real$ be continuous, $f \in \mathcal{C}^{2}_{p.w.}( \left[\underline{a},\overline{a}\right])$ with $\underline{a} = a_{1} < a_{2} < \ldots < a_{n+1} = \overline{a}$ for $n \in \nat$, such that $f\big|_{\left[a_{k},a_{k+1}\right]}$  is twice continuously differentiable, monotonic increasing and either strictly convex (concave), or linear on $\left[a_{k}, a_{k+1}\right]$ for $1 \leq k \leq n$. Then the image of the estimating function $\widetilde{f}(x)$ defined by the approximation method in Section~\ref{sec:RelU_approx_algorithm} coincides with the image of the target function $f$, that is  $f(\left[\underline{a},\overline{a}\right])$ = $\widetilde{f}(\left[\underline{a},\overline{a}\right])$.
\end{lemma}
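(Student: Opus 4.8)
The plan is to reduce the equality of images to a statement about endpoint values, by exploiting that both $f$ and $\widetilde{f}$ are continuous and non-decreasing on the whole interval. The key elementary fact I would invoke is: for a continuous non-decreasing function $g$ on a closed interval $[\underline{a},\overline{a}]$, the image is exactly the closed interval $[g(\underline{a}), g(\overline{a})]$ — connectedness together with the intermediate value theorem forces the image to be an interval, and monotonicity pins its two endpoints. Thus it suffices to check that $f$ and $\widetilde{f}$ are each globally continuous and non-decreasing, and that they agree at the two global endpoints $\underline{a} = a_{1}$ and $\overline{a} = a_{n+1}$.

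First I would record that $f$ itself is continuous and non-decreasing on all of $[\underline{a},\overline{a}]$: it is continuous by hypothesis and monotonically increasing on each subinterval $[a_{k},a_{k+1}]$, and since consecutive subintervals share the node $a_{k+1}$, these monotone pieces splice into a globally non-decreasing function. Hence $f([\underline{a},\overline{a}]) = [f(\underline{a}), f(\overline{a})]$. Next I would establish the same two properties for $\widetilde{f}$ by assembling the preceding lemmas: Lemma~\ref{lemma:local_segment_continuity} gives continuity of each local piece (linear interpolation or piecewise tangent) on its subinterval; Lemma~\ref{lemma:boundary_values} gives $\widetilde{f}(a_{k}) = f(a_{k})$ at every node, so adjacent pieces match at shared endpoints and $\widetilde{f}$ is continuous across the whole interval; and Lemma~\ref{lemma:monotonic_estimator} gives that each local piece is non-decreasing, which together with the boundary matching yields a globally non-decreasing $\widetilde{f}$. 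This is precisely the conclusion of the lemma immediately preceding, so $\widetilde{f}([\underline{a},\overline{a}]) = [\widetilde{f}(\underline{a}), \widetilde{f}(\overline{a})]$.

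Finally, applying Lemma~\ref{lemma:boundary_values} at the left endpoint $a_{1} = \underline{a}$ of the first subinterval and at the right endpoint $a_{n+1} = \overline{a}$ of the last subinterval gives $\widetilde{f}(\underline{a}) = f(\underline{a})$ and $\widetilde{f}(\overline{a}) = f(\overline{a})$. Substituting these into the two interval descriptions yields $\widetilde{f}([\underline{a},\overline{a}]) = [f(\underline{a}), f(\overline{a})] = f([\underline{a},\overline{a}])$, which is the claim.

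I do not expect a genuine obstacle here: the substantive work (segment-wise continuity, monotonicity, and exact boundary interpolation) is already carried by Lemmas~\ref{lemma:boundary_values}--\ref{lemma:monotonic_estimator}, so the remaining argument is a short assembly. The one point to be careful about is that the same partition $\{a_{k}\}$ underlies both the construction of $\widetilde{f}$ and the boundary-matching lemma; this guarantees that there is no interior node at which $\widetilde{f}$ could jump, and hence (being non-decreasing with $\widetilde{f}(\underline{a}) = f(\underline{a})$ and $\widetilde{f}(\overline{a}) = f(\overline{a})$) $\widetilde{f}$ stays trapped in $[f(\underline{a}), f(\overline{a})]$ while, by continuity and the intermediate value theorem, covering all of it.
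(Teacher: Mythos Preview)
Your proposal is correct and follows essentially the same approach as the paper: both $f$ and $\widetilde{f}$ are continuous and monotone on $[\underline{a},\overline{a}]$ and agree at the endpoints (via Lemmas~\ref{lemma:boundary_values} and~\ref{lemma:local_segment_continuity}), so each has image $[f(\underline{a}),f(\overline{a})]$. Your write-up is slightly more detailed in explicitly invoking Lemma~\ref{lemma:monotonic_estimator} and the intermediate value theorem, but the argument is the same.
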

\begin{proof}
    Since function $f$ is continuous on a compact $\left[\underline{a},\overline{a}\right]$, and monotonic, then it maps $\left[\underline{a},\overline{a}\right]$ into the closed interval (compact) $\left[f(\underline{a}),f(\overline{a})\right] \in \real$. But the estimator $\widetilde{f}$ is also continuous monotonic function $\left[\underline{a},\overline{a}\right]$, and by Lemmas~\ref{lemma:boundary_values}, \ref{lemma:local_segment_continuity}, $f(\underline{a}) = \widetilde{f}(\underline{a})$ and $f(\overline{a}) =\widetilde{f}(\overline{a})$. That is why, the ranges of values of $f$ and $\widetilde{f}$ on $\left[\underline{a},\overline{a}\right]$ coincide and equal to $\left[f(\underline{a}),f(\overline{a})\right] \in \real$.
\end{proof}

\begin{theorem}[Convergence of piecewise linear interpolation~\cite{Driscoll_Braun_2018}, Ch.5]\label{thm:lin_int_uniform_convergence}
     Suppose that $f(x)$ has a continuous second derivative in $[a_{k}, a_{k+1}]$, that is $f \in C^2([a_{k}, a_{k+1}])$. Let $p_{n_{int}}(x)$ be the piecewise linear interpolant of $(a_{k_{i}}, f(a_{k_{i}}))$ for $i = 0, \dots, n_{int}$, where 
\[ a_{k_{i}} = a_{k} + i h, \quad h = \frac{a_{k+1} - a_{k}}{n_{int}}. \]
Then, the error bound satisfies
\[ \| f - p_{n_{int}} \|_\infty = \max_{x \in [a_{k}, a_{k+1}]} |f(x) - p_{n_{int}}(x)| \leq M h^2, \]
where
\[ M = \max_{\left[a_{k}, a_{k+1}\right]}f''(x)\]
\end{theorem}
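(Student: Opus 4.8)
The plan is to reduce the global sup-norm estimate to a single interpolation panel and then invoke the classical linear-interpolation remainder. Since $p_{n_{int}}$ is built panel-by-panel and is affine on each subinterval $[a_{k_i}, a_{k_{i+1}}]$ with $a_{k_{i+1}} - a_{k_i} = h$, the error $\|f - p_{n_{int}}\|_\infty$ over $[a_k, a_{k+1}]$ is the maximum over $i$ of the sup-norm error over the single panel $[a_{k_i}, a_{k_{i+1}}]$. Hence it suffices to bound $|f(x) - p(x)|$ on one panel, where $p$ denotes the affine function agreeing with $f$ at the two endpoints $a_{k_i}$ and $a_{k_{i+1}}$.

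On a fixed panel I would set $e(x) = f(x) - p(x)$, so that by construction $e(a_{k_i}) = e(a_{k_{i+1}}) = 0$. The heart of the argument is the standard remainder identity. Fixing an interior point $x$, I introduce the auxiliary function
$$g(t) = e(t) - e(x)\,\frac{(t - a_{k_i})(t - a_{k_{i+1}})}{(x - a_{k_i})(x - a_{k_{i+1}})},$$
which vanishes at the three points $a_{k_i}$, $a_{k_{i+1}}$, and $x$. Applying Rolle's theorem on the two subintervals determined by $x$ produces two distinct zeros of $g'$, and a further application of Rolle to $g'$ yields a point $\xi \in (a_{k_i}, a_{k_{i+1}})$ with $g''(\xi) = 0$. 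Since $p$ is affine (so $p'' = 0$) and the bracketed quadratic in $t$ has constant second derivative $2$, differentiating the definition of $g$ twice and evaluating at $\xi$ gives
$$f(x) - p(x) = \frac{f''(\xi)}{2}\,(x - a_{k_i})(x - a_{k_{i+1}}).$$

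It then remains to bound the two factors. Because $f \in C^2$ on the compact interval, $|f''(\xi)| \le M$ with $M = \max_{[a_k, a_{k+1}]} f''$, while the quadratic factor satisfies $|(x - a_{k_i})(x - a_{k_{i+1}})| \le h^2/4$, the maximum being attained at the panel midpoint. Combining these estimates yields $|f(x) - p(x)| \le \tfrac{M}{2}\cdot\tfrac{h^2}{4} = \tfrac{M h^2}{8} \le M h^2$; taking the maximum over $x$ in the panel and over all panels $i$ delivers the claimed bound $\|f - p_{n_{int}}\|_\infty \le M h^2$ (indeed with the sharper constant $1/8$, which the stated looser bound absorbs).

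The only genuinely delicate step is the remainder identity, namely the two-fold application of Rolle's theorem to the auxiliary function $g$; the endpoint vanishing of $e$ together with the constant second derivative of the interpolating data quadratic are exactly what make this go through, and the rest is a routine maximization of a quadratic over an interval. Since the result is classical, one could alternatively simply cite \cite{Driscoll_Braun_2018}, but the self-contained Rolle argument above is short and keeps the bound transparent.
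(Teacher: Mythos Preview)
Your argument is correct and is the standard proof of the linear-interpolation error bound: localise to a single panel, derive the remainder formula $f(x)-p(x)=\tfrac{1}{2}f''(\xi)(x-a_{k_i})(x-a_{k_{i+1}})$ via the auxiliary function and two applications of Rolle, then maximise the quadratic factor to get $Mh^2/8\le Mh^2$. There is nothing to fault in the logic.

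The comparison with the paper is brief: the paper does not prove this theorem at all. It is stated with a citation to \cite{Driscoll_Braun_2018}, Chapter~5, and the text moves directly on to Theorem~\ref{thm:pie_tan_uniform_convergence}. Your write-up therefore supplies a self-contained argument where the paper simply invokes an external reference; in that sense you have done more than was required, and your closing remark that one could alternatively just cite the source is exactly what the authors chose to do.
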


\begin{theorem}[Convergence of piecewise tangent approximation]\label{thm:pie_tan_uniform_convergence}
    Suppose that $f \in C^2([a_{k}, a_{k+1}])$ and is strictly convex (or concave) in $[a_{k}, a_{k+1}]$. Let $\widetilde{f}^{pie\_tan}_{n_{tan}}(x)$ be the piecewise tangent approximator over subsegments $\left[a_{k_{i}}, a_{k_{i+1}}\right]$ for $i = 0, \dots, n_{tan}$, where 
\[ a_{k_{i}} = a_{k} + i h, \quad h = \frac{a_{k+1} - a_{k}}{n_{tan}}. \]
Then, the error bound satisfies
\[ \| f - \widetilde{f}^{pie\_tan}_{n_{tan}} \|_\infty = \max_{x \in [a_{k}, a_{k+1}]} |f(x) - \widetilde{f}^{pie\_tan}_{n_{tan}}(x)| \leq M h^2, \]
where
\[ M = \max_{\left[a_{k}, a_{k+1}\right]}f''(x)\]
\end{theorem}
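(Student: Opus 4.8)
The plan is to reduce the uniform error bound to a pointwise Taylor-remainder estimate on each of the two tangent pieces that constitute the approximant on a single subinterval, and then to take a maximum. Without loss of generality I would treat the strictly convex case; the concave case follows by applying the whole argument to $-f$, which is strictly convex, with $M$ read as $\max|f''|$ (this coincides with $\max f''$ precisely when $f$ is convex, as in the convex branch).

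First I would recall the structure of $\widetilde{f}^{pie\_tan}_{n_{tan}}$ on a generic subsegment $[a_{k_{i}}, a_{k_{i+1}}]$ of length $h$. It is the left tangent line $T_L(x) = f(a_{k_{i}}) + f'(a_{k_{i}})(x - a_{k_{i}})$ on $[a_{k_{i}}, a_{k'}]$ and the right tangent line $T_R(x) = f(a_{k_{i+1}}) + f'(a_{k_{i+1}})(x - a_{k_{i+1}})$ on $[a_{k'}, a_{k_{i+1}}]$, where $a_{k'}$ is their intersection. By strict convexity each tangent line lies weakly below $f$, and Lemma~\ref{lemma:local_segment_continuity} guarantees $a_{k_{i}} < a_{k'} < a_{k_{i+1}}$. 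Moreover $f'(a_{k_{i}}) \leq f'(a_{k_{i+1}})$ and $T_L(a_{k_{i}}) = f(a_{k_{i}}) \geq T_R(a_{k_{i}})$, so $T_L \geq T_R$ to the left of $a_{k'}$ and the order reverses to the right; hence $\widetilde{f}^{pie\_tan}_{n_{tan}} = \max(T_L, T_R) \leq f$ on the subsegment. Consequently the error $f - \widetilde{f}^{pie\_tan}_{n_{tan}}$ is nonnegative and equals $f - T_L$ on the left piece and $f - T_R$ on the right piece.

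The analytic heart is then a single application of Taylor's theorem with Lagrange remainder at each tangent point. For $x \in [a_{k_{i}}, a_{k'}]$ there is $\xi$ between $a_{k_{i}}$ and $x$ with $f(x) - T_L(x) = \tfrac{1}{2} f''(\xi)(x - a_{k_{i}})^2$; since $0 \leq x - a_{k_{i}} \leq a_{k'} - a_{k_{i}} \leq h$ and $0 \leq f''(\xi) \leq M$, this is at most $\tfrac{1}{2} M h^2$. The symmetric estimate on $[a_{k'}, a_{k_{i+1}}]$, using $|x - a_{k_{i+1}}| \leq a_{k_{i+1}} - a_{k'} \leq h$, yields the same bound. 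Taking the maximum over the two pieces and over all subsegments $i = 0, \ldots, n_{tan}-1$ gives $\|f - \widetilde{f}^{pie\_tan}_{n_{tan}}\|_\infty \leq \tfrac{1}{2} M h^2 \leq M h^2$, which is the asserted inequality (with room to spare, as in the companion linear-interpolation bound of Theorem~\ref{thm:lin_int_uniform_convergence}).

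I expect the only real obstacle to be the bookkeeping around the junction point $a_{k'}$: confirming that under convexity the approximant is genuinely $\max(T_L, T_R)$, so that on each side it is an \emph{exact} tangent line and the error is therefore an exact first-order Taylor remainder rather than a remainder of some interpolant. This is what lets me invoke the clean $\tfrac{1}{2} f''(\xi)(\cdot)^2$ formula instead of a looser interpolation estimate. The remaining care is purely in sign conventions, namely reading $M$ as $\max|f''|$ in the concave branch so the stated bound is not vacuous; the underlying fact that a tangent line approximates a $C^2$ function to second order is entirely standard.
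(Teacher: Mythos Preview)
Your proposal is correct and follows essentially the same route as the paper: both recognize each tangent piece as a first-order Taylor polynomial about the relevant endpoint, invoke the Lagrange remainder to get $\tfrac{1}{2}f''(\xi)(x-a_{k_i})^2$ (resp.\ $(x-a_{k_{i+1}})^2$), bound the quadratic factor by $h^2$ and $f''(\xi)$ by $M$, and then maximize over pieces and subsegments. Your version is a bit more careful in two places---you justify explicitly that the approximant is $\max(T_L,T_R)$ so that the error is an exact Taylor remainder, and you flag that $M$ must be read as $\max|f''|$ in the concave branch---but these are refinements of the same argument, not a different one.
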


\begin{proof}
Each element of the piecewise tangent approximation is the Taylor series expansion of the first order around the boundary point of the subsegment. We consider the double Taylor series approximation on the refinement $\left[a_{k_{i}}, a_{k_{i+1}}\right]$ of the segment $[a_{k}, a_{k+1}]$ for $i = 0, \dots, n_{int}$, where 
\[ a_{k_{i}} = a_{k} + i h, \quad h = \frac{a_{k+1} - a_{k}}{n_{tan}}. \]
    Since for the twice continuously differentiable in $\left[a_{k_{i}}, a_{k_{i+1}}\right]$ the Lagrange Remainder of the Taylor series expansion~\cite{Rudin1976}, which represents an error term, can be bounded with 
    \[\max_{\left[a_{k_{i}}, a^{pie\_tan}_{k_{i}'}\right]}\left[f(x) - \widetilde{f}^{pie\_tan}(x)\right] \leq M_{i_{1}}\frac{(a^{pie\_tan}_{k_{i}'} - a_{k_{i}})^{2}}{2} \leq M_{i_{1}}\frac{(a_{k_{i+1}} - a_{k_{i}})^{2}}{2},\] 
    \[\max_{\left[a^{pie\_tan}_{k_{i}'}, a_{k_{i+1}}\right]}\left[f(x) - \widetilde{f}^{pie\_tan}(x)\right] \leq M_{i_{2}}\frac{(a_{k_{i+1}} - a^{pie\_tan}_{k_{i}'})^{2}}{2}\leq M_{i_{2}}\frac{(a_{k_{i+1}} - a_{k_{i}})^{2}}{2}\] where
    \[M_{i_{1}} = \max_{\left[a_{k_{i}}, a^{pie\_tan}_{k_{i+1}'}\right]}f''(x) \leq \max_{\left[a_{k}, a_{k+1}\right]}f''(x)  = M\]
    \[M_{i_{2}} = \max_{\left[a^{pie\_tan}_{k_{i}'}, a_{k_{i+1}}\right]}f''(x) \leq \max_{\left[a_{k}, a_{k+1}\right]}f''(x)  = M\]
    The maximum $M$ exists and is attainable due to the continuity of the second derivative on a compact $[a_{k}, a_{k+1}]$. That is, the maximum error on the whole segment of approximation can be bounded as
    \[M\left[\frac{a_{k+1} - a_{k}}{n_{tan}}\right]^{2} = Mh^{2} \xrightarrow[n_{tan} \rightarrow \infty]{} 0\]
\end{proof}

The following lemma demonstrates that the approximation procedure presented in Section \ref{sec:RelU_approx_algorithm} generates sequences of estimators that:
i) serve as valid bounds for the target function, and
ii) converge monotonically to the target function. This implies that each new estimator can only improve upon the previous one.

\begin{lemma}\label{lemma:monotonic_sequence}

Suppose that $f \in C^2([a_{k}, a_{k+1}])$ is monotonic increasing and strictly convex (or concave) on $[a_{k}, a_{k+1}]$.
     Let $\widetilde{f}^{lin\_int}_{n}(x)$ be the piecewise linear interpolant and $\widetilde{f}^{pie\_tan}_{n}(x)$ be the piecewise tangent approximator over subsegments $\left[a_{k_{i}}, a_{k_{i+1}}\right]$ for $i = 0, \dots, 2^{n}$, where 
\[ a_{k_{i}} = a_{k} + i h, \quad h = \frac{a_{k+1} - a_{k}}{2^{n}}, \] with $n \in \nat$. Then the estimating functions defined by the linear interpolation $\widetilde{f}^{lin\_int}_{n}$ and piecewise tangent approximation $\widetilde{f}^{pie\_tan}_{n}$ define upper (lower) and lower (upper), respectively, bounds on the target function $f$. Moreover, $\{\widetilde{f}^{lin\_int}_{n}(x)\}_{n}$ and $\{\widetilde{f}^{pie\_tan}_{n}(x)\}_{n}$ are non-increasing (non-decreasing) and non-decreasing (non-increasing) sequences, respectively.
\end{lemma}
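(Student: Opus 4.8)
The plan is to prove the two assertions separately—(i) the bounding property and (ii) the monotonicity of the refining sequences—carrying out the strictly convex case in full and reducing the strictly concave case to it by applying every statement to $-f$. Throughout I would exploit that the dyadic partitions are \emph{nested}: since $h=(a_{k+1}-a_{k})/2^{n}$ halves when $n$ increases by one, the node set at level $n$ is a subset of the node set at level $n+1$, each level-$n$ subsegment being bisected at level $n+1$. This nesting is the structural fact that drives the monotonicity. Lemmas~\ref{lemma:boundary_values} and \ref{lemma:local_segment_continuity} guarantee that the pieces join continuously and interpolate $f$ at the nodes, so both global estimators are well defined.

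For the bounding property (i) on a convex segment, I would invoke the two defining inequalities of convexity. Each piece of $\widetilde{f}^{lin\_int}_{n}$ is a secant chord joining two points of the graph of $f$, and convexity places the chord at or above $f$ on its subsegment; hence $\widetilde{f}^{lin\_int}_{n}(x)\ge f(x)$ and $\widetilde{f}^{lin\_int}_{n}$ is an upper bound. Dually, each piece of $\widetilde{f}^{pie\_tan}_{n}$ is a tangent line, which for a convex function lies at or below $f$ globally; hence $\widetilde{f}^{pie\_tan}_{n}(x)\le f(x)$ and $\widetilde{f}^{pie\_tan}_{n}$ is a lower bound.

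For the monotonicity (ii), I would treat the two families separately. For the interpolant I would argue subsegment by subsegment: on a level-$n$ subsegment $[a,b]$ the level-$(n+1)$ interpolant replaces the single chord by the two chords through the bisection vertex $(m,f(m))$ with $m=(a+b)/2$; convexity gives $f(m)\le\tfrac12(f(a)+f(b))$, so the new vertex lies on or below the old chord, and since the two functions are linear on each half and coincide at $a,b$, the refined interpolant lies on or below the coarse one. Summing over subsegments yields $\widetilde{f}^{lin\_int}_{n+1}\le\widetilde{f}^{lin\_int}_{n}$, a non-increasing sequence. For the tangent approximation I would first establish the characterization that, for convex $f$, the piecewise tangent approximation equals the \emph{pointwise maximum} of the tangent lines taken at all nodes: writing $\ell_{t}(x)=f(t)+f'(t)(x-t)$, the map $t\mapsto\ell_{t}(x)$ has $t$-derivative $f''(t)(x-t)$, hence increases for $t<x$ and decreases for $t>x$, so on $[a_{k_{i}},a_{k_{i+1}}]$ the maximum over all nodes is attained at the two bracketing nodes, which is precisely the two-tangent construction. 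Since refining enlarges the node set, the maximum over the larger set dominates, giving $\widetilde{f}^{pie\_tan}_{n+1}\ge\widetilde{f}^{pie\_tan}_{n}$, a non-decreasing sequence. The concave case follows verbatim from the convex case applied to $-f$, which flips every inequality and swaps the roles of the two estimators, matching the parenthetical statements.

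I expect the main obstacle to be the tangent step, specifically establishing rigorously that the two-tangent-per-subsegment construction coincides with the global maximum of all node tangents, and in particular that tangents anchored at distant nodes never exceed this maximum inside a given subsegment. The monotonicity-in-$t$ computation for $\ell_{t}(x)$ is the crux that removes this difficulty; once it is in place, the comparison ``maximum over a growing index set'' is immediate. Finally, combining (i) and (ii) with the uniform error bound $Mh^{2}\to0$ from Theorems~\ref{thm:lin_int_uniform_convergence} and \ref{thm:pie_tan_uniform_convergence} shows each sequence is monotone and convergent to $f$, the form in which the result feeds into Theorem~\ref{thm:relu_estimator_convergence}.
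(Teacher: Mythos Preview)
Your proof is correct. For part (i) and for the linear-interpolant half of part (ii) your argument is essentially the paper's: convex chords lie above, convex tangents lie below, and bisecting a secant drops the new vertex $(m,f(m))$ to or below the old chord.

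Where you genuinely diverge is in the piecewise-tangent monotonicity. You characterise $\widetilde{f}^{pie\_tan}_{n}(x)=\max_{t\in\text{nodes}_n}\ell_t(x)$ via the unimodality computation $\partial_t\ell_t(x)=f''(t)(x-t)$, after which nestedness of the dyadic node sets gives $\widetilde{f}^{pie\_tan}_{n+1}\ge\widetilde{f}^{pie\_tan}_{n}$ in one line. The paper instead works directly with the geometry of the refinement: starting from the two-tangent construction on $[t_1,t_2]$ with crossing point $t_{1'}$, it inserts a middle tangent at $t_*$, producing new crossings $t_{1*},t_{2*}$, and must then show $t_{1*}\le t_{1'}\le t_{2*}$. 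This is done by differentiating the crossing-point map
\[
f_{\text{left}}(x)=\frac{xf'_-(x)-f(x)-C}{f'_-(x)-K},\qquad C=t_1f'_+(t_1)-f(t_1),\ K=f'_+(t_1),
\]
and checking that $f_{\text{left}}'(x)=f''(x)\bigl(f(x)-\ell_{t_1}(x)\bigr)/(f'(x)-K)^2>0$, so sliding the right anchor leftward slides the crossing leftward. Your route sidesteps this entire computation; the paper's route has the virtue of tracking the intersection points explicitly, which matches how the algorithm is actually specified in Section~\ref{sec:RelU_approx_algorithm}. Both are valid, yours is shorter.
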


\begin{proof}
    By the definition of the convex (concave) function, \[f(\alpha x_1 + (1-\alpha) x_{2}) \leq(\geq) \alpha f(x_{1}) + (1-\alpha)f(x_{2})\] for any $\alpha \in \left[0, 1\right]$ and $x_{1}, x_{2}$ from the region of convexity, and plot of the linear interpolant between any $x_{1}, x_{2}$ lies above (below) the plot of the function. That is, linear interpolation is always an upper (lower) approximation of the convex (concave) function. 

    On the other hand, any tangent line lies below (above) the plot of the convex (concave) function. Indeed, since on a convex segment the derivative of the function increases, that is $f'_{+}(a_{k_{i}}) \leq f'(x) \leq f'_{-}(a_{k_{i+1}})$ for all $x \in \left[a_{k_{i}}, a_{k_{i+1}}\right]$,  then 
\begin{align*}
    f(x) = f(a_{k_{i}}) + \int\limits_{a_{k_{i}}}^{x}f'(t)dt &\geq f(a_{k_{i}}) + \int\limits_{a_{k_{i}}}^{x}f'_{+}(a_{k_{i}})dt \\&= f(a_{k_{i}}) + f'_{+}(a_{k_{i}})(x - a_{k_{i}}),  \hspace{0.2cm} x \in \left[a_{k_{i}}, a_{k_{i}'}\right]
\end{align*}
\begin{align*}
    f(x) = f(a_{k_{i+1}}) - \int\limits_{a_{k_{i+1}}}^{x}f'(t)dt &\geq f(a_{k_{i+1}}) - \int\limits_{a_{k_{i+1}}}^{x}f'_{-}(a_{k_{i+1}})dt \\&= f(a_{k_{i+1}}) + f'_{-}(a_{k_{i+1}})(x - a_{k_{i+1}}), \hspace{0.2cm} x \in \left[a_{1'}, a_{k_{i+1}}\right]\\
\end{align*}

Similarly, we can show that the piecewise tangent upper bounds the true concave function.
Without loss of generality, we consider the case of a convex segment. We fix $n$. The current element of the sequences of piecewise linear interpolants $\widetilde{f}^{lin\_int}_{n}(x)$ includes the local linear item based on the interval \(\left[t_{1}, t_{2}\right]\). The case of a concave segment is analogous. We define the current local linear approximation of the element of sequence of upper approximation as 
$$\widetilde{f}^{lin\_int}_{\{t\}_{n}}(x) = f(t_{1}) + (x - t_{1})\frac{f(t_2) - f(t_1)}{t_2 - t_1}$$
Let  $\alpha$ be  such that $0 \leq \alpha\ \leq 1$ and define a new point $t$ as a convex combination $t = \alpha t_{1} + (1 - \alpha)t_{2}$. Let us show that $\widetilde{f}^{lin\_int}_{\{t\}_{n+1}}(x)(x) \leq \widetilde{f}^{lin\_int}_{\{t\}_{n}}(x)$ for $x \in \left[t_{1}, t_{2}\right]$, where
$$
\widetilde{f}^{lin\_int}_{\{t\}_{n+1}}(x) = \begin{cases}
f(t_{1}) + (x - t_{1})\frac{f(t) - f(t_1)}{t - t_1},  & x \in \left[t_{1}, t\right]\\
\\
f(t) + (x - t)\frac{f(t_{2}) - f(t)}{t_2 - t},  & x \in \left[t, t_{2}\right]
\end{cases}
$$
Taking into account the convex segment, 
\begin{align*}
    f(t_{1}) + (x - t_{1})\frac{f(t) - f(t_1)}{t - t_1} & = f(t_{1}) + (x - t_{1})\frac{f(\alpha t_{1} + (1 - \alpha)t_2) - f(t_1)}{\alpha t_{1} + (1 - \alpha)t_2 - t_1} \\
    & \leq f(t_{1}) + (x - t_{1})\frac{\alpha f( t_{1}) + (1 - \alpha)f(t_2) - f(t_1)}{(1 - \alpha)(t_2 - t_1)} \\
    & = f(t_{1}) + (x - t_{1})\frac{f(t_2) -  f(t_1)}{t_2 - t_1}\\
    \\
    f(t) + (x - t)\frac{f(t_2) - f(t)}{t_{2} - t} & = f(\alpha t_{1} + (1 - \alpha)t_2) \\ &+ (x - \alpha t_{1} + (1 - \alpha)t_2)\frac{f(t_2) - f(\alpha t_{1} + (1 - \alpha)t_2)}{t_2 - \alpha t_{1} + (1 - \alpha)t_2} \\
    & \leq 
    \alpha f(t_{1}) + (1 - \alpha)f(t_{2}) \\ &+ (x - t_{1})\frac{\alpha f( t_{1}) + (1 - \alpha)f(t_2) - f(t_2)}{\alpha t_{1} + (1 - \alpha)t_2 - t_2} \\ &+ (t_{1} - \alpha t_{1} + (1 - \alpha)t_2)\frac{\alpha f( t_{1}) + (1 - \alpha)f(t_2) - f(t_2)}{\alpha t_{1} + (1 - \alpha)t_2 - t_2}  \\
    & = f(t_{1}) + (x - t_{1})\frac{f(t_2) -  f(t_1)}{t_2 - t_1}\\
\end{align*}
Since the refinement on each subsegment leads to the reduced next element of the sequence, the sequence is decreasing on the whole convex segment.

We next  consider the piecewise tangent approximation on $\left[t_{1}, t_{2}\right]$,
\[
\widetilde{f}^{pie\_tan}_{\{t\}_{n}}(x) = 
\begin{cases}
f(t_{1}) + f'_{+}(t_{1})(x - t_{1}),  & x \in \left[t_{1}, t_{1'}\right],\\
f(a_{2}) + f'_{-}(t_{2})(x - t_{2}),  & x \in \left[t_{1'}, t_{2}\right],
\end{cases}
\]
where \( t_{1'} \) is the point of intersection of the tangents. 
If we choose some parameter \( \alpha \), where \( 0 \leq \alpha \leq 1 \), and define the corresponding intermediate point as \( t_{*} = \alpha t_{1} + (1 - \alpha) t_{2} \), then the refined approximation is given by:
\[
\widetilde{f}^{pie\_tan}_{\{t\}_{n+1}}(x) = 
\begin{cases}
f(t_{1}) + f'_{+}(t_{1})(x - t_{1}),  & x \in \left[t_{1}, t_{1*}\right],\\
f(t_{*}) + f'(t_{*})(x - t_{*}),  & x \in \left[t_{1*}, t_{2*}\right],\\
f(t_{2}) + f'_{-}(t_{2})(x - t_{2}),  & x \in \left[t_{2*}, t_{2}\right],
\end{cases}
\]
where \( t_{1*} \) is the intersection point of the left and middle lines, and \( t_{2*} \) is the intersection point of the middle and right lines. We aim to show that $\widetilde{f}^{pie\_tan}_{\{t\}_{n+1}}(x)(x) \geq \widetilde{f}^{pie\_tan}_{\{t\}_{n}}(x)$ for $x \in \left[t_{1}, t_{2}\right]$.

Thus, the plot of the linear function corresponding to the middle curve lies above that of the left curve for \( x > t_{1*} \) and above that of the right curve for \( x < t_{2*} \) due to the monotonicity of the estimator, by Lemma \ref{lemma:monotonic_estimator}. Consequently, the refined estimator \( \underline{f}_{2} \) coincides with the previous estimator \( \underline{f}_{1} \) on the left and right segments, i.e., for \( x \in \left[t_{1}, t_{1*}\right] \) and \( x \in \left[t_{2*}, t_{2}\right] \), while it takes higher values for \( x \in \left[t_{1*}, t_{2*}\right] \). To confirm this, it remains to show that \( t_{1*} \leq t_{1'} \leq t_{2*} \).

First, we note that \( t_{1} \leq t_{1'} \leq t_{2} \), by Lemma \ref{lemma:local_segment_continuity}. This automatically leads to $t_{1} \leq t_{1*} \leq t_{*} \leq t_{2*} \leq t_{2}$.

Consider the function  
\[
f_{\text{left}}(x) = \frac{x f'_{-}(x) - f(x) - C}{f'_{-}(x) - K},
\]
defined on \( (t_{1}, t_{2}) \), where \( C = t_{1} f'_{+}(t_{1}) - f(t_{1}) \) and \( K = f'_{+}(t_{1}) \). Its derivative is given by  
\[
f'_{\text{left}}(x) = \frac{f''(x) (f(x) + C - Kx)}{(f'(x) - K)^{2}}.
\]
The numerator simplifies to  
\[
f''(x) (f(x) + C - Kx) = \underbrace{f''(x)}_{>0} \underbrace{\left[f(x) - (f(t_{1}) + f'_{+}(t_{1}) (x - t_{1}))\right]}_{>0},
\]
which is positive due to the convexity of \( f \). This implies that shifting the right boundary \( t_{2} \) to the left, reaching position \( t_{*} \), also shifts the intersection point \( t_{1'} \) to the left, reaching \( t_{1*} \). A similar argument holds for the right boundary.  \\

The same reasoning applies to the concave segment.
\end{proof}

\begin{lemma}\label{lemma:composition_uniform_convergence}
    Let $f_{n}: \mathcal{A}_{f} \rightarrow \mathcal{A}_{g}$ be continuous functions, uniformly convergent to a continuous function $f: \mathcal{A}_{f} \rightarrow \mathcal{A}_{g}$ on a compact interval $\mathcal{A}_{f} \subset \real$, and let $g_{n}: \mathcal{A}_{g} \rightarrow \mathcal{A}$ be continuous functions, uniformly convergent to a continuous function $g: \mathcal{A}_{g} \rightarrow \mathcal{A}$ on a compact interval $\mathcal{A}_{g}  \subset \real$. Then the sequence of composition functions $g_{n}(f_{n}(x))$ converges uniformly to $g(f(x))$ on $\mathcal{A}_{f}$ with $n \rightarrow \infty$.
\end{lemma}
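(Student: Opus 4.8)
The plan is to bound $|g_n(f_n(x)) - g(f(x))|$ uniformly over $x \in \mathcal{A}_f$ by inserting the intermediate term $g(f_n(x))$ and applying the triangle inequality:
\[
|g_n(f_n(x)) - g(f(x))| \leq |g_n(f_n(x)) - g(f_n(x))| + |g(f_n(x)) - g(f(x))|.
\]
I would then show that each of the two summands converges to zero uniformly in $x$, and conclude by combining the two estimates.

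For the first summand, I would note that $f_n(x) \in \mathcal{A}_g$ for every $x \in \mathcal{A}_f$, so that
\[
|g_n(f_n(x)) - g(f_n(x))| \leq \sup_{y \in \mathcal{A}_g}|g_n(y) - g(y)|,
\]
and the right-hand side tends to $0$ by the hypothesized uniform convergence of $g_n \to g$ on $\mathcal{A}_g$. The key point is that this bound does not depend on $x$, so the convergence is uniform on $\mathcal{A}_f$.

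The second summand is where the main work lies, and it is the step I expect to be the chief obstacle, since pointwise continuity of $g$ alone does not suffice; what is needed is a modulus of continuity for $g$ that is uniform over its whole domain. The resolution is to exploit the compactness of $\mathcal{A}_g$: since $g$ is continuous on the compact interval $\mathcal{A}_g$, it is uniformly continuous by the Heine--Cantor theorem. Hence for any $\epsilon > 0$ there exists $\delta > 0$ such that $|y - y'| < \delta$ implies $|g(y) - g(y')| < \epsilon$ for all $y, y' \in \mathcal{A}_g$. By the uniform convergence $f_n \to f$ on $\mathcal{A}_f$, there is an $N$ such that $|f_n(x) - f(x)| < \delta$ for all $n \geq N$ and all $x \in \mathcal{A}_f$; since both $f_n(x)$ and $f(x)$ lie in $\mathcal{A}_g$, this yields $|g(f_n(x)) - g(f(x))| < \epsilon$ uniformly in $x$.

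Combining the two estimates, for $n$ sufficiently large both summands are below $\epsilon$ simultaneously for every $x$, so $\sup_{x \in \mathcal{A}_f}|g_n(f_n(x)) - g(f(x))| \to 0$, which is the claimed uniform convergence. The only delicate ingredient is the upgrade from continuity of $g$ to uniform continuity, which is exactly where compactness of $\mathcal{A}_g$ is invoked; compactness of $\mathcal{A}_f$ enters implicitly by guaranteeing that the images $f_n(x)$ stay in the compact set $\mathcal{A}_g$ on which the uniform convergence $g_n \to g$ is assumed.
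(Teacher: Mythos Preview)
Your proposal is correct and follows essentially the same approach as the paper: the same triangle-inequality split via the intermediate term $g(f_n(x))$, uniform convergence of $g_n\to g$ for the first summand, and Heine--Cantor (compactness of $\mathcal{A}_g$) to upgrade continuity of $g$ to uniform continuity for the second. The paper's write-up differs only cosmetically in the $\epsilon$-bookkeeping (using $\epsilon_1/2$ for each piece).
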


\begin{proof}

An outer limit function, \( g \), is uniformly continuous by the Heine–Cantor theorem \cite{Rudin1976}, since it is continuous and defined on the compact set \( \mathcal{A}_{g} \). That is, for any \( \epsilon_{1} > 0 \), there exists \( \epsilon_{2} > 0 \) such that
\[
|g(y_{1}) - g(y_{2})| < \frac{\epsilon_{1}}{2}, \quad \text{whenever } y_{1}, y_{2} \in \mathcal{A}_{g} \text{ and } |y_{1} - y_{2}| < \epsilon_{2}.
\]
Since the sequence \( \{ f_{n}(x) \} \) converges uniformly to \( f(x) \) on \( \mathcal{A}_{f} \), and \( \{ g_{n}(y) \} \)converges uniformly to \( g(y) \) on \( \mathcal{A}_{g} \), we can conclude that for any \( \epsilon_{1} > 0 \) and \( \epsilon_{2} > 0 \), there exists \( N \in \mathbb{N} \) such that for all \( n \geq N \), we simultaneously have
\begin{align*}
    &|g_{n}(y) - g(y)| < \frac{\epsilon_{1}}{2}, &\text{ for all } y \in \mathcal{A}_{g},\\
    &|f_{n}(x) - f(x)| < \epsilon_{2}, &\text{ for all } x \in \mathcal{A}_{f}.
\end{align*}
Since the range of possible values of \( f_{n}(x) \) coincides with the range of values of \( f(x) \), which equals \( f_{i}(\mathcal{A}_{f}) = \mathcal{A}_{g}\), the domain of \( g \), we obtain
\begin{align*}
    |g_{n}(f_{n}(x)) - g(f(x))| &= |g_{n}(f_{n}(x)) - g(f_{n}(x)) + g(f_{n}(x)) - g(f(x))|  \\
    & \leq \underbrace{|g_{n}(f_{n}(x)) - g(f_{n}(x))|}_{\text{uniform convergence of the outer}} + \underbrace{|g(f_{n}(x)) - g(f(x))|}_{\text{uniform continuity of the outer}} \\
    & < \frac{\epsilon_{1}}{2} + \frac{\epsilon_{1}}{2} = \epsilon_{1}
\end{align*}
Thus, the uniform convergence of the composition follows.
    
\end{proof}

\begin{lemma}\label{uniform_convergence_lin_comb}
    Let $f^{i}_{n}: \mathcal{A}_{i} \rightarrow \real$ be continuous functions that converge uniformly  to continuous functions $f^{i}: \mathcal{A}_{i} \rightarrow  \real$ on compacts $\mathcal{A}_{i} \subset \real$, and let $\alpha_{i} \in \real$ be constants for $i = 1,\ldots, n_{l}$. Then a linear combination of sequences, $f_{ \boldsymbol{\alpha},{n}} =  \sum_{i}^{n_{l}}\alpha_{i}f^{i}_{n}$, defined on the direct product $\mathcal{A} = \mathcal{A}_{1} \times \ldots \times \mathcal{A}_{n_{l}} \subset \real^{n_{l}}$, converges uniformly to $f_{\boldsymbol{\alpha}} =\sum_{i}^{n_{l}}\alpha_{i}f^{i}: \mathcal{A} \rightarrow \real$, where $\boldsymbol{\alpha} = (\alpha_{1}, \ldots, \alpha_{n_{l}})$; that is,
    \[
    \sup_{\boldsymbol{x} \in \mathcal{A}}|f_{\boldsymbol{\alpha},n}(\boldsymbol{x}) - f_{\boldsymbol{\alpha}}(\boldsymbol{x})| \xrightarrow[n \rightarrow \infty]{} 0.
    \]
    The images of $f_{ \boldsymbol{\alpha},{n}}(\mathcal{A})$ and $f_{\boldsymbol{\alpha}}(\mathcal{A})$ are compact.
\end{lemma}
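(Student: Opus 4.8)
The plan is to prove two claims: the uniform convergence of the linear combination and the compactness of the images. I will handle them in this order, since the convergence statement is the substantive one and compactness follows quickly from continuity.

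For the uniform convergence, the strategy is a direct triangle-inequality estimate exploiting the finiteness of the sum. Since each sequence $f^{i}_{n}$ converges uniformly to $f^{i}$ on $\mathcal{A}_{i}$, for any $\epsilon > 0$ and each $i \in \{1,\ldots,n_{l}\}$, I can choose $N_{i} \in \nat$ so that $\sup_{x_{i} \in \mathcal{A}_{i}} |f^{i}_{n}(x_{i}) - f^{i}(x_{i})| < \epsilon / (n_{l} \max_{i}(|\alpha_{i}| + 1))$ for all $n \geq N_{i}$. Taking $N = \max_{i} N_{i}$, for any point $\boldsymbol{x} = (x_{1},\ldots,x_{n_{l}}) \in \mathcal{A}$ and all $n \geq N$, I estimate
\begin{align*}
|f_{\boldsymbol{\alpha},n}(\boldsymbol{x}) - f_{\boldsymbol{\alpha}}(\boldsymbol{x})|
&= \Bigl| \sum_{i=1}^{n_{l}} \alpha_{i}\bigl(f^{i}_{n}(x_{i}) - f^{i}(x_{i})\bigr) \Bigr| \\
&\leq \sum_{i=1}^{n_{l}} |\alpha_{i}|\, \sup_{x_{i} \in \mathcal{A}_{i}} |f^{i}_{n}(x_{i}) - f^{i}(x_{i})| < \epsilon.
\end{align*}
Since the bound is uniform over $\boldsymbol{x} \in \mathcal{A}$, taking the supremum over $\mathcal{A}$ and letting $n \to \infty$ gives the claimed convergence. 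The only subtlety to state carefully is that each $f^{i}_{n}$ depends on its own coordinate $x_{i}$, so the supremum over $\mathcal{A}$ decouples into independent suprema over the factors $\mathcal{A}_{i}$; this is exactly why working on the direct product causes no trouble.

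For the compactness of the images, the argument is that $\mathcal{A} = \mathcal{A}_{1} \times \cdots \times \mathcal{A}_{n_{l}}$ is compact as a finite product of compact intervals (by Tychonoff, or simply because a finite product of compact subsets of $\real$ is a closed bounded subset of $\real^{n_{l}}$), and both $f_{\boldsymbol{\alpha},n}$ and $f_{\boldsymbol{\alpha}}$ are continuous on $\mathcal{A}$, being finite linear combinations of continuous functions. The continuous image of a compact set is compact, so $f_{\boldsymbol{\alpha},n}(\mathcal{A})$ and $f_{\boldsymbol{\alpha}}(\mathcal{A})$ are compact subsets of $\real$.

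I do not anticipate a genuine obstacle here, as the result is a routine consequence of the finiteness of the index set combined with uniform convergence. The one place requiring a small amount of care is the coordinate bookkeeping: each summand lives on a distinct factor of the product domain, so I must keep the arguments $x_{i}$ indexed correctly and confirm that uniform convergence on each factor yields uniform control of the full sum on the product. This amounts to choosing a common threshold $N$ over the finitely many indices and absorbing the constants $|\alpha_{i}|$ into the error budget, which the estimate above accomplishes directly.
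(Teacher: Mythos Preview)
Your proposal is correct and follows essentially the same approach as the paper: a triangle-inequality estimate with a common threshold $N$ for the uniform convergence, and continuity on a compact product domain for the image compactness. Your version is in fact slightly more careful in using $\max_i(|\alpha_i|+1)$ rather than $\max_i \alpha_i$ in the denominator, which avoids a possible division by zero and handles signs correctly.
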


\begin{proof}

Since $f_{ \boldsymbol{\alpha},{n}}(\mathcal{A})$ and $f_{ \boldsymbol{\alpha}}(\mathcal{A})$ are linear combinations of continuous functions, defined on compact sets, they are continuous functions. Also, since a direct product of compact sets is compact, by the continuous mapping theorem  \cite{Rudin1976}, the images of  $f_{ \boldsymbol{\alpha},{n}}(\mathcal{A})$ and $f_{\boldsymbol{\alpha}}(\mathcal{A})$ are also compact.

If every sequence of functions $\{f^{i}_{n}(x^{i})\}$ converges uniformly to the corresponding limit function $f^{i}(x^{i})$, then for any $\epsilon > 0$, there exists an integer $N \in \mathbb{N}$ such that for all $n \geq N$, we have  
\[
\sup_{x^{i} \in \mathcal{A}_{i}} \big|f_{n}^{i}(x^{i}) - f^{i}(x^{i})\big| < \frac{\epsilon}{\max_{i} \{\alpha_{i}\} n_{l}}
\]
for all \( i = 1, \dots, n_{l} \).  Consequently, the supremum of the differences in the linear combination can be bounded as  
\begin{align*}
\sup_{\boldsymbol{x} \in \mathcal{A}} \big| f_{\boldsymbol{\alpha},n}(\boldsymbol{x}) - f_{\boldsymbol{\alpha}}(\boldsymbol{x}) \big| 
&= \sup_{\boldsymbol{x} \in \mathcal{A}} \bigg| \sum\limits_{i = 1}^{n_l} \alpha_{i} (f^{i}_{n}(x^{i}) - f^{i}(x^{i})) \bigg| \\
&\leq \sum\limits_{i = 1}^{n_l} \big| \alpha_{i} \big| \sup_{x^{i} \in \mathcal{A}_{i}} \big| f^{i}_{n}(x^{i}) - f^{i}(x^{i}) \big| < \epsilon
\end{align*}
This establishes the uniform convergence of the linear combination of functions.

\end{proof}

\begin{lemma}\label{lemma:bounds_on_linear_comb}
    Let $\overline{f}^{i}, \underline{f}^{i}, f^{i}: \mathcal{A}_{i} \rightarrow \real$ be continuous functions on compact intervals $\mathcal{A}_{i} \subset \mathbb{R}$, satisfying  
    \[
    \underline{f}^{i}(x^{i}) \leq f^{i}(x^{i}) \leq \overline{f}^{i}(x^{i})
    \]
    for all \( x^i \in \mathcal{A}_i \) and for every \( i = 1, \ldots, n_l \).  

    Suppose that the index sets \( I \) and \( J \) are disjoint and their union forms the full sequence:
    \[
    I \cup J = \{1, \dots, n_l\}.
    \]
    Then, for any coefficients \( \alpha_{i}, \beta_{j} \geq 0 \) for $i \in I$, $j \in J$, the following inequality holds:
    \[
    \sum\limits_{i \in I} \alpha_{i} \underline{f}^{i}(x^{i}) - \sum\limits_{j \in J} \beta_{j} \overline{f}^{j}(x^{j}) 
    \leq \sum\limits_{i \in I} \alpha_{i} f_{i}(x^{i}) - \sum\limits_{j \in J} \beta_{j} f_{j}(x^{j}) 
    \leq \sum\limits_{i \in I} \alpha_{i} \overline{f}^{i}(x^{i}) - \sum\limits_{j \in J} \beta_{j} \underline{f}^{j}(x^{j}),
    \]
    for all \( x^{i} \in \mathcal{A}_{i} \) and \( x^{j} \in \mathcal{A}_{j} \).
\end{lemma}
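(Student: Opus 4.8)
The plan is to prove the two-sided inequality by bounding each summand of the two linear combinations separately according to the sign of its coefficient, and then summing the resulting pointwise inequalities. No appeal to continuity or compactness is needed here; the statement is purely an order-algebra fact about nonnegative and nonpositive scalings of the given envelope inequalities $\underline{f}^{i}(x^{i}) \leq f^{i}(x^{i}) \leq \overline{f}^{i}(x^{i})$.

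First I would treat the block indexed by $I$. For each $i \in I$, since $\alpha_{i} \geq 0$, multiplying the hypothesis by $\alpha_{i}$ preserves both inequalities, so that $\alpha_{i}\underline{f}^{i}(x^{i}) \leq \alpha_{i} f^{i}(x^{i}) \leq \alpha_{i}\overline{f}^{i}(x^{i})$ for all $x^{i} \in \mathcal{A}_{i}$. Next I would treat the block indexed by $J$, which is the only place requiring care. For each $j \in J$, the relevant scalar is $-\beta_{j} \leq 0$, and multiplying the hypothesis by $-\beta_{j}$ \emph{reverses} both inequalities, giving $-\beta_{j}\overline{f}^{j}(x^{j}) \leq -\beta_{j} f^{j}(x^{j}) \leq -\beta_{j}\underline{f}^{j}(x^{j})$. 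This sign reversal is exactly what interchanges the roles of the upper and lower envelopes for the subtracted terms, and it accounts for $\overline{f}^{j}$ appearing on the left end of the claimed chain and $\underline{f}^{j}$ on the right end.

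Finally I would add the $|I|+|J|$ per-index chains termwise. Because finite sums of inequalities sharing the same direction preserve that direction, summing the lower ends over $i \in I$ and $j \in J$ produces $\sum_{i \in I}\alpha_{i}\underline{f}^{i}(x^{i}) - \sum_{j \in J}\beta_{j}\overline{f}^{j}(x^{j})$, the middle terms produce $\sum_{i \in I}\alpha_{i} f^{i}(x^{i}) - \sum_{j \in J}\beta_{j} f^{j}(x^{j})$, and the upper ends produce $\sum_{i \in I}\alpha_{i}\overline{f}^{i}(x^{i}) - \sum_{j \in J}\beta_{j}\underline{f}^{j}(x^{j})$. Since $I$ and $J$ are disjoint with $I \cup J = \{1,\dots,n_{l}\}$, these sums range over the full index set without overlap, and each per-index inequality holds pointwise, so the final chain holds for every admissible $(x^{i})_{i \in I}$ and $(x^{j})_{j \in J}$.

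The main (indeed only) obstacle is bookkeeping: one must track the sign reversal in the $J$-block carefully so that the correct envelope is assigned to each side of the final inequality. Everything else is routine monotonicity of multiplication by a scalar of fixed sign and of finite summation, so I would not expect any genuine difficulty beyond stating these two elementary monotonicity facts and combining them.
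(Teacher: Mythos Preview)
Your proposal is correct and follows essentially the same approach as the paper's proof: scale the per-index envelope inequalities by $\alpha_i \geq 0$ (preserving direction) and by $-\beta_j \leq 0$ (reversing direction), then sum over all indices. The paper's version is simply a terser rendition of exactly this argument.
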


\begin{proof}
For any non-negative coefficients \( \alpha_{i}, \beta_{j} \) and given that \( \underline{f}_{i}(x^{i}) \leq f_{i}(x^{i}) \leq \overline{f}_{i}(x^{i}) \), we derive the following inequalities:
\begin{align*}
    \alpha_{i} \underline{f}^{i}(x^{i}) &\leq \alpha_{i} f^{i}(x^{i})  \leq \alpha_{i} \overline{f}^{i}(x^{i}), \\
    -\beta_{j} \overline{f}^{j}(x^{j}) &\leq -\beta_{j} f^{j}(x^{j}) \leq -\beta_{j} \underline{f}^{j}(x^{j}).
\end{align*}
Summing these inequalities over all indices completes the proof.
\end{proof}

\begin{lemma}\label{lemma:bounds_on_composition}
    Let $\overline{f}, \underline{f}, f: \mathcal{A} \rightarrow \mathcal{B}$ be continuous functions on a compact interval $\mathcal{A} \subset \mathbb{R}$, satisfying  
    \[
    \underline{f}(x) \leq f(x) \leq \overline{f}(x) \quad \text{for all } x \in \mathcal{A}.
    \]
    Furthermore, let $\overline{g}, \underline{g}, g: \mathcal{B} \rightarrow \mathbb{R}$ be continuous and monotonically increasing functions on a compact interval $\mathcal{B} \subset \real$, satisfying  
    \[
    \underline{g}(y) \leq g(y) \leq \overline{g}(y) \quad \text{for all } y \in \mathcal{B}.
    \]
    Then, for all \( x \in \mathcal{A} \), the following inequality holds:
    \[
    \underline{g}(\underline{f}(x)) \leq g(f(x)) \leq \overline{g}(\overline{f}(x)).
    \]
\end{lemma}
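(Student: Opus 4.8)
The plan is to prove the two bounds independently, each by chaining exactly two elementary inequalities: a pointwise order relation between functions and the monotonicity of the outer function. Fix an arbitrary $x \in \mathcal{A}$. Because $\underline{f}, f, \overline{f}$ all map into $\mathcal{B}$, which is the common domain of $g, \underline{g}, \overline{g}$, every composition appearing in the statement is well defined, so there is no domain issue to worry about and the entire argument is pointwise in $x$.

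For the upper bound I would argue as follows. Since $f(x) \le \overline{f}(x)$ and $g$ is monotonically increasing on $\mathcal{B}$, applying $g$ preserves the order, giving $g(f(x)) \le g(\overline{f}(x))$. Next, the hypothesis $g(y) \le \overline{g}(y)$ for all $y \in \mathcal{B}$, evaluated at the point $y = \overline{f}(x) \in \mathcal{B}$, yields $g(\overline{f}(x)) \le \overline{g}(\overline{f}(x))$. Concatenating the two inequalities gives $g(f(x)) \le \overline{g}(\overline{f}(x))$, as required. The lower bound is entirely symmetric: from $\underline{f}(x) \le f(x)$ together with the monotonicity of $g$ I obtain $g(\underline{f}(x)) \le g(f(x))$, and from $\underline{g}(y) \le g(y)$ evaluated at $y = \underline{f}(x)$ I obtain $\underline{g}(\underline{f}(x)) \le g(\underline{f}(x))$; chaining these gives $\underline{g}(\underline{f}(x)) \le g(f(x))$. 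Since $x$ was arbitrary, both bounds hold on all of $\mathcal{A}$.

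There is no serious obstacle here; the statement reduces to applying monotonicity in the correct direction, and the only thing to watch is the bookkeeping of which function plays the inner and which the outer role at each step, so that the single monotonicity hypothesis on $g$ is invoked with the right inequality orientation. I note that this particular chain uses only the monotonicity of the true outer function $g$; the stated monotonicity of $\underline{g}$ and $\overline{g}$ is not strictly needed for this inequality, though it is natural to assume since these are the piecewise-linear ReLU approximators of $g$ produced by the construction of Section~\ref{sec:RelU_approx_algorithm}. For completeness I would also remark that an equally valid alternative route for the upper bound is $g(f(x)) \le \overline{g}(f(x)) \le \overline{g}(\overline{f}(x))$, where the first step uses the pointwise bound at $y = f(x)$ and the second uses the monotonicity of $\overline{g}$; either route establishes the claim.
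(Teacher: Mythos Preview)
Your proof is correct and follows the same two-step chaining idea as the paper. The only cosmetic difference is the order of the two steps: the paper first applies the pointwise bound $\underline{g}\le g\le\overline{g}$ at $y=f(x)$ and then invokes the monotonicity of $\underline{g}$ and $\overline{g}$, whereas your primary route first uses the monotonicity of $g$ and then the pointwise bound; you already identify the paper's ordering as your ``alternative route,'' and your observation that the monotonicity of $\underline{g},\overline{g}$ is not strictly needed for your chain is accurate.
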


\begin{proof}
    By assumption, for any \( y \in \mathcal{B} \),   
    \[
    \underline{g}(y) \leq g(y) \leq \overline{g}(y).
    \]
    Since \( \underline{g}(y), g(y), \) and \( \overline{g}(y) \) are monotonically increasing, it follows that for any \( y_{1}, y_{2} \in \mathcal{B} \) such that \( y_{1} \leq y \leq y_{2} \), 
    \[
    \underline{g}(y_{1}) \leq \underline{g}(y) \leq g(y) \leq \overline{g}(y) \leq \overline{g}(y_{2}).
    \]
    Setting \( y = f(x) \), \( y_{1} = \underline{f}(x) \), and \( y_{2} = \overline{f}(x) \), and using that \( \underline{f}(x) \leq f(x) \leq \overline{f}(x) \) for all \( x \in \mathcal{A} \), we obtain the desired result:
    \[
    \underline{g}(\underline{f}(x)) \leq g(f(x)) \leq \overline{g}(\overline{f}(x)).
    \]
\end{proof}

\subsection{Proof of Theorem \ref{thm:relu_estimator_convergence}}\label{app:ReLU_bounds_convergence_theorem_proof}

We establish the uniform monotonic convergence of the bounds by considering several key properties.  

First, we analyze the approximation of neuron domains. Using the concept of over-approximating the input domain of each neuron (as in IBP~\cite{Gowaletal_2019}), we define \(\overline{\Omega}_i\) as a superset of the true input domain \(\Omega_i\) for each neuron \(i\). It is well known that if a sequence of approximations converges uniformly on \(\overline{\Omega}_i\), it must also converge uniformly on any subset of \(\overline{\Omega}_i\), including \(\Omega_i\). Therefore, we perform all neuron-wise approximations over the supersets of their original domains.  

By Lemma \ref{lemma:local_segment_continuity}, both the upper and lower bounds for each neuron in every layer are continuous functions over a bounded domain. Additionally, by Lemma \ref{lemma:estimator_image}, the images of these bounds coincide with the image of the activation function. Since the activation function and its estimators are continuous, and the domain is compact, the output range remains compact throughout the approximation process. This compactness follows from the continuity of the mapping \cite{Rudin1976}.  

Furthermore, the error between the linear interpolation (or piecewise tangent approximation) and the original activation function converges to zero as the approximation grid is refined, as established in Theorems \ref{thm:lin_int_uniform_convergence} and \ref{thm:pie_tan_uniform_convergence}. Consequently, the sequence of bounds on the activation function converges uniformly to the target activation function for all neurons in the network.  

By Lemma \ref{lemma:estimator_image}, the image of each activation function is preserved by both the upper and lower bounds, ensuring that the domain of uniform convergence for the bounds of the outer functions is also preserved. Moreover, by Lemma~\ref{uniform_convergence_lin_comb}, the linear combination of these estimators converges uniformly to the corresponding linear combination of the true activation functions, thereby preserving the image of the original linear combination. As a result, Lemma \ref{lemma:composition_uniform_convergence} guarantees the uniform convergence of the composed bounds on the outer functions and the linear combinations of the inner functions.

Next, we establish monotonicity. By Lemma \ref{lemma:monotonic_sequence}, the sequences of upper and lower bounds on the activation functions are monotonic: the upper bounds \(\{\overline{f}_n(x)\}\) are monotonically decreasing, while the lower bounds \(\{\underline{f}_n(x)\}\) are monotonically increasing. Additionally, by construction, these sequences are bounded by the target function itself, ensuring they remain within the correct range. By Lemma~\ref{lemma:bounds_on_linear_comb}, the linear combination of neurons' estimators forms the overall upper and lower bounds for input arguments in subsequent layers.  

Moreover, the compositions of the upper and lower bounds for the inner and outer functions provide valid upper and lower bounds for the composition of the target inner and outer activation functions. Since the outer activation function (and consequently its bounds, by Lemma \ref{lemma:monotonic_estimator}) is non-decreasing, Lemma \ref{lemma:bounds_on_composition} ensures these bounds are valid. Finally, these bounds converge monotonically by Lemmas \ref{lemma:monotonic_sequence} and \ref{lemma:bounds_on_composition}.  

Combining all the results above, we conclude that the sequences of upper and lower ReLU bounds for the network converge uniformly and monotonically to the target network. The monotonicity of the sequences, the uniform convergence of individual approximations, and the preservation of continuity and compactness through compositions collectively ensure the uniform convergence of the entire network. This completes the proof.

\section{Subnetwork equivalent for specific functions for upper/lower approximation}\label{app:NN_equivalent}


\subsection{Square}\label{app:NN_square}

The quadratic function $x^2$ is not monotone on an arbitrary interval.
But $x^2$ is monotonic on $\real_{\geq 0}$. We can modify the form of the function by representing it as a subnetwork to be a valid set of sequential monotonic operations.
Since $x^2 = |x|^2$, $x \in \real$, and the output range of $|x|$ is exactly $\real_{\geq 0}$, we  represent $|x|$ as a combination of monotonic $\mathrm{ReLU}$ functions, $|x| = \mathrm{ReLU}(x) + \mathrm{ReLU}(-x)$.
The resulting subnetwork is drawn in Figure~\ref{fig:NN_square}.

\begin{figure}[!ht]
    \centering
    \begin{subcaptionblock}{\textwidth}
        \begin{minipage}{0.44\textwidth} 
        \hspace{1cm}
        \includegraphics[width=0.75\textwidth, height=0.43\textwidth]{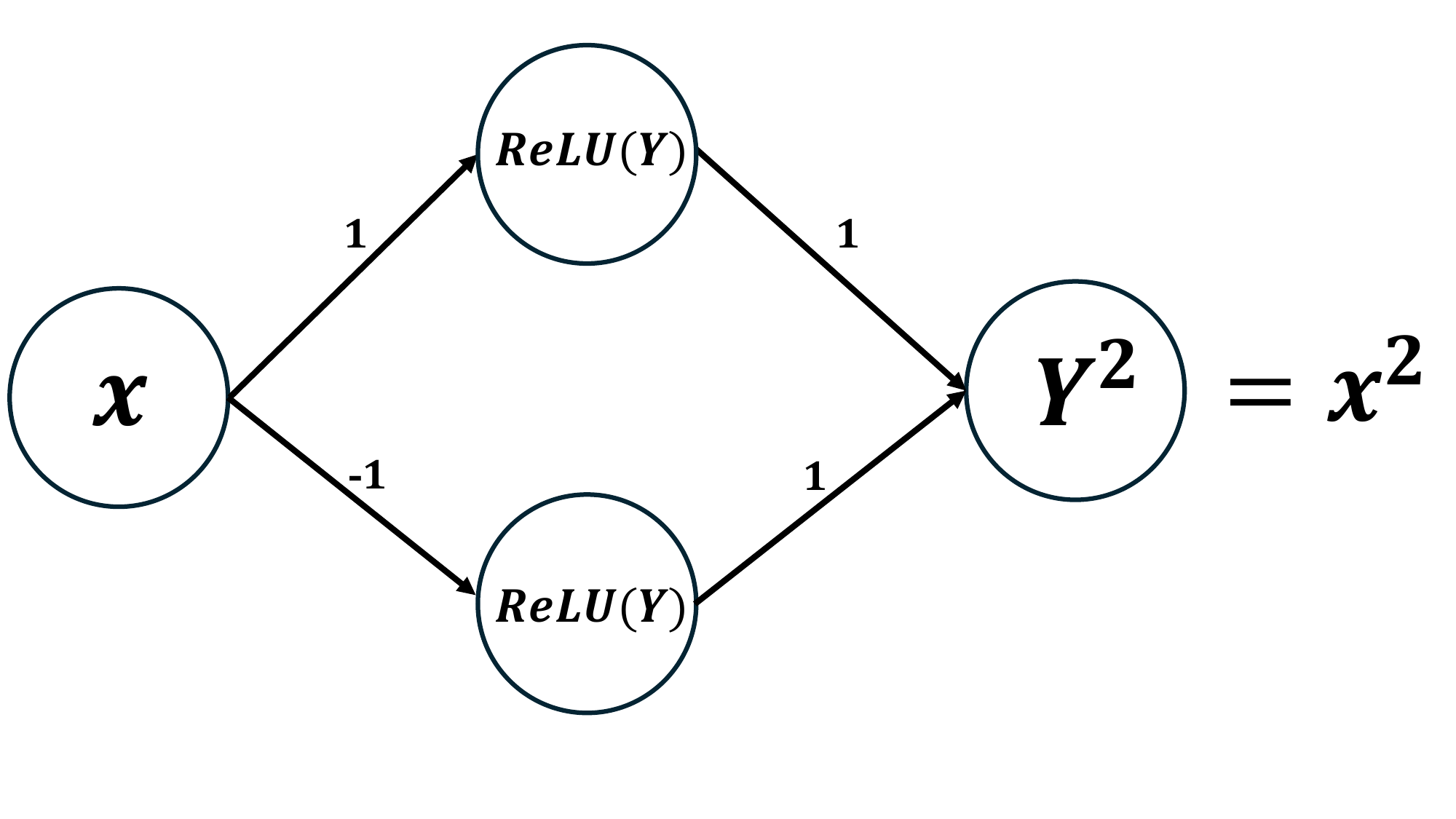}
            \caption{Subnetwork equivalent to operation of taking a square.}
            \label{fig:NN_square}
        \end{minipage}
        \hfill
        \begin{minipage}{0.5\textwidth} 
        \hspace{1cm}
            \includegraphics[width=0.9\textwidth, height=0.6\textwidth]{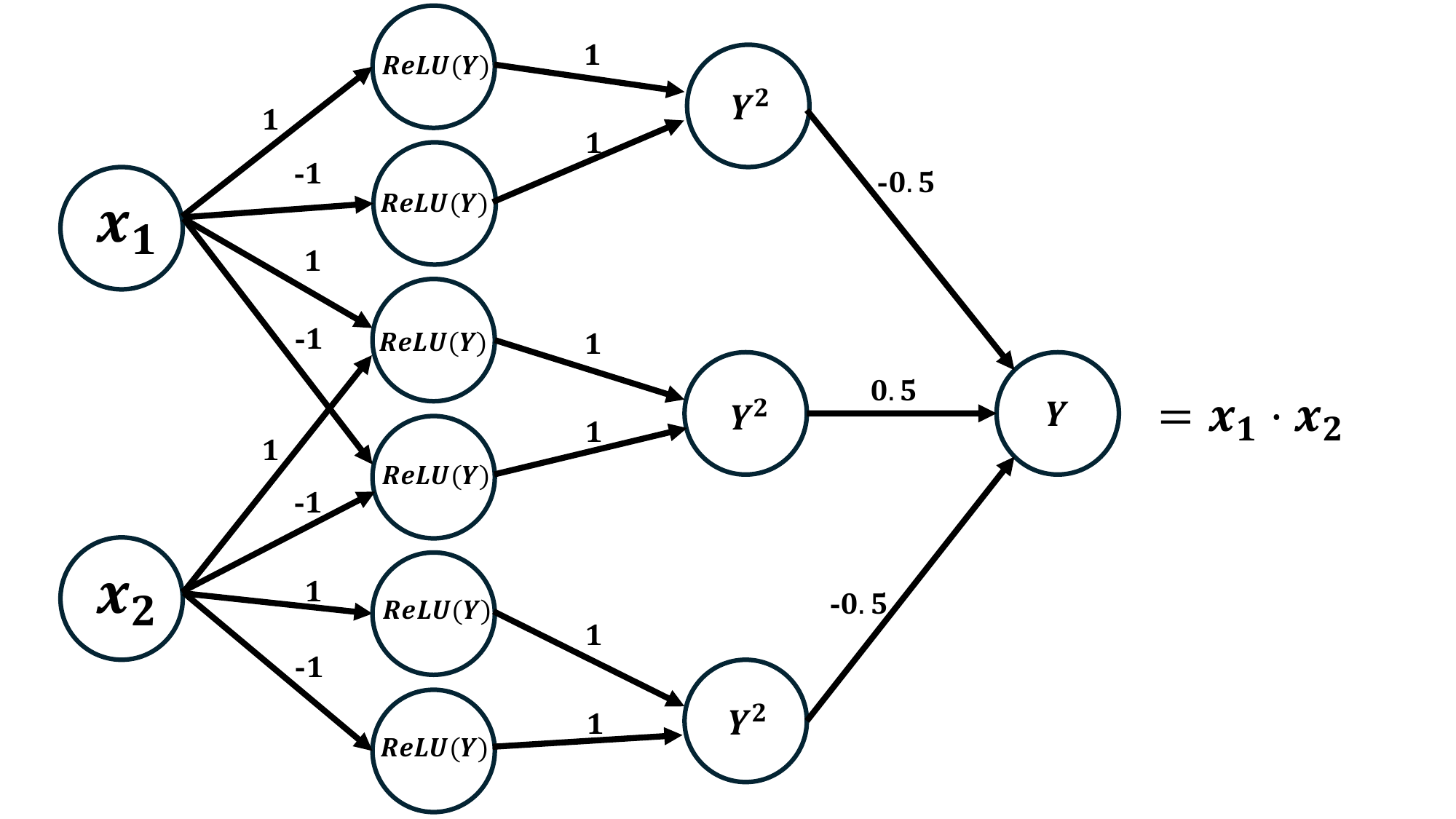}
            \caption{Subnetwork equivalent to the product operation.}
            \label{fig:NN_product}
        \end{minipage}
    \end{subcaptionblock}
    \caption{Subnetworks for square and multiplication operations.}
\end{figure}

\subsection{Product of two values}

To find a product of two values $x_{1}$ and $x_{2}$ one can use the formula $x_1 \cdot x_2 = 0.5 \cdot ((x_{1} + x_{2})^{2} - x_{1}^{2} - x_{2}^{2})$.
 This leads us to the feedforward network structure in Figure~\ref{fig:NN_product}.

\subsection{Maximum of two values}

The maximum operation can be expressed via a subnetwork with ReLU activation functions only, as follows.
Observing that $\max\{x_{1}, x_{2}\} = 0.5 \cdot (x_{1} +  x_{2} + |x_{1} - x_{2}|)$ results in the corresponding network structure in Figure~\ref{fig:NN_max}.

\begin{figure}[!ht]
    \centering
    \includegraphics[width=0.5\textwidth, height=0.3\textwidth]{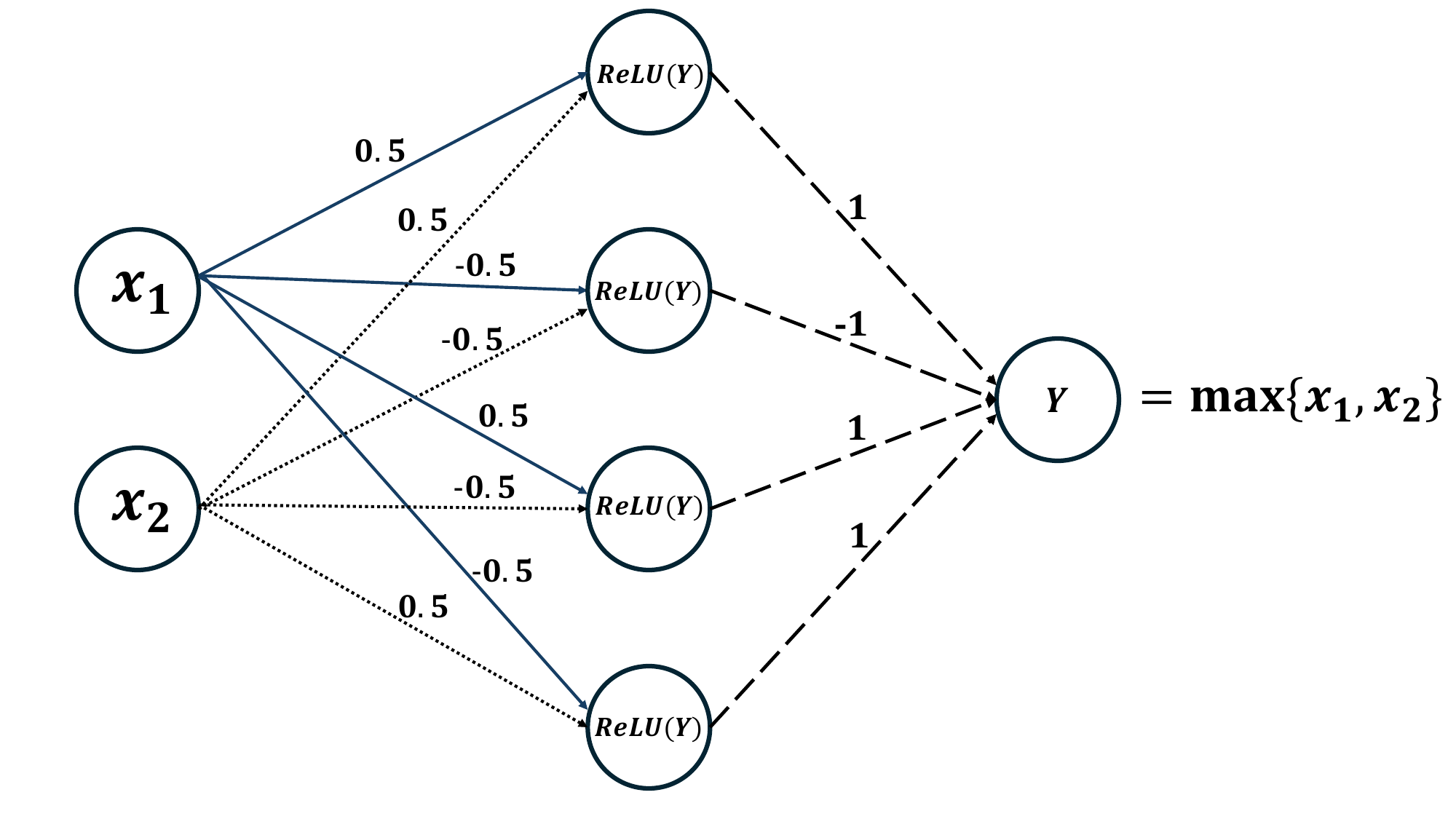}
    \caption{\label{fig:NN_max}Subnetwork equivalent to a maximum of two values.}
\end{figure}

\subsection{Softmax}

The function $\mathrm{softmax}$ transforms a vector of real numbers to a probability distribution. That is, if $\x = (x_{1}, \ldots, x_{n}) \in  \real^n$, then there is a multivariate function $SfMax: \real^n \xrightarrow[]{} \real^n$, so that
\[ SfMax_{i} = \mathrm{softmax}(x_{i}) = \frac{e^{x_{i}}}{\sum\limits_{j = i}^{n}e^{x_{j}}}\]
Then, $log(SfMax_{i}) = x_{i} - \log \sum\limits_{j = i}^{n}e^{x_{j}}$, which is a composition of monotonic functions. This leads to the feedforward network structure in Figure~\ref{fig:NN_softmax}.

\begin{figure}[!ht]
\centering
\includegraphics[width=0.5\textwidth, height=0.3\textwidth]{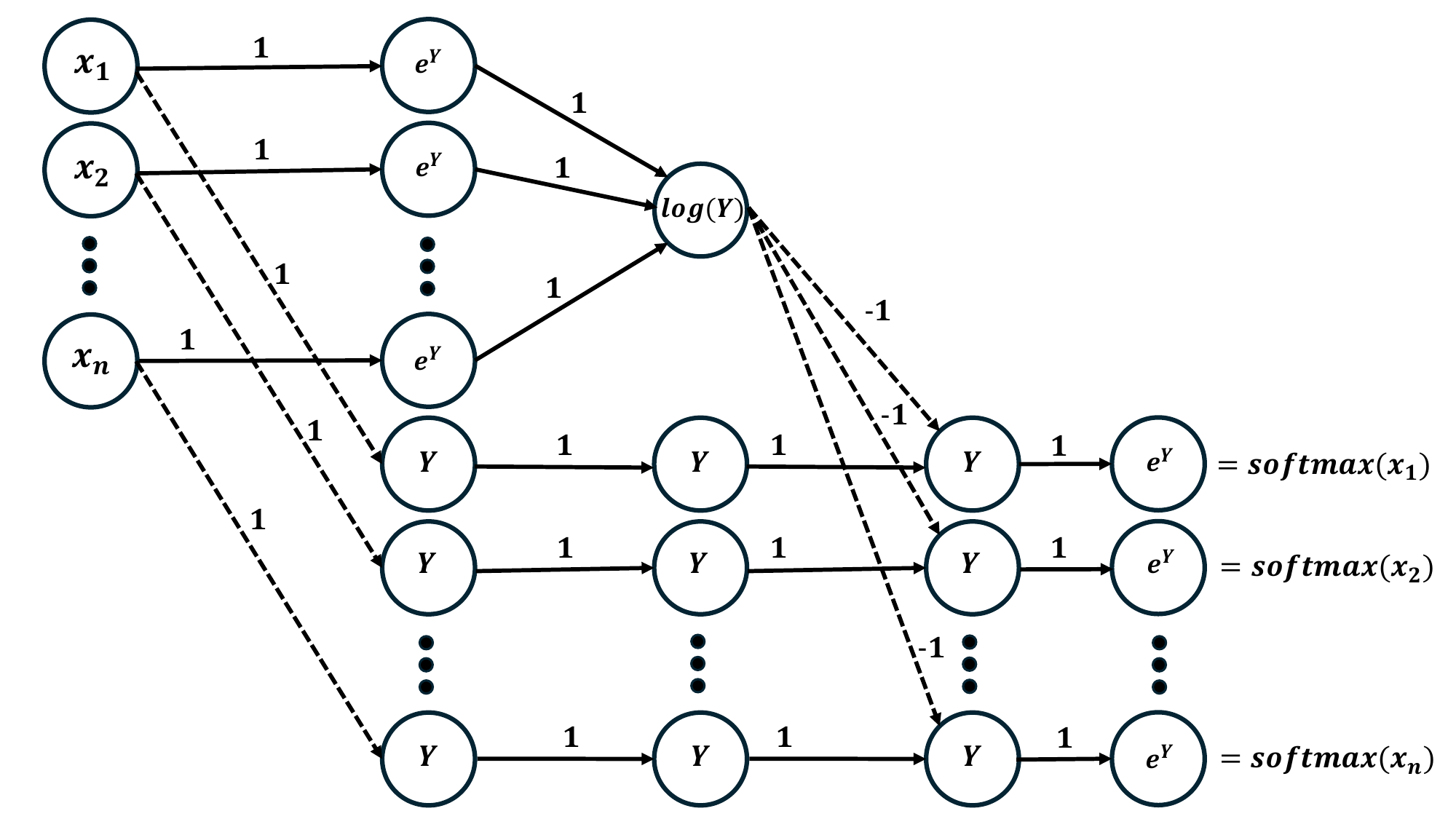}
\caption{Subnetwork equivalent to one softmax node.}
\label{fig:NN_softmax}
\end{figure}

\section{Description of Iris Experiments}\label{app:iris}

We trained a fully connected $\left[3 \times 12\right]$ ReLU NN with a final $\left[1 \times 3\right]$ linear layer, as well as a fully connected $\left[3 \times 12\right]$ $\tanh$ NN with the same final $\left[1 \times 3\right]$ linear layer, on the Iris dataset. The networks classify objects into three classes: \textit{Setosa}, \textit{Versicolor}, and \textit{Virginica}, using two input features: \textit{Sepal Length} and \textit{Sepal Width}. The allocation of the data for these two variables in the three classes is shown in Figure~\ref{fig:Iris_dataset}. The input data were rescaled to be within the interval $\left[0, 1\right]$.  

\paragraph{Experiment 1: ReLU-Based Network with Random Inputs.}
The ReLU network was pre-trained. We next introduced randomness to the input variables by modeling them as Beta-distributed with parameters $(2,2)$ and $(3,2)$, respectively. The pdfs of these input distributions are shown in Figure~\ref{fig:Beta_distr}. The first is symmetric about 0.5 and the second is left-skewed.

In our first experiment, Example~\ref{exmp:Iris_cdf}, we computed the exact cdf of the first output neuron (out of three) in the ReLU network before applying the softmax function. Due to the presence of a final linear layer, the output may contain negative values. To validate our computation, we compared it against a conditional \textit{ground truth} obtained via extensive Monte Carlo simulations, where the empirical cdf was estimated using $10^5$ samples. As shown in Figure~\ref{fig:Iris_cdf}, both cdf plots coincide. The cdf values were computed at 100 grid points across the estimated support of the output, determined via the IBP procedure.  

For further comparison, Figure~\ref{fig:Iris_PDF} presents an approximation of the output pdf based on the previously computed cdf values. This is compared to a histogram constructed from Monte Carlo samples. Additionally, we include a Gaussian kernel density estimation (KDE) plot obtained from the sampled data using a smoothing parameter of $h = 0.005$. The results indicate that our pdf approximation better represents the underlying distribution compared to KDE and tracks the histogram more closely.  

\paragraph{Experiment 2: Bounding a Tanh-Based Network with ReLU Approximations.}
In our second experiment, we used a pre-trained NN with a similar structure but replaced the ReLU activation functions in the first three layers with \textit{tanh}. To approximate this network, we constructed two bounding fully connected NNs—one upper and one lower—using only ReLU activations.  

We conducted computations in two regimes: one using 5 segments and another using 10 segments, into which both convex and concave regions of the \textit{tanh} activation function at each neuron in the first three layers were divided. Over each segment, we performed piecewise linear approximations according to the procedure described in Section~\ref{sec:RelU_approx_algorithm} and combined these approximations into three-layer ReLU networks with an additional final linear layer for both upper and lower bounds.  The results of these approximations are shown in Figure~\ref{fig:Iris_tanh}. In the 10-segment regime, both the upper and lower approximations closely align with the original NN's output.

\begin{figure}[h]
\centering
\begin{subcaptionblock}{\textwidth}
\begin{minipage}{0.48\textwidth} 
\includegraphics[width=\textwidth, height=0.35\textheight]{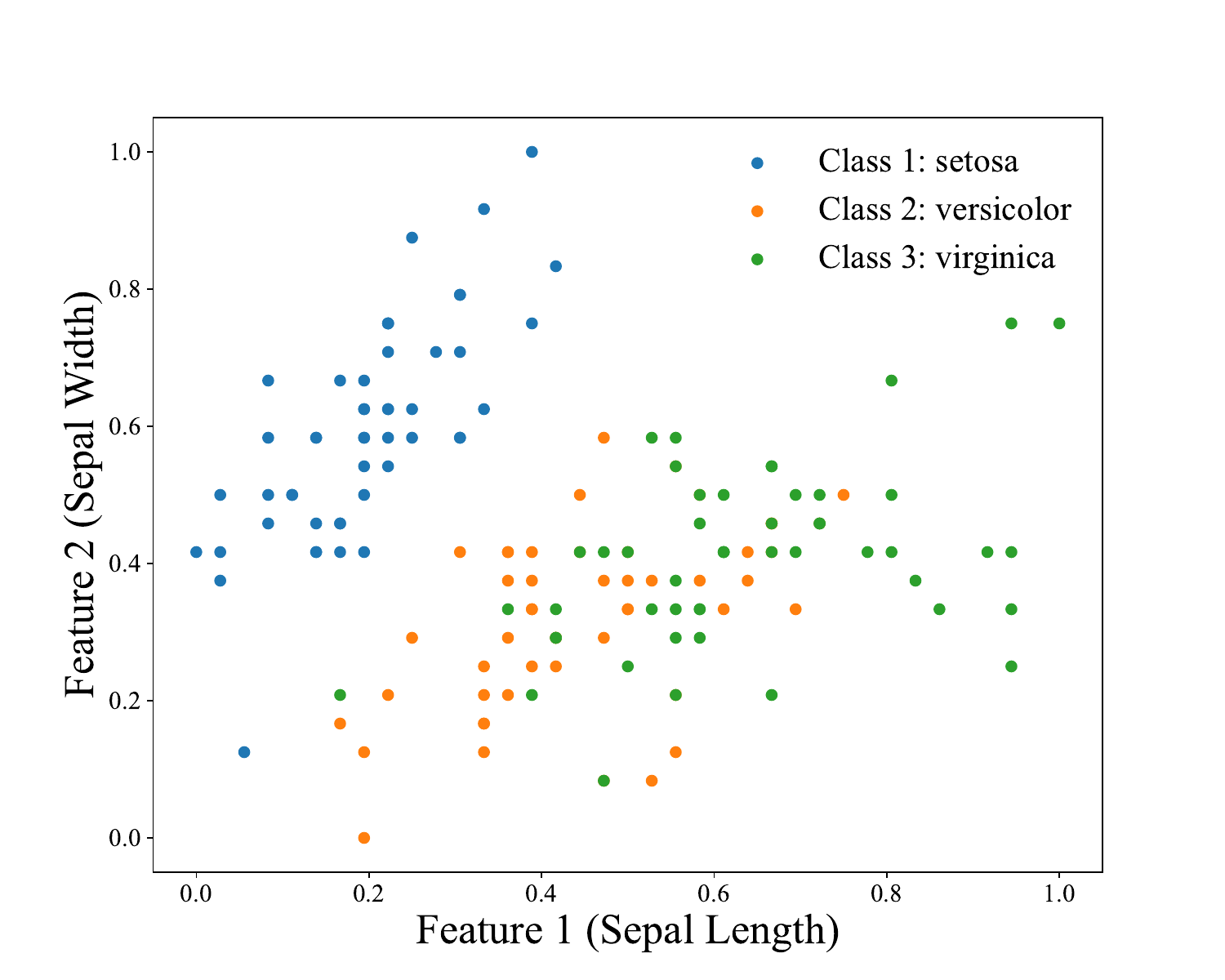}
\caption{Plot of \textit{Sepal Width} vs \textit{Sepal Length} with class indication.}
\label{fig:Iris_dataset}
\end{minipage}
\begin{minipage}{0.48\textwidth} 
\includegraphics[width=\textwidth]{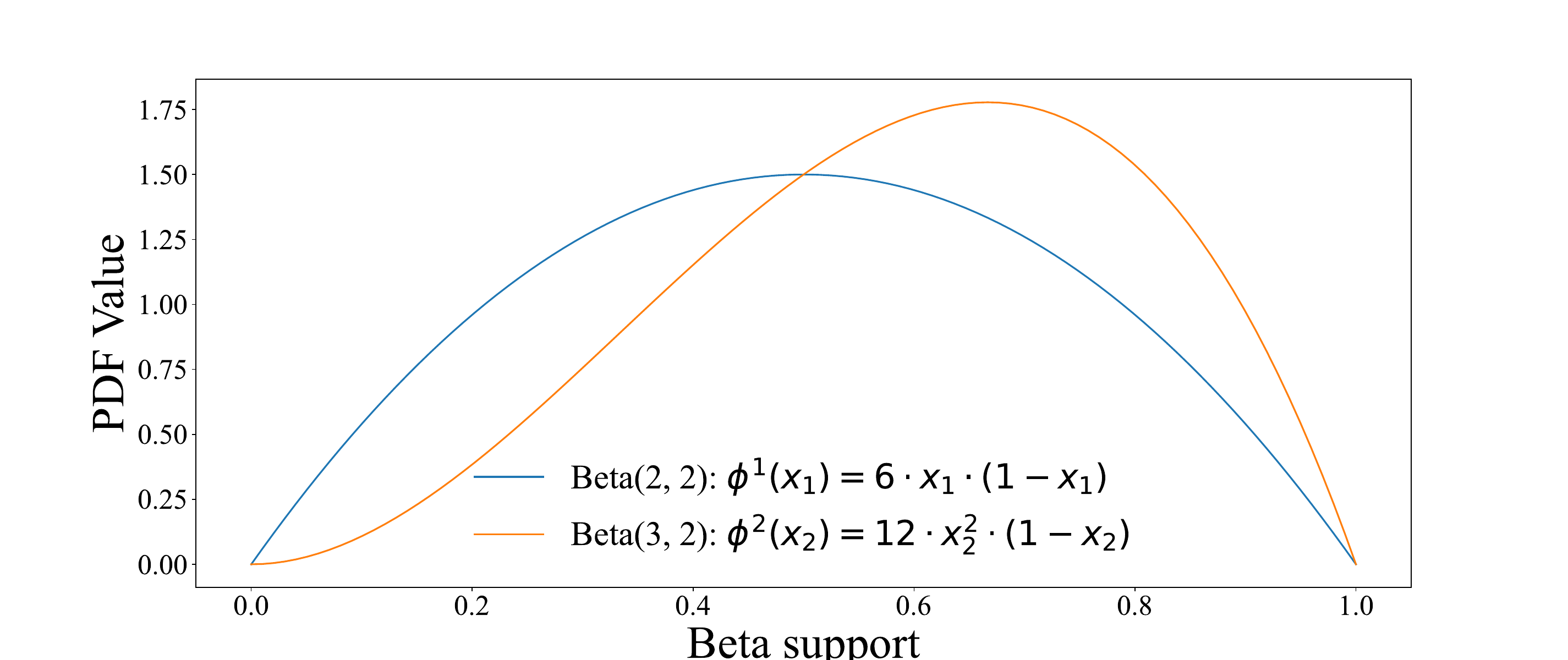}
\caption{Plots of the $Beta(2,2)$ and $Beta(3,2)$ pdfs, resp.}
\label{fig:Beta_distr}
\vspace{3mm} 
\includegraphics[width=\textwidth]{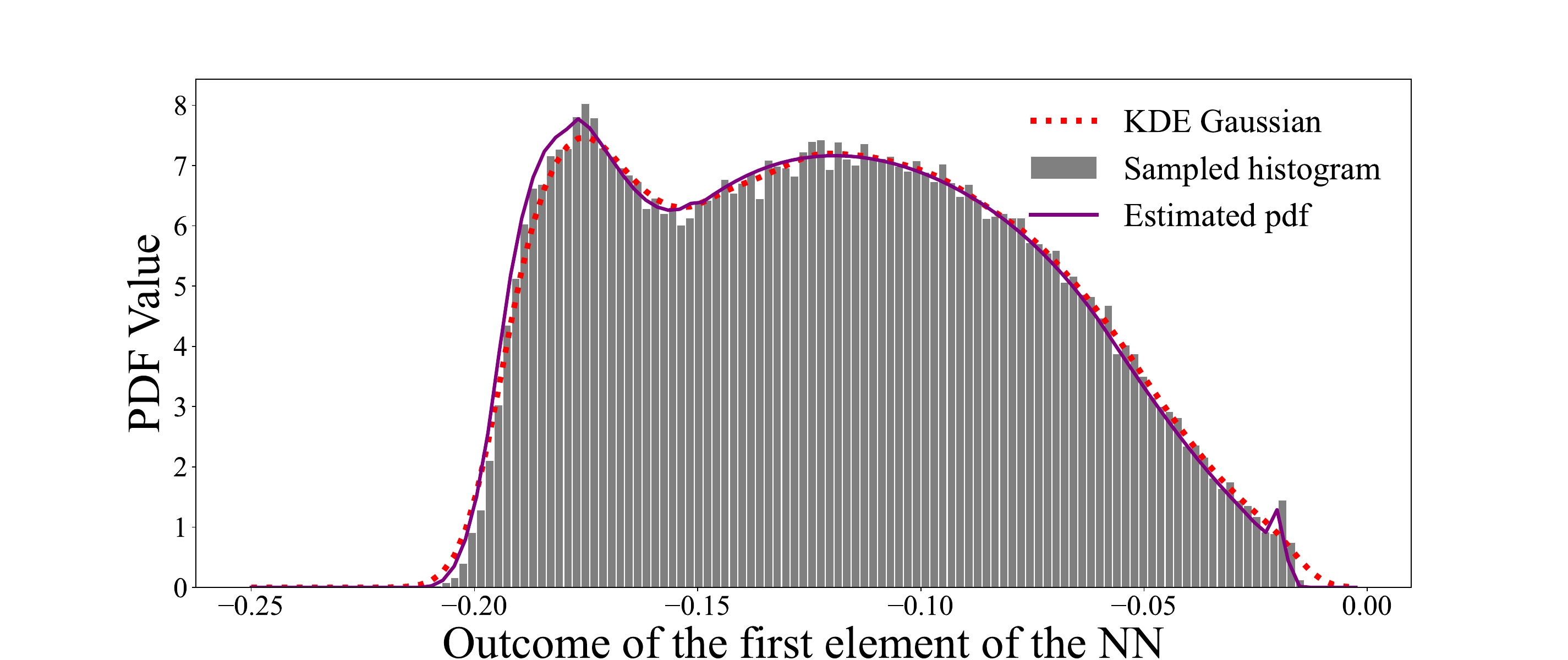}
\caption{Estimated output pdf compared with KDE with smoothing parameter $h = 0.005$ and histogram of MC simulations.}
\label{fig:Iris_PDF}
\end{minipage}
\end{subcaptionblock}
\caption{Iris Dataset}
\label{}
\end{figure}


\end{document}